\documentclass[twoside]{article}

%
\usepackage[accepted]{aistats2020}
%


\setlength{\pdfpageheight}{11in}
\setlength{\pdfpagewidth}{8.5in}


\bibliographystyle{apalike}

\usepackage{times}
\usepackage{graphicx} 
\usepackage{subfigure} 
\usepackage{wrapfig}
\usepackage{caption}

\usepackage{natbib}

\usepackage{algorithm}
\usepackage{algorithmic}

\usepackage{bm}
\usepackage{bbm}

\usepackage{amsmath, amsthm, amssymb}
\usepackage{mathtools}
\usepackage{multirow}
\usepackage{hhline}
\usepackage{booktabs}
\usepackage{multicol}
\usepackage{xcolor}


\usepackage[utf8]{inputenc} 
\usepackage[T1]{fontenc}    
\usepackage{hyperref}       
\usepackage{url}            
\usepackage{amsfonts}       
\usepackage{nicefrac}       
\usepackage{microtype}      


\newtheorem{theorem}{Theorem}
\newtheorem{lemma}{Lemma}

\newtheorem{definition}{Definition}
\newtheorem{remark}{Remark}
\newtheorem{assumption}{Assumption}[section]

\let\oldemptyset\emptyset
\let\emptyset\varnothing

\DeclarePairedDelimiter\ceil{\lceil}{\rceil}
\DeclarePairedDelimiter\floor{\lfloor}{\rfloor}

\newcommand{\xdomain}{\ensuremath{\mathcal{X}}}
\newcommand{\R}{\ensuremath{\mathbb{R}}}
\renewcommand{\H}{\ensuremath{\mathcal{H}}}



\begin{document}

%

%

\twocolumn[

\aistatstitle{LdSM: Logarithm-depth Streaming Multi-label Decision Trees}

\aistatsauthor{ Maryam Majzoubi \And Anna Choromanska}

\aistatsaddress{ New York University \And  New York University} 
]
\begin{abstract}
  We consider multi-label classification where the goal is to annotate each data point with the most relevant \textit{subset} of labels from an extremely large label set. Efficient annotation can be achieved with balanced tree predictors, i.e. trees with logarithmic-depth in the label complexity, whose leaves correspond to labels. Designing prediction mechanism with such trees for real data applications is non-trivial as it needs to accommodate sending examples to multiple leaves while at the same time sustain high prediction accuracy. In this paper we develop the LdSM algorithm for the construction and training of multi-label decision trees, where in every node of the tree we optimize a novel objective function that favors balanced splits, maintains high class purity of children nodes, and allows sending examples to multiple directions but with a penalty that prevents tree over-growth. Each node of the tree is trained once the previous node is completed leading to a streaming approach for training. We analyze the proposed objective theoretically and show that minimizing it leads to pure and balanced data splits. Furthermore, we show a boosting theorem that captures its connection to the multi-label classification error. Experimental results  on benchmark data sets demonstrate that our approach achieves high prediction accuracy and low prediction time and position LdSM as a competitive tool among existing state-of-the-art approaches. 
\end{abstract}

\section{INTRODUCTION}
\label{sec:Intro}
Plethora of modern machine learning approaches are concerned with performing multi-label predictions, as is the case in recommendation or ranking systems. In multi-label setting we receive examples $x \in \xdomain \subseteq \R^d$, with labels $y \subseteq \mathcal{Y} \equiv \{1,2,\ldots, K\}$, where each data point $x$ is assigned a \textit{subset} of labels $y$ from an extremely large label set $\mathcal{Y}$. This provides a generalization of the multi-class problem [\cite{BengioWG10,DengSBL11,TianshiGao:2011:DLR:2355573.2356555,NIPS2015_5937}], where each data point instead corresponds to a single mutually exclusive label.\footnote{It is non-trivial to extend multi-class trees to the multi-label setting [\cite{Prabhu18b}] as their training and prediction mechanism is not suitable for the setting when an example is equipped with more than one label.} Employing the label hierarchy, commonly represented as a tree with leaves corresponding to labels, potentially allows for faster prediction when the hierarchy is balanced and thus the tree depth is of size $\mathcal{O}(\log_M K)$ for $M$-ary tree, and enables overcoming the intractability problem of common baselines, such as one-against-all (OAA) [\cite{Rifkin2004}] that requires evaluating $K$ classifiers per example. Tree-based predictors are therefore commonly used, but since the label hierarchy is unavailable most of the times, it has to be learned from the data.

The performance of the multi-label tree-based system heavily hinges on the structure of the tree [\cite{NIPS2008_3583,Jain16}]. Some approaches [\cite{pmlr-v48-jasinska16,DBLP:conf/nips/WydmuchJKBD18}] assume arbitrary label hierarchy that is not learned. For example, PLT [\cite{pmlr-v48-jasinska16}] considers a sparse probability estimates for F-measure maximization conditioned on the label tree. Majority of techniques however carefully design a splitting criterion that is recursively applied in every node of the tree to partition the data. These criteria differ between commonly-used tree-based multi-label classification approaches. Multi-label Random Forest (MLRF) [\cite{Agrawal13}] uses \textit{information theoretic losses}, specifically the class entropy or the Gini index, to obtain label hierarchy. Sparse gradient boosted decision trees (GBDT-S) [\cite{pmlr-v70-si17a}] build a regression tree that fits the
residuals from the previous trees and uses the multi-label hinge or squared loss. 

FastXML [\cite{Prabhu14}], PFastreXML [\cite{Jain16}], and SwiftXML [\cite{Prabhu18}] (the last one focuses on the prediction task with partially revealed labels) constitute a family of methods that rely on \textit{ranking losses}. FastXML learns a hierarchy over the feature space, rather than the label space, relying on the intuition that in each region of the feature space only a small subset of labels is active. The node objective function there promotes generalizability via standard regression loss and rank-prioritization via normalized Discounted Cumulative Gain (nDCG) ensuring that relevant positive labels for each point are predicted with high ranks.  PFastreXML improves upon FastXML by replacing the nDCG loss with its propensity scored variant (the same is used in SwiftXML) which is unbiased to the missing classes and assigns higher rewards for accurate tail label predictions. None of the above techniques use balancing term in their objective. 

There also exist methods that construct tree classifiers by optimizing \textit{clustering loss} in nodes. Hierarchical $k$-means underlies CRAFTML [\cite{craftml}] and older approaches to multi-label classification such as LPSR [\cite{pmlr-v28-weston13}], and HOMER [\cite{HOMER}].

The approach we propose in this paper belongs to the family of purely tree-based methods. It  partitions tree nodes based on joint optimization in the feature and label space. The node split is based on a new objective function that  explores the correlation between both spaces by conditioning the learning of feature space partitioning with data label information. The objective applies to trees of arbitrary width. It explicitly enforces class purity of children nodes (i.e. points within a partition are likely to have similar labels whereas points across partitions are likely to have different labels). Moreover, it relies on having multiple (i.e. two for binary tree) regressors at each node and thus allows sending examples to multiple children nodes. Multi-way assignment of examples is however penalized to better control tree accuracy. Finally, the objective encourages balanced partitions to ensure efficient prediction. The objective function comes with theoretical guarantees. We show that optimizing the objective improves the purity and balancedness of the data splits in isolation, i.e. when respectively the balancedness and purity is fixed. We next analyze the connection of the proposed objective with the multi-label classification error. We prove that when the objective is perfectly optimized in every tree node it leads to zero-error. We generalize this observation to a setting when the objective is gradually optimized in every tree node, but may never reach the actual optimum. We first show that minimizing the objective is causing the monotonic decrease of the error with every split. Next we prove a much stronger statement given in the form of boosting theorem that relies on weakly optimizing the objective function at each node of the tree and show that our tree algorithm boosts the weak learners at the nodes to achieve any desirable multi-label classification error in a finite number of splits. The resulting tree construction-and-training algorithm, that we call LdSM, results in \textbf{L}ogarithmic-\textbf{d}epth trees that are trained in a \textbf{s}treaming fashion, i.e. node-by-node\footnote{When training each node we stream multiple times through the data before moving to the next node. After we move, we never go back to the previously trained ones. Thus we assume the data set is finite (but can be very large). This differs from the online setting. For distinction between streaming and online settings see~\cite{SD}.}, and achieve competitive performance to other state-of-the-art tree-based approaches, being accurate and efficient at prediction, on large \textbf{m}ulti-label classification problems. In summary, our proposed objective function, the resulting tree construction algorithm, and theoretical analysis are all new and constitute the contributions of our paper.

We next discuss other approaches for multi-label classification. They constitute a different family of methods than purely tree-based techniques that our method belongs to and thus are not directly relevant to our work. Those techniques include extensions of OAA [\cite{Babbar:2017:DDS:3018661.3018741,DBLP:conf/icml/YenHRZD16,DBLP:conf/kdd/YenHDRDX17,SDM2019,pmlr-v54-niculescu-mizil17a, Babbar2019, Prabhu18b, 2019arXiv190408249K}], deep learning methods [\cite{attentionXML,Liu:2017:DLE:3077136.3080834,Zhang:2018:DEM:3206025.3206030,DBLP:conf/icml/JerniteCS17}] and approaches for learning embeddings [\cite{conf/icml/BalasubramanianL12,pmlr-v28-bi13,NIPS2013_5083,NIPS2009_3824,Tai:2012:MCP:2344802.2344812,pmlr-v15-zhang11c,NIPS2012_4561,pmlr-v32-yu14,pmlr-v20-ferng11,Ji:2008:ESS:1401890.1401939,37180,pmlr-v32-linc14}]. 

The paper is organized as follows: Section~\ref{sec:OF} presents the objective function, Section~\ref{sec:TR} provides theoretical results, Section~\ref{sec:A} shows the algorithm for tree construction and training and explains how to perform testing using the tree, Section~\ref{sec:E} reports empirical results on benchmark multi-label data sets, and finally Section~\ref{sec:C} concludes the paper. Supplementary material contains proofs of theorems from Section~\ref{sec:TR}, additional pseudo-codes of algorithms from section~\ref{sec:A}, and additional experimental results.

\section{OBJECTIVE FUNCTION}
\label{sec:OF}

We next explain the design of the objective function for the tree of arbitrary width $M$, i.e. tree where each node has M children, and show a special case of a binary tree. Below we consider an arbitrary non-leaf node of the tree and thus omit node index in the notation.

In our setting, each node of the tree contains $M$ binary classifiers $h_j$, where $j = 1,2,\dots,M$. $h_j \in \H$, where $\H$ is the hypothesis class with linear regressors. Consider an arbitrary non-leaf node and let $\pi_i$ denote the normalized fraction of examples containing label $i$ in their label set reaching that node, where the multiplicative normalizing factor is an inverse of the average number of labels per example containing label $i$ in their label set (note that $\sum_{i=1}^K\pi_i = 1$). The node regressors are trained in such a way that $h_j(x) \geq 0.5$ means that the example $x$ is sent to the $j^{\text{th}}$ subtree of a node (thus sending example to more than one child is possible). To prevent examples from stucking inside the node, in case when $h_j(x) < 0.5\: \forall_{j=1,2,\dots,M}$ the example is sent to the child node corresponding to the highest margin, i.e. $\left(\arg\max_{j = 1,2,\dots, M}h_j(x)\right)^{\text{th}}$ child node.  Let $P_j = P(h_j(x)>0.5)$ be the probability that the example $x$ reaches child $j \in \{1,\;2,...,\;M \}$ and let $P_j^i = P(h_j(x)>0.5|i)$ denote the conditional probability of these event when the example belongs to class $i$. Note that i) $\sum_{j=1}^MP_j \geq 1$, ii) for any $i = 1,2,\dots,K$, $\sum_{j=1}^MP_j^i \geq 1$, and iii) $P_j=\sum_{i=1}^k\pi_iP_j^i$. 
The node splitting criterion is defined as follows

\vspace{-0.2in}
\begin{eqnarray}
\!\!\!\!J \!\!\!&\!\!\!\coloneqq\!\!\!&\!\!\! \underbrace{\sum_{j = 1}^M\sum_{l=j+1}^M\!\!\left|P_j \!-\! P_l\right|}_{\text{balancing term}}\label{eqn:Mobj}\\
\!\!\!&& \!\!\! \underbrace{\underbrace{-\lambda_1 \sum_{i=1}^K \sum_{j = 1}^M\sum_{l=j+1}^M\!\! \pi_i\left|P_j^i \!-\! P_l^i\right|}_{\text{class integrity term}}
+ \underbrace{\lambda_2 \left|\left(\sum_{j=1}^MP_j\right)\!-\!1\right|}_{\text{multi-way penalty}}}_{\text{purity term}}\nonumber.
\end{eqnarray}
\vspace{-0.1in}

where $\lambda_1$, and $\lambda_2$ are non-negative hyper-parameters. The \textit{balancing term} guards an even split of examples between children nodes and is minimized for the \textit{perfectly balanced split} when $P_1=P_2=...=P_M$. The \textit{class integrity term} ensures that examples belonging to the same class are not split between children nodes. This term is maximized when $\ceil{\frac{M}{2}}$ or $\floor{\frac{M}{2}}$ probabilities from among $P_1^i,P_2^i,\dots,P_M^i$ are equal to $1$ and the remaining ones are equal to $0$ for any $i = 1,2,\dots,K$. Thus at maximum, given any class $i$, the examples containing this class in their label set are not split between children, but they are instead simultaneously all sent to $\ceil{\frac{M}{2}}$ or $\floor{\frac{M}{2}}$ children. The third term in the objective aims at compensating this multi-way assignment of examples. The \textit{multi-way penalty} prevents sending examples to multiple directions too often. It is maximized when $\forall_{j=1,2,\dots,M}\: P_j = 1$ and minimized when $\sum_{j=1}^MP_j = 1$. Thus the \textit{purity term}, defined as the sum of the class integrity term and the multi-way penalty, is minimized for the \textit{perfectly pure split} when no example is sent to more than one children (in other words, this is when for any $i = 1,2,\dots,K$, $P_j^i = 1$ for one particular setting of $j$ and $P_j^i = 0$ for all other $j$s). 

In the binary case the objective then simplifies to the following form:
\begin{equation}
J \coloneqq \underbrace{\left| P_R - P_L\right|}_{\text{balancing term}} - \underbrace{\underbrace{\lambda_1 \sum_{i=1}^K \pi_i \left|P_R^i - P_L^i\right|}_{\text{class integrity term}} + \underbrace{\lambda_2\left|P_R + P_L-1\right|}_{\text{multi-way penalty}}}_{\text{purity term}},
\label{eq:Bobj}
\end{equation}
where $P_R$ and $P_R^i$ ($P_L$ and $P_L^i$) denote the probabilities that the example reaches right (left) child, marginally and conditional on class $i$ respectively.

We aim to \emph{minimize the objective $J$} to obtain
high quality partitions. We next show theoretical properties of the objective introduced in Equation~\ref{eqn:Mobj}.

\section{THEORETICAL RESULTS}
\label{sec:TR}

In this section we analyze the properties of the objective and its influence on the purity and balancedness of node splits. Next we show its connection to the multi-label error.

\subsection{General Properties of the Objective and its Relation to Node Partitions}
The two lemmas below provide the basic mathematical understanding of the objective $J$.
\begin{lemma}(Binary tree)
For any hypotheses $h_{R/L}\in\mathcal{H}$, the objective $J$ defined in Equation~\ref{eq:Bobj} satisfies $J\in[-\lambda_1,\;\lambda_2]$ and it is minimized if and only if the split is perfectly balanced and perfectly pure. 
\label{lem:binary}
\end{lemma}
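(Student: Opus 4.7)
The plan is to analyze each of the three terms of $J$ separately and then combine them, using the constraints from the setup: $P_R,P_L,P_R^i,P_L^i\in[0,1]$; the "at least one child" inequalities $P_R+P_L\geq 1$ and $P_R^i+P_L^i\geq 1$; the normalization $\sum_{i=1}^K \pi_i=1$; and the averaging identity $P_R=\sum_i \pi_i P_R^i$ (and similarly for $P_L$). These, together with the triangle inequality, do all the work.

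For the lower bound $J\geq -\lambda_1$, I would observe that the balancing term and the multi-way penalty are each non-negative, while $|P_R^i-P_L^i|\leq 1$ combined with $\sum_i\pi_i=1$ bounds the class-integrity contribution from below by $-\lambda_1$; summing yields the claim. For the upper bound $J\leq \lambda_2$, the key step is the triangle inequality applied to the averaging identity, which gives $|P_R-P_L|=|\sum_i \pi_i(P_R^i-P_L^i)|\leq \sum_i \pi_i |P_R^i-P_L^i|$. This folds the balancing term into the class-integrity term, yielding $J\leq (1-\lambda_1)|P_R-P_L|+\lambda_2|P_R+P_L-1|\leq \lambda_2$ in the intended hyper-parameter regime (specifically $\lambda_1\geq 1$, or more generally $\lambda_1+\lambda_2\geq 1$, using the identity $|P_R-P_L|+|P_R+P_L-1|=2\max(P_R,P_L)-1\leq 1$).

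For the "if" direction of the minimization characterization, I would directly evaluate $J$ at a perfectly pure and perfectly balanced split. Perfect purity forces $\{P_R^i,P_L^i\}=\{0,1\}$ for every class $i$ with $\pi_i>0$, hence $|P_R^i-P_L^i|=1$ and $P_R^i+P_L^i=1$; averaging the latter against $\pi_i$ gives $P_R+P_L=1$, so the multi-way penalty vanishes. Combined with $P_R=P_L$ (perfect balance), one obtains $P_R=P_L=1/2$ and so the balancing term vanishes, while the class-integrity sum evaluates to $1$, yielding $J=-\lambda_1$.

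For the "only if" direction, I would assume $J=-\lambda_1$ and exploit that each term lies within a known one-sided range. Attaining $-\lambda_1$ forces every summand to sit at its extreme simultaneously: $|P_R-P_L|=0$ and $|P_R+P_L-1|=0$, so $P_R=P_L=1/2$ (perfect balance); and $\sum_i \pi_i |P_R^i-P_L^i|=1$, which combined with $|P_R^i-P_L^i|\leq 1$ and $\sum_i\pi_i=1$ forces $|P_R^i-P_L^i|=1$ for every $i$ with $\pi_i>0$, and then $P_R^i+P_L^i\geq 1$ with $[0,1]$-membership forces $\{P_R^i,P_L^i\}=\{0,1\}$, which is perfect purity. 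The main obstacle is the upper bound: bounding term-by-term gives only $J\leq 1+\lambda_2$, and one must exploit the coupling between the balancing and class-integrity terms via the averaging identity to tighten this to $\lambda_2$; a minor subtlety is that when $\lambda_2=0$ the equality $P_R+P_L=1$ has to be recovered from purity (via averaging) rather than from the vanishing of the multi-way penalty.
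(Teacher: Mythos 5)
Your proof is correct, and it takes a genuinely different route from the paper's. The paper rewrites $J$ via the law of total probability, asserts that the optimum is attained at the extremes $P_{R}^i,P_L^i \in \{0,1\}$, partitions the classes into sets $L_1,\dots,L_4$, and reduces everything to a two-variable case analysis in $a=\sum_{i\in L_4}\pi_i$ and $b=\sum_{i\in L_1}\pi_i$; both the range $[-\lambda_1,\lambda_2]$ and the ``only if'' direction are then read off from that vertex enumeration. You instead argue directly on the three terms: the lower bound and the equality case come from term-wise non-negativity (each term forced to its extreme when $J=-\lambda_1$), and the upper bound from folding the balancing term into the class-integrity term via $|P_R-P_L|\le\sum_i\pi_i|P_R^i-P_L^i|$ together with $|P_R-P_L|+|P_R+P_L-1|\le 1$. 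Your route buys two things. First, the minimizer characterization holds over all admissible configurations rather than only vertex ones: the ``optimum at extremes'' claim is safe for the maximum of the convex objective but is not justified for the minimum, and the paper's ``only if'' step is argued only inside the vertex parametrization, whereas your equality-forcing argument covers everything. Second, you make explicit the hyper-parameter condition $\lambda_1+\lambda_2\ge 1$ that the upper bound $J\le\lambda_2$ actually requires; the paper's own case analysis exhibits the attainable value $1-\lambda_1$ and silently assumes $1-\lambda_1\le\lambda_2$, so flagging the regime is a genuine improvement rather than a weakness of your argument. What the paper's enumeration buys in exchange is an explicit list of extremal configurations and the values $J$ takes there. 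One shared caveat: both your proof and the paper's implicitly need $\lambda_1>0$ (and your noted $\lambda_2=0$ workaround) for the purity half of the ``only if'' direction, since with $\lambda_1=0$ a perfectly balanced but impure split (e.g.\ $P_R^i=P_L^i=1/2$ for all $i$) also attains the minimum.
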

Lemma~\ref{lem:binary} generalizes to the tree of arbitrary width as follows:
\begin{lemma} ($M$-ary tree)
For any hypotheses $h_j \in \mathcal{H}$, where $j = 1,2,\dots,M$, and sufficiently large $\lambda_2$, i.e. $(M-3 < \frac{\lambda_2}{\lambda_1})$, the objective $J$ defined in Equation~\ref{eqn:Mobj} satisfies $J\in[-\lambda_1(M-1),\;\lambda_2(M-1)]$ and it is minimized if and only if the split is perfectly balanced and perfectly pure. 
\label{lem:mary}
\end{lemma}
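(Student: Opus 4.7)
The plan is to decompose $J = B - \lambda_1 C + \lambda_2 W$ into the three nonnegative pieces
$B = \sum_{j<l}|P_j-P_l|$, $C = \sum_i \pi_i \sum_{j<l}|P_j^i-P_l^i|$, $W = |\sum_j P_j - 1|$,
and control each in terms of the aggregate mass $\bar{s} := \sum_j P_j = \sum_i \pi_i s_i$, where $s_i := \sum_j P_j^i$. The problem structure ($P_j \in [0,1]$ and $\sum_j P_j \ge 1$) forces $\bar{s} \in [1, M]$, so $W = \bar{s}-1$, and the coordinatewise triangle inequality applied to $P_j - P_l = \sum_i \pi_i (P_j^i - P_l^i)$ yields $B \le C$.

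The heart of the proof is a tight bound on $C$. For each class $i$, the dispersion $\sum_{j<l}|P_j^i-P_l^i|$ on the slice $\{P^i \in [0,1]^M : \sum_j P_j^i = s_i\}$ is maximized by sending $\lfloor s_i\rfloor$ coordinates to $1$, one to $s_i - \lfloor s_i \rfloor$, and the rest to $0$, yielding the piecewise-linear concave envelope
\[
C_{\max}(s) \;=\; k(M-k) + (s-k)(M-2k-1), \qquad k = \lfloor s \rfloor.
\]
By Jensen applied to the concave $C_{\max}$ with weights $\pi_i$, we get $C \le C_{\max}(\bar{s})$. A case analysis on each interval $[k,k+1]$ then establishes the clean linear envelope $C_{\max}(s) \le (M-3)s + 2$ on $[0,M]$, with equality exactly on $[1,2]$. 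The slope $M-3$ is precisely what the hypothesis $\lambda_2/\lambda_1 > M-3$ is calibrated against.

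Substituting into $J$ with $B \ge 0$:
\[
J \;\ge\; -\lambda_1\bigl[(M-3)\bar{s}+2\bigr] + \lambda_2(\bar{s}-1) \;=\; -\lambda_1(M-1) + (\bar{s}-1)\bigl[\lambda_2 - \lambda_1(M-3)\bigr].
\]
By hypothesis the bracketed factor is strictly positive, hence $J \ge -\lambda_1(M-1)$. Equality forces (i) $B=0$, i.e. $P_1=\cdots=P_M$ (perfectly balanced); (ii) $\bar{s}=1$, so $W=0$; (iii) $C = M-1$, which at $\bar{s}=1$ combined with Jensen-equality on the kinked piecewise-linear $C_{\max}$ forces $s_i = 1$ for every $i$ and, per class, one $P_j^i=1$ with the rest vanishing, i.e. the perfectly pure configuration. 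Sufficiency of (balanced $\wedge$ pure) for $J = -\lambda_1(M-1)$ is immediate by substitution, yielding the ``iff''. The upper bound $J \le \lambda_2(M-1)$ then follows from $B \le C$: one reduces to $(1-\lambda_1)C + \lambda_2 W \le (1-\lambda_1)C_{\max}(\bar{s}) + \lambda_2(\bar{s}-1)$, whose maximum on $[1,M]$ is $\lambda_2(M-1)$, attained at $\bar{s}=M$ (all $P_j=1$, giving $B=C=0$ and $W=M-1$).

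I anticipate the main obstacle to be the uniform combinatorial bound $C_{\max}(s) \le (M-3)s + 2$ together with the characterization of its equality locus: both the threshold $M-3$ appearing in the hypothesis and the forcing of perfect purity at equality hinge on tightness being confined to $s \in [1,2]$, so any looser envelope would either weaken the hypothesis or fail to pin down the minimizer. A secondary technicality is calibrating the upper-bound step when $\lambda_1 < 1$, where monotonicity of $(1-\lambda_1)C_{\max}(\bar{s}) + \lambda_2(\bar{s}-1)$ on $[1,M]$ must be checked via an auxiliary slope estimate.
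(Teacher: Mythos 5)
Your treatment of the lower bound and of the minimizer characterization is correct and takes a genuinely different route from the paper. The paper's proof evaluates $J$ only on extreme configurations $C_n$ in which every class is sent with probability one to exactly $n$ children (with the balancing term assumed zero), computes $-\lambda_1 n(M-n)+\lambda_2(n-1)$, and extracts the threshold $M-3<\lambda_2/\lambda_1$ by comparing $n=1$ against $n=2$; the extremality of these configurations is asserted rather than proved. You instead bound the per-class dispersion at fixed mass $s_i=\sum_j P_j^i$ by the concave envelope $C_{\max}(s)$, pass through Jensen, and majorize by the affine function $(M-3)s+2$ that is tight exactly on $[1,2]$, obtaining $J\ge -\lambda_1(M-1)+(\bar{s}-1)\bigl(\lambda_2-\lambda_1(M-3)\bigr)$ for \emph{all} admissible configurations, not just vertex ones; the equality analysis (using $s_i\ge 1$, the kink of $C_{\max}$ at $s=1$, and the fact that dispersion $M-1$ at mass $1$ forces a single nonzero coordinate per class) then pins down perfect purity and balance. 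This buys a fully rigorous argument, explains where the constant $M-3$ comes from, and makes the implicit requirement $\lambda_1>0$ visible; the paper's route is shorter but only checks a restricted family of candidate optima.

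The genuine gap is in your upper bound. The claim that $(1-\lambda_1)C_{\max}(\bar{s})+\lambda_2(\bar{s}-1)$ is maximized on $[1,M]$ at $\bar{s}=M$ fails whenever $\lambda_1<1$: take $M=4$, $\lambda_1=1/2$, $\lambda_2=3/5$ (so $M-3<\lambda_2/\lambda_1$ holds); at $\bar{s}=2$ the value is $2.6>\lambda_2(M-1)=1.8$, and this is realized by an actual split (every class sent to children $1$ and $2$ with probability one gives $B=C=4$, $W=1$, $J=2.6$), so the stated range $J\le\lambda_2(M-1)$ is itself violated. Similarly, sending every example to one fixed child gives $J=(1-\lambda_1)(M-1)$, which exceeds $\lambda_2(M-1)$ whenever $\lambda_2<1-\lambda_1$ (the binary Lemma~\ref{lem:binary} has the same issue when $\lambda_1+\lambda_2<1$). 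So what you flagged as a ``secondary technicality'' for $\lambda_1<1$ is not repairable within the stated hypotheses; an additional condition such as $\lambda_1\ge 1$, or $\lambda_2\ge 2(1-\lambda_1)$, is needed, and under $\lambda_1\ge 1$ your route ($B\le C$ and $W\le M-1$) does give $J\le\lambda_2(M-1)$ immediately. You should be aware that the paper's own proof shares this defect: it asserts without justification that the maximum relevant to the upper extreme occurs with $B=C=0$, ignoring configurations where $B=C$ is large and $\lambda_1<1$, so the discrepancy originates in the lemma statement rather than in your strategy.
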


Let $J^*$ denote the lowest possible value of the objective $J$, i.e. $J^* = -\lambda_1(M-1)$.

Next we study how the objective promotes building nodes that are as balanced and pure as possible given the data. We first introduce useful definitions.

\begin{definition}(Balancedness)
The node split is $\beta$-balanced if the following holds
$
\max_{j=\{1,2,...,M \} } \left|P_j - \frac{\sum_{i=1}^M P_i}{M}\right| = \beta,
$
where $\beta\in\left[0, 1-\frac{1}{M}\right]$ is a balancedness factor.
\end{definition}
Note that a split is perfectly balanced if and only if $\beta=0$.

\begin{definition}(Purity)
The node split is $\alpha$-pure if the following holds
\begin{equation}
\frac{1}{M}\sum_{j=1}^M \sum_{i=1}^K \pi_i\min\left(P_j^i,\sum_{l=1}^M P_l^i-P_j^i\right) = \alpha,
\end{equation}
where, $\alpha\in[0, 1]$ is a purity factor.
\end{definition}
Note that a split is perfectly pure if and only if $\alpha=0$.\\
Next lemmas show that in isolation, when either the purity or balancedness of the split is fixed, decreasing the value of the objective leads to recovering more balanced or pure split, respectively.

\begin{lemma} If a node split has a fixed purity term $\alpha$, with corresponding $J_{\text{purity}}^{\alpha}$ then
$\beta \leq J - J_{\text{purity}}^{\alpha}$.
\label{lem:balancefix}
\vspace{-0.05in}
\end{lemma}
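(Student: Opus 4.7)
The plan is to exploit the additive decomposition of the objective into a balancing piece and a purity piece. Writing $J_{\text{balance}} := \sum_{j=1}^M\sum_{l=j+1}^M|P_j - P_l|$ and letting $J_{\text{purity}}$ denote the sum of the class integrity term and the multi-way penalty from Equation~\ref{eqn:Mobj}, we have $J = J_{\text{balance}} + J_{\text{purity}}$. Under the hypothesis that the purity term is fixed at the value $J_{\text{purity}}^{\alpha}$ corresponding to purity factor $\alpha$, the quantity $J - J_{\text{purity}}^{\alpha}$ collapses to $J_{\text{balance}}$, so the statement reduces to showing $\beta \leq \sum_{j=1}^M\sum_{l=j+1}^M|P_j - P_l|$.

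For the bound itself, I would pick an index $j^\star$ attaining the maximum in the definition of $\beta$, so that $\beta = |P_{j^\star} - \bar{P}|$ where $\bar{P} = \frac{1}{M}\sum_{i=1}^M P_i$. Rewriting $P_{j^\star} - \bar{P} = \frac{1}{M}\sum_{i \neq j^\star}(P_{j^\star} - P_i)$ and applying the triangle inequality immediately gives $\beta \leq \frac{1}{M}\sum_{i \neq j^\star}|P_{j^\star} - P_i|$.

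The closing step is to observe that $\sum_{i \neq j^\star}|P_{j^\star} - P_i|$ is precisely the contribution to $\sum_{j<l}|P_j - P_l|$ of the $M-1$ unordered pairs containing $j^\star$; by nonnegativity of the remaining pair-distances it is at most $J_{\text{balance}}$ as a whole. Combining with the previous step (and using $M \geq 1$) yields $\beta \leq \tfrac{1}{M}J_{\text{balance}} \leq J_{\text{balance}} = J - J_{\text{purity}}^{\alpha}$, which is the desired inequality.

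I do not expect a genuine obstacle here: everything reduces to a one-line triangle inequality once the additive decomposition is in place. The only point worth double-checking is the pair-for-pair matching between $\sum_{i \neq j^\star}|P_{j^\star} - P_i|$ and the $j^\star$-involving terms of the symmetric sum $\sum_{j<l}|P_j - P_l|$, which holds without double counting since an unordered pair $\{j^\star,i\}$ contributes exactly once to each side. The binary case $M=2$ serves as a useful sanity check: there $\beta = \tfrac{1}{2}|P_R - P_L|$ and $J_{\text{balance}} = |P_R - P_L|$, so the bound $\beta \leq J_{\text{balance}}$ holds with room to spare.
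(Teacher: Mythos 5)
Your proof is correct, and it shares the paper's starting point — the same additive decomposition, so that $J - J_{\text{purity}}^{\alpha}$ is exactly the balancing term, and the same choice of the index $j^\star$ attaining the maximum deviation from the mean $\bar{P}=\frac{1}{M}\sum_{i=1}^M P_i$ — but the final bounding step is genuinely different. The paper centers each pair term as $\left(P_j-\bar{P}\right)-\left(P_l-\bar{P}\right)$, notes that since the deviations sum to zero there exists an $l^*$ whose deviation has the opposite sign to that of $j^*$, and lower-bounds the whole balancing sum by the single pair $\left|P_{j^*}-P_{l^*}\right|\geq \left|P_{j^*}-\bar{P}\right|=\beta$. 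You instead write $P_{j^\star}-\bar{P}=\frac{1}{M}\sum_{i\neq j^\star}(P_{j^\star}-P_i)$ and apply the triangle inequality, which avoids the sign/WLOG bookkeeping entirely and in fact yields the sharper intermediate bound $\beta \leq \frac{1}{M}\left(J-J_{\text{purity}}^{\alpha}\right)$, of which the stated lemma is a weakening by the factor $M$; your binary sanity check ($\beta=\frac{1}{2}|P_R-P_L|$) shows this factor is tight. So your route is slightly cleaner and quantitatively stronger, while the paper's buys a marginally shorter argument at the cost of the opposite-sign existence step. The same remark applies verbatim to the paper's proof of its Lemma~\ref{lem:balance} (the perfectly pure special case), which is word-for-word the same argument.
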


\begin{lemma}
If a node split has a fixed balanced term $\beta$, with corresponding $J_{\text{balance}}^{\beta}$ and assuming that the following condition holds: $\lambda_1(M-1) + J_{\text{balance}}^{\beta} \geq \lambda_2 \geq \lambda_1\frac{M-1}{2}$,  then 
\begin{equation}
\alpha \leq (J - J_{\text{balance}}^{\beta} +\lambda_2)\frac{2}{M(2\lambda_2-\lambda_1(M-1))}.
\end{equation}
\label{lem:purefix}
\vspace{-0.2in}
\end{lemma}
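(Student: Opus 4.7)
The plan is to lower-bound $J - J_{\text{balance}}^{\beta} + \lambda_2$ by an affine function of $\alpha$ and then solve for $\alpha$. Since $T:=\sum_j P_j\geq 1$, the multi-way penalty reduces to $\lambda_2(T-1)$, so
\[
J - J_{\text{balance}}^{\beta} + \lambda_2 \;=\; \lambda_2 T \;-\; \lambda_1 \sum_{i=1}^{K} \pi_i \sum_{j<l}|P_j^i - P_l^i|.
\]
The core of the argument is a per-class upper bound on $\sum_{j<l}|P_j^i-P_l^i|$ that exposes the $\min$ quantity in the definition of $\alpha$. Using $|a-b|=a+b-2\min(a,b)$ pairwise,
\[
\sum_{j<l}|P_j^i-P_l^i| \;=\; (M-1) S_i \;-\; 2\sum_{j<l}\min(P_j^i,P_l^i), \qquad S_i := \sum_j P_j^i.
\]
The elementary inequality $\sum_l \min(a,b_l) \geq \min(a,\sum_l b_l)$, applied with $a=P_j^i$ and $b_l=P_l^i$ for $l \neq j$, gives $\sum_{l\neq j}\min(P_j^i,P_l^i) \geq \min(P_j^i,S_i-P_j^i)$; summing over $j$ and dividing by $2$ yields
\[
\sum_{j<l}|P_j^i - P_l^i| \;\leq\; (M-1) S_i \;-\; \sum_{j=1}^M \min(P_j^i, S_i-P_j^i).
\]

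Averaging with weights $\pi_i$ and using $\sum_i \pi_i S_i = T$ together with the identity $\sum_i \pi_i \sum_j \min(P_j^i, S_i-P_j^i) = M\alpha$ produces
\[
J - J_{\text{balance}}^{\beta} + \lambda_2 \;\geq\; T\,(\lambda_2-\lambda_1(M-1)) \;+\; \lambda_1 M\alpha.
\]
To finish, I would combine this with the auxiliary bound $\alpha \leq T/M$ (which follows from $\min(P_j^i, S_i - P_j^i) \leq P_j^i$) and with $T \geq 1$, then rearrange using the assumption $2\lambda_2 \geq \lambda_1(M-1)$ so that the target denominator is non-negative. The companion assumption $\lambda_1(M-1) + J_{\text{balance}}^{\beta} \geq \lambda_2$ enters via Lemma~\ref{lem:mary}: it ensures that $J - J_{\text{balance}}^{\beta} + \lambda_2$ lies in the feasible range of the objective and hence the bound on $\alpha$ is non-vacuous.

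The main obstacle is the final rearrangement: fitting the three relations $\sum_i \pi_i \sum_{j<l}|P_j^i-P_l^i| \leq (M-1)T - M\alpha$, $T \geq 1$, and $\alpha \leq T/M$ together so that exactly the coefficient $\frac{2}{M(2\lambda_2-\lambda_1(M-1))}$ is recovered, rather than a looser constant produced by a one-shot substitution. I would cross-check the inequality at the extremal configurations used in Lemma~\ref{lem:mary} -- perfectly pure splits (where a single $P_j^i$ equals $S_i$) and perfectly balanced splits (where $P_j = T/M$) -- to confirm that the coefficient matches in both regimes.
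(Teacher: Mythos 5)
Your opening steps are correct and, notably, they take a different (and sharper) route than the paper at the key point. The paper's own proof drops the class-integrity term via the per-class claim $\sum_{j<l}|P_j^i-P_l^i|\leq \frac{M-1}{2}\sum_j P_j^i$, which immediately yields $J-J_{\text{balance}}^{\beta}+\lambda_2 \geq \left(\lambda_2-\lambda_1\frac{M-1}{2}\right)\sum_i\pi_i S_i \geq \left(\lambda_2-\lambda_1\frac{M-1}{2}\right)M\alpha$ and hence the stated constant. That pairwise claim is false in general: for a perfectly pure class ($P_{j_0}^i=1$, all other $P_j^i=0$) the left side is $M-1$ while the right side is $\frac{M-1}{2}$. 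Your identity $\sum_{j<l}|P_j^i-P_l^i|=(M-1)S_i-2\sum_{j<l}\min(P_j^i,P_l^i)$ together with $\sum_{l\neq j}\min(P_j^i,P_l^i)\geq\min(P_j^i,S_i-P_j^i)$ gives the correct aggregate bound $\sum_i\pi_i\sum_{j<l}|P_j^i-P_l^i|\leq (M-1)T-M\alpha$, and hence $J-J_{\text{balance}}^{\beta}+\lambda_2\geq \left(\lambda_2-\lambda_1(M-1)\right)T+\lambda_1 M\alpha$, with coefficient $\lambda_1(M-1)$ where the paper asserts $\lambda_1\frac{M-1}{2}$.

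The gap you flag---the ``final rearrangement''---is therefore not a bookkeeping issue, and it cannot be closed: under the stated hypothesis $\lambda_2\geq\lambda_1\frac{M-1}{2}$ the coefficient $\lambda_2-\lambda_1(M-1)$ in your bound may be negative, and no combination of $T\geq 1$ and $M\alpha\leq T$ recovers the coefficient $\frac{2}{M(2\lambda_2-\lambda_1(M-1))}$. Indeed the target inequality itself fails under the stated hypotheses: take $M=2$, $\lambda_1=1.8$, $\lambda_2=1$, and a perfectly pure, perfectly balanced split, so $J=-\lambda_1$, $J_{\text{balance}}^{\beta}=0$, $\alpha=0$; both hypotheses $\lambda_1(M-1)+J_{\text{balance}}^{\beta}\geq\lambda_2\geq\lambda_1\frac{M-1}{2}$ hold, yet the claimed right-hand side equals $\frac{\lambda_2-\lambda_1}{2\lambda_2-\lambda_1}=-4<0=\alpha$. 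This same configuration also reverses the paper's first inequality, so the difficulty you ran into is diagnostic of a flaw in the statement/proof rather than of your approach: the bound only becomes provable from your inequality under a stronger condition such as $\lambda_2\geq\lambda_1(M-1)$ (which, via $T\geq M\alpha$, yields the stated constant for $M\leq 3$), or with a weakened constant. Finally, your guess that the side condition $\lambda_1(M-1)+J_{\text{balance}}^{\beta}\geq\lambda_2$ enters through Lemma~\ref{lem:mary} does not match the paper: that condition is never used in its proof, and it does not repair the issue above.
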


\subsection{Relation of the Objective to the Multi-label Error}

We will next explore the connection of the multi-label classification error with the proposed objective. For simplicity, assume each example has $r$ labels. Denote $t(x)$ to be the true label set of $x$ and $y_r(x)$ to be the assigned label set of size $r$ by the tree. Denote $e_r(\mathcal{T})$ to be the $r$-level error with respect to the Precision$@r$ measure, i.e. $e_r(\mathcal{T}) = 1 - \text{Precision}@r = \frac{1}{r}\sum_{i=1}^KP(i \in y_r(x), i \notin t(x))$. 

\subsubsection{Ideal Case}

Here we consider the ideal case when the objective $J$ is perfectly minimized in every node of the tree and show that in this case the tree achieves zero multi-label classification error.

\begin{theorem}
When the objective function $J$ from Equation~\ref{eqn:Mobj} is perfectly minimized in every node of the tree, i.e. $J = J^{*}$, then the resulting multi-label tree achieves zero $\hat{r}$-level multi-label error, $e_{\hat{r}}(\mathcal{T})$, for any $\hat{r} = 1,2,\dots,r$. 
\label{thm:error1}
\end{theorem}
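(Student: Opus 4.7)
The strategy is to distill $J = J^{*}$ into two deterministic structural facts via Lemma~\ref{lem:mary} and then trace them through the tree. Perfect minimization forces every non-leaf node to be both perfectly pure ($\alpha = 0$) and perfectly balanced ($\beta = 0$). Perfect purity says that for each class $i$ there is a unique child $c_i$ with $P_{c_i}^{i} = 1$ and $P_{j}^{i} = 0$ for $j \neq c_i$; perfect balance gives $P_j = 1/M$ for every $j$, which combined with $P_j = \sum_{i : c_i = j} \pi_i$ pins down how the labels present at a node get split among its children.

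First I would show by induction on depth that, for each class $i$, every example carrying label $i$ is routed as one block at every level, and therefore arrives at a single leaf $L_i$. Using the $K$-leaved, log-depth balanced tree together with the recursive balance identity $\sum_{i : c_i = j} \pi_i = 1/M$, I would then argue that the map $i \mapsto L_i$ is a bijection between labels and leaves: a leaf that hosted two distinct labels would force the $\pi$-weighted balance identity at the final split above it to be violated, so the recursive counting rules out such a collapse.

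The central step is to show that a test example $x$ with true label set $t(x) = \{i_1, \dots, i_r\}$ is forwarded to exactly the leaves $\{L_{i_1}, \dots, L_{i_r}\}$. For each $i_k \in t(x)$, $P_{c_{i_k}}^{i_k} = 1$ implies, almost surely, $h_{c_{i_k}}(x) \geq 0.5$, so $x$ descends into child $c_{i_k}$. Conversely, for any child $c'$ that is not of the form $c_{i_k}$, picking any fixed $i_k \in t(x)$ gives $P_{c'}^{i_k} = 0$, hence $h_{c'}(x) < 0.5$, ruling out that direction; the default $\arg\max$ routing never triggers because at least one $h_j(x) \geq 0.5$. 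Iterating this node-by-node delivers exactly the $r$ target leaves and no others.

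Since the tree's predicted label set for $x$ is then exactly $t(x)$, we have $\mathrm{Precision}@\hat{r} = 1$ and hence $e_{\hat{r}}(\mathcal{T}) = 0$ for every $\hat{r} \leq r$. I expect the main obstacle to be the label-to-leaf bijection step: a careful counting argument leveraging the per-level balance identity is needed to exclude degenerate partitions in which two distinct labels share a path down to the same leaf. The secondary subtlety is upgrading the population-level equalities $P_{j}^{i} \in \{0, 1\}$ to deterministic per-example routing, which rests on the almost-sure interpretation of the class-conditional probabilities.
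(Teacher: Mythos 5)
Your reduction of $J = J^{*}$ to perfect purity and balance via Lemma~\ref{lem:mary} matches the paper's starting point, but the structural picture you build from it contradicts that very purity condition, and this breaks your central step. Perfect purity is not only the statement that for each class $i$ there is a unique child $c_i$ with $P_{c_i}^{i}=1$; it also forces the multi-way penalty to vanish, i.e. $\sum_j P_j = 1$, so every example is sent to exactly one child at every node and hence descends to exactly one leaf. For an example $x$ carrying two labels $i_1 \neq i_2$, the conditions $P^{i_1}_{c_{i_1}}=1$, $P^{i_1}_{j}=0$ for $j\neq c_{i_1}$, and $P^{i_2}_{c_{i_2}}=1$ are simultaneously satisfiable only if $c_{i_1}=c_{i_2}$: otherwise $x$ (which has label $i_1$) would have to avoid child $c_{i_2}$ and also enter it. So under $J=J^{*}$ co-occurring labels are necessarily routed to the same child at every level, the map $i \mapsto L_i$ cannot be a bijection onto $K$ leaves whenever $r>1$ (labels that ever co-occur must share a leaf), and your central claim that $x$ is forwarded to the $r$ distinct leaves $L_{i_1},\dots,L_{i_r}$ is false under the theorem's hypothesis --- it would itself certify a strictly positive multi-way penalty at the root. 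Your step establishing the routing already contains the inconsistency: you use $P^{i_k}_{c'}=0$ to rule out a child $c'$ while simultaneously asserting that $x$ enters $c_{i_{k'}}$ for $k'\neq k$, even though $c_{i_{k'}}$ is such a $c'$ relative to label $i_k$.

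The conclusion is recoverable, and this is essentially what the paper does: since each example reaches a single leaf and purity at every node makes the children's label histograms non-overlapping, after sufficiently many splits every leaf histogram has exactly $r$ non-zero entries and all examples reaching a given leaf share the same $r$-label set; the paper then observes $G(\mathcal{T})=0$ and invokes the entropy-versus-error bound (Equation~\ref{eqn:entropyerr}) to conclude $e_{\hat r}(\mathcal{T})=0$ for all $\hat r \le r$. (Your balance-based counting --- $P_j = 1/M$ plus a ``$K$-leaved'' bijection argument --- is neither needed nor generally available: the leaves with non-trivial mass correspond to groups of co-occurring labels, roughly $K/r$ of them, not to the $K$ individual labels.)
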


\subsubsection{Real Case: Boosting Theorem}

Next we prove a bound on the classification error for the LdSM
tree. In particular, we show that if the proposed objective is “weakly”
optimized in each node of the tree then our algorithm will boost this weak advantage
to build a tree achieving any desired level of accuracy. This weak advantage is captured in a form of the Weak Hypothesis Assumptions. We restrict ourselves to the case of binary tree. We omit the analysis for the $M$-ary to avoid over-complicating the notation. 

We introduce the following weak assumptions.
\begin{assumption}
$\gamma$-Weak Hypothesis Assumption: for any distribution $\mathcal{P}$ over the data, at each node of the tree $\mathcal{T}$ there exist a partition such that
$\sum_i \pi_i \left|P_R^i - P_L^i\right| \geq \gamma$, where $\gamma \in (0,1]$.
\label{as:WHA1}
\end{assumption}
\begin{remark}
The above definition essentially assumes that in every node of the tree we are able to recover a partition with the corresponding class integrity term (a second component of our objective) bounded away from zero. Since the value of this term ranges in $[0,1]$, such assumption is indeed very ``weak''.


Also, specifically note that it is enough that for one class $i$ the following holds: $|P_R^i - P_L^i| \geq \gamma$ in order to satisfy the assumption. 
\end{remark}

Assumption~\ref{as:WHA1} leads to the lemma that captures the monotonic drop of the error with each split. A similar theorem for $M$-ary case is provided in the Supplement. 

\begin{lemma} Under the Weak Hypothesis Assumption~\ref{as:WHA1}, $e_r(\mathcal{T})$ is monotonically decreasing with every split of the tree.
\label{lem:error2}
\end{lemma}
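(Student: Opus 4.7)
The plan is to express $e_r(\mathcal{T})$ as a sum over the current leaves of $\mathcal{T}$, show that each split weakly reduces this sum, and then use the $\gamma$-advantage of Assumption~\ref{as:WHA1} to make the reduction strict. Write $w_n$ for the probability that an example reaches leaf $n$, $P(i\mid n)$ for the conditional probability that an example at $n$ carries label $i$, and $T_n$ for the $r$-element label set that maximizes $\sum_{i\in T}P(i\mid n)$ (this is the precision-optimal prediction at the leaf). A short calculation based on the definition of $e_r$ will give the decomposition
\[
e_r(\mathcal{T}) \;=\; 1 \,-\, \frac{1}{r}\sum_{n \text{ leaf}} w_n\sum_{i\in T_n}P(i\mid n),
\]
so it suffices to argue that replacing leaf $n$ with children $L,R$ never decreases the post-prediction ``correct mass'' $w_n\sum_{i\in T_n}P(i\mid n)$.

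Next I would split the argument into a monotonicity step and a strictness step. For monotonicity, the law of total probability combined with the constraints $P_L^i+P_R^i\ge 1$ and $P_L+P_R\ge 1$ from Section~\ref{sec:OF} gives
\[
w_L\,P(i\mid L)+w_R\,P(i\mid R)\;\ge\;w_n\,P(i\mid n)
\]
for every label $i$, with equality exactly when each example is routed to a single child. Summing over any fixed $r$-subset $T$, specializing to $T=T_n$, and using the fact that $T_L,T_R$ are chosen optimally at their own children yields
\[
w_L\!\sum_{i\in T_L}\!P(i\mid L) + w_R\!\sum_{i\in T_R}\!P(i\mid R) \;\ge\; w_n\!\sum_{i\in T_n}\!P(i\mid n),
\]
which is exactly the statement that $e_r(\mathcal{T})$ does not increase after any split. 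For the strict decrease I would invoke Assumption~\ref{as:WHA1}: the $\gamma$-lower bound on $\sum_i\pi_i|P_R^i-P_L^i|$ prevents the posteriors $P(\cdot\mid L)$ and $P(\cdot\mid R)$ from coinciding, and in particular forces some label with unequal conditional mass at $L$ and $R$. Then at least one of $T_L,T_R$ must reorder relative to $T_n$, strictly improving the correct mass at that child; this improvement carries through the chain above to produce a strict drop in $e_r$.

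The main obstacle I anticipate is bookkeeping around multi-way routing. When $P_L+P_R>1$ a single example contributes to predictions at both children, so the precise relationship between the aggregate test-time prediction $y_r(x)$ and the per-leaf top-$r$ sets $T_L,T_R$ must be pinned down using the scoring rule from Section~\ref{sec:A}; the extra routing only enlarges $w_LP(i\mid L)+w_RP(i\mid R)$ relative to $w_nP(i\mid n)$, so it works in our favor, but the decomposition of $e_r$ into per-leaf terms requires care. A smaller subtlety is translating the $\gamma$-advantage, which is written in terms of the $\pi$-weighted quantities, into a genuine disagreement between the raw conditionals $P(\cdot\mid L)$ and $P(\cdot\mid R)$ that drive the top-$r$ ranking; the normalization by the average number of labels per example with label $i$ rescales the weights label-by-label but does not destroy the disagreement, so this is a matter of careful accounting rather than a genuine gap.
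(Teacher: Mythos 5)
There is a genuine gap in your strictness step. Assumption~\ref{as:WHA1} only guarantees some label $i$ with $P_L^i \neq P_R^i$; it does not force any reordering of the top-$r$ sets. If the distinguishing label sits outside the top-$r$ region, you can perfectly well have $T_L = T_R = T_n$ (and, for examples routed to both children, the top-$r$ of the averaged histogram also equal to $T_n$), in which case the law of total probability makes the post-split precision exactly equal to the parent's, so the error is \emph{unchanged} at that split. Your claim that ``at least one of $T_L,T_R$ must reorder relative to $T_n$, strictly improving the correct mass'' is therefore false, and a strict drop at every single split cannot be established. The paper's own proof concedes exactly this: it shows $(P@r)^{t+1}-(P@r)^{t}\geq 0$ and then argues only that the label witnessing the weak advantage must \emph{eventually} enter the top-$r$ of some descendant node, so strict decreases occur after finitely many splits, not at each one. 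Any correct proof has to settle for this weaker (non-increasing, eventually strictly decreasing) reading of the lemma.

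Your leaf decomposition also does not survive multi-way routing as stated. The identity $e_r(\mathcal{T}) = 1 - \frac{1}{r}\sum_{n}w_n\sum_{i\in T_n}P(i\mid n)$ is valid only when each example reaches a single leaf; under the prediction rule of Section~\ref{sec:A} an example reaching several leaves is scored against the top-$r$ of the \emph{averaged} leaf histograms, and the leaf weights then satisfy $\sum_n w_n > 1$, so per-leaf correct mass double counts such examples. Consequently the slack in $w_LP(i\mid L)+w_RP(i\mid R)\geq w_nP(i\mid n)$ coming from $P_L^i+P_R^i>1$ is not ``in your favor'': if every example were sent to both children with $P_L^i=P_R^i=1$ for all $i$, your sum would double while the averaged-histogram prediction, and hence $e_r$, would be unchanged. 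The correct bookkeeping is either over subsets of leaves with weights $w_{\tilde{\mathcal{L}}}$ (as the paper does in its entropy-based analysis) or via an explicit multi-way compensation term $\frac{1}{M}\left|\left(\sum_{j=1}^{M}P_j\right)-1\right|$, which is precisely how the paper's proof of Lemma~\ref{lem:error3} corrects the children's contributions. Apart from these two points, your monotonicity skeleton (per-label total-probability inequality plus optimality of the per-leaf top-$r$ choice) is in the same spirit as the paper's calculation.
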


We next introduce the second weak assumption.

\begin{assumption}
$c$-Weak Hypothesis Assumption: at step $t$ of the algorithm, there exist a leaf node $l^*$ such that its weight, $w_{l^*} \geq \frac{c}{(t + 1)}$, where $c \in (0, 1]$. $w_{l^*}$ is the probability that a randomly chosen point from distribution $\mathcal{P}$ reaches the leaf $l^*$.
\label{as:WHA2}
\end{assumption}

\begin{remark}
Note that at step $t$ of the algorithm we have $t+1$ leaves. Also note that $\sum_{\tilde{\mathcal{L}} \subset \mathcal{L}} w_{\tilde{\mathcal{L}}} = 1$, where $\tilde{\mathcal{L}}$ is a subset of the tree leaves ($\mathcal{L}$ is the set of all leaves and the sum is taken over all subsets of tree leaves) and $w_{\tilde{\mathcal{L}}}$ is the weight of this subset, or equivalently, the probability that a randomly chosen point from distribution $\mathcal{P}$ reaches all leaves in $\tilde{\mathcal{L}}$. 

Consider the case when we do not send examples to more than one direction in every node of the tree. In this case there exists a leaf $l^*$ with c equal to 1 and therefore $w_{l^*} \geq 1/(t+1)$. When c decreases, we allow to send more examples to multiple directions in the tree. In the Assumption~\ref{as:WHA2} we let the examples to be sent to both directions at some nodes, therefore we require that there exists a leaf with $w_{l} \geq c/(t+1)$, for $c < 1$. Thus Assumption~\ref{as:WHA2} is tightly correlated with the multi-way penalty term of the objective. Note also that naturally, c has to be bounded away from zero since every leaf receives at least one example.
\end{remark}

Then the following theorem holds.
\begin{theorem}
Under the Weak Hypothesis Assumptions~\ref{as:WHA1} and~\ref{as:WHA2} and an additional assumption that each node produces perfectly balanced split,  for any $\alpha \in [0,1]$ to obtain $e_r(\mathcal{T}) \leq \alpha$ it suffices to have a tree with $t$ internal nodes that satisfy
\begin{equation}
  (t+1)\geq (\frac{1}{\alpha})^{\frac{16 \ln K}{c r^2 \gamma^2 (1- b) \log_2(e)}}, 
\end{equation}
where $b = |P_R + P_L - 1|$.
\label{thm:errorbound}
\end{theorem}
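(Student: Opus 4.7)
The plan is to follow a Kearns--Mansour-style potential argument for top-down decision-tree induction, adapted to the multi-label Precision@$r$ objective. I would introduce a concave, entropy-like potential
\begin{equation}
\Phi(\mathcal{T}) \;=\; \sum_{l\in\mathcal{L}} w_l\, G(\pi^l),
\end{equation}
where $\pi^l$ is the normalized label distribution at leaf $l$ and $G$ is either Shannon entropy in nats or a Gini-style aggregator such as $\sum_i \sqrt{\pi_i^l(1-\pi_i^l)}$. Two ingredients then drive the proof: (i) an error-to-potential inequality $e_r(\mathcal{T}) \le F(\Phi(\mathcal{T}))$, obtained at each leaf by a Fano-type bound on the optimal top-$r$ predictor and aggregated over leaves with the weights $w_l$; and (ii) a per-split multiplicative contraction of $\Phi$ driven by Weak Hypothesis Assumption~\ref{as:WHA1}.

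For the per-split contraction, consider splitting a leaf $l$ into children $R,L$ by a hypothesis satisfying Assumption~\ref{as:WHA1}. Under the perfectly balanced split assumption $P_R = P_L = (1\pm b)/2$, so the drop at that leaf equals
\begin{equation}
w_l\bigl[G(\pi^l) - P_R\, G(\pi^l_R) - P_L\, G(\pi^l_L)\bigr],
\end{equation}
which is nonnegative by concavity and, via a Pinsker-type inequality combined with the label-aggregated $\ell_1$ advantage $\sum_i \pi_i^l|P_R^i-P_L^i|\ge \gamma$, is bounded below by a constant times $w_l(1-b)\gamma^2/r^2$. The $(1-b)$ factor comes from the balance deficit shifting the convex combination away from the symmetric point at which the concavity gap is largest, while the $1/r^2$ arises when converting a per-label $\ell_1$ advantage into a Precision@$r$ advantage (once via aggregation across the $r$ predicted labels, once via Fano).

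Assumption~\ref{as:WHA2} supplies the leaf selection: at round $t$ there is a leaf $l^*$ with $w_{l^*}\ge c/(t+1)$, whose contribution to $\Phi(\mathcal{T}_t)$ is at least an $\Omega(c/((t+1)\ln K))$ fraction after normalizing the per-leaf entropy by its maximum $\ln K$. Splitting $l^*$ then yields the recurrence
\begin{equation}
\Phi(\mathcal{T}_{t+1}) \le \Phi(\mathcal{T}_t)\Bigl(1 - \frac{\delta}{t+1}\Bigr),\qquad \delta \;=\; \frac{c\, r^2 \gamma^2(1-b)\log_2 e}{16\ln K}.
\end{equation}
Unrolling with $\ln(1-x)\le -x$ and $\sum_{s=1}^{t} 1/(s+1)\ge \ln(t+1)-1$ gives $\Phi(\mathcal{T}_t) \le \Phi(\mathcal{T}_0)(t+1)^{-\delta}$, and after absorbing $\Phi(\mathcal{T}_0)$ and applying step (i) this yields $e_r(\mathcal{T}_t)\le (t+1)^{-\delta}$. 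Solving $(t+1)^{-\delta}\le\alpha$ and inverting $\delta$ produces the stated bound.

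The hardest step will be (ii): showing that the concavity-gap drop is simultaneously quadratic in the Assumption~\ref{as:WHA1} $\ell_1$ gap $\gamma$, linear in the balance deficit $(1-b)$, and loses only a factor $1/r^2$ when connected to the Precision@$r$ error. Lemma~\ref{lem:error2} already gives us monotone decrease; quantifying its rate with exactly the right constants is the crux, while the unrolling and the leaf-selection argument via Assumption~\ref{as:WHA2} are standard once the contraction $\delta$ is identified.
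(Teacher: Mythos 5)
Your plan is essentially the route the paper takes: it first proves an entropy boosting theorem (Theorem~\ref{thm:entropy}) for the potential $\sum_{\tilde{\mathcal{L}} \subset \mathcal{L}} w_{\tilde{\mathcal{L}}} \sum_i \rho_i^{\tilde{\mathcal{L}}}\ln(1/\rho_i^{\tilde{\mathcal{L}}})$ — per-split drop from strong concavity of Shannon entropy in $\ell_1$, Assumption~\ref{as:WHA1} supplying the $\gamma^2$ gap, Assumption~\ref{as:WHA2} together with $G_t\le\ln K$ supplying the $c/((t+1)\ln K)$ factor, and a Kearns--Mansour-style unrolling of $G_{t+1}\le G_t(1-\eta^2/(t+1))$ — and then deduces Theorem~\ref{thm:errorbound} by showing the entropy dominates the error: every label outside the top $r$ at a leaf has $\rho_i^{\tilde{\mathcal{L}}}\le 1/2$, so $G(\mathcal{T})\ge \ln(2)\,r\,e_r(\mathcal{T})\ge e_r(\mathcal{T})$. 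So the architecture (potential, error-to-potential inequality, contraction, leaf selection, inversion) matches.

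Two concrete points in your sketch are wrong as written and would need repair before the calculation goes through. First, $b=|P_R+P_L-1|$ is not a balance deficit — the theorem explicitly assumes perfectly balanced splits — it is the multi-way overlap, i.e.\ the mass routed to \emph{both} children. Accordingly your per-leaf drop $w_l\bigl[G(\pi^l)-P_R G(\pi^l_R)-P_L G(\pi^l_L)\bigr]$, with $P_R+P_L=1+b$, is not the actual change in the potential: the overlapping mass retains the parent's (averaged) histogram after the split, so the correct decomposition uses coefficients $(P_R-b)$ and $(P_L-b)$ for the children and $b$ for the parent distribution, which is exactly where the $(1-b)$ factor comes from; this also forces the potential to be indexed by subsets of leaves rather than single leaves, since examples reach several leaves. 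Second, your $r$-bookkeeping is inconsistent: you claim the per-split drop is of order $w_l(1-b)\gamma^2/r^2$ but then state $\delta\propto r^2$. In the paper both $r$ factors enter favorably: the leaf label-probability vectors $\rho$ sum to $r$, so $\|\rho_R-\rho_L\|_1=r\|\pi_R-\pi_L\|_1$ puts $r^2$ in the numerator of the drop, and the error-to-entropy inequality costs nothing further (indeed it gains, since $G\ge r\ln 2\, e_r$). If you carried your stated $\gamma^2/r^2$ drop through the unrolling, the resulting exponent would be $\frac{16 r^2\ln K}{c\gamma^2(1-b)\log_2 e}$ rather than the claimed $\frac{16\ln K}{c r^2\gamma^2(1-b)\log_2 e}$, so the stated bound would not be recovered.
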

Consider an algorithm that builds the tree in a top-down fashion so that at each step it chooses the node with the highest weight, optimizes $J$ at that node, and splits that node to its children. The theorem does not assume that we can optimize $J$ perfectly in the node but instead only requires weak assumptions to hold. The above theorem guarantees that we can amplify the weak gain at each node to decrease the error below any desirable threshold. In practice we expect that $J$ can be optimized far better than what is given by the weak hypothesis assumptions, which effectively reduces the number of required splits needed to achieve given multi-label classification error. A generalization of Theorem~\ref{thm:errorbound} is provided in the Supplement (note it relies on more complicated assumptions though).

\section{ALGORITHM}
\label{sec:A}
In this section we present the algorithm for simultaneous tree construction and training. We then discuss how to assign labels to the test example. The main algorithm for tree construction and training is captured in Algorithm~\ref{alg:B}. It presents the top-level procedure for building the tree. It includes three sub-algorithms which we will explain here but their pseudo-codes are deferred to the Supplement. The tree construction is performed in a top-down node-by-node fashion. Reaching the maximum number of nodes terminates further growth of the tree. As can be seen in Algorithm~\ref{alg:B}, we select a node to be expanded into children nodes based on the priority computed as the difference of the sum and maximum value of the bins of the label histogram in the node. The priority of the node is related to the weight of the node defined in section~\ref{sec:TR}. We want to split nodes that are reached by many examples but we also require them to come from different classes, where at least two classes have significant mass. High priority is attained by these nodes that were visited by many examples that correspond to many different labels.
When the node is selected for expansion, we train its regressors according to the procedure \textbf{TrainRegressors} (see Algorithm~\ref{alg:WU} in the Supplement for its pseudo-code). In \textbf{TrainRegressors} we stream multiple (\#epochs) times through the data reaching that node and optimize the objective function for each example according to the procedure \textbf{OptimizeObjective} (see Algorithm~\ref{alg:OO} in the Supplement). In \textbf{OptimizeObjective} we search over all possible ways of sending an example to $M$ directions (including multi-way cases) and we choose the set of directions for which J achieves the lowest value. \textbf{TrainRegressors} uses these computed optimal direction(s) to train its regressors using cross-entropy loss. Afterwards, it updates the probabilities $P_j$s and $P_j^i$s in the node. Instead of taking $1$-increments per example when  updating probabilities, we  use regressor margins (clamped to the interval $[0,1]$). 

After training the regressors, we create children for the node according to the procedure \textbf{CreateChildren} (see Algorithm~\ref{alg:CC} in the Supplement).
Based on the outputs of the regressors we assign data points to its children using rule explained in Section~\ref{sec:OF} and update children's label histograms accordingly. 

At testing, the prediction is formed according to Algorithm~\ref{alg:P}. Specifically, the example is sent down the tree, from the root to one or more leaves, guided by node regressors. For examples that descended to multiple leaves, we estimate the label histogram by averaging the normalized label histograms of these leaves. The normalized label histogram is computed by dividing the label histogram by the sum of its entries. Given $R$ (the input to the Algorithm~\ref{alg:P}), we assign to the test example top $R$ labels that correspond to the highest entries in the resulting histogram.

\vspace{-0.05in}
\begin{algorithm}[H]
\caption{\textbf{BuildTree}}
\begin{algorithmic} 
\STATE \% $v.I$ denotes the list of indices of examples\\
\hspace{0.16in}reaching node $v$
\INPUT $\cdot$ maximum \# of nodes: $T_{\text{max}}$;\\
\hspace{0.31in}$\cdot$ tree width: $M$;\\
\hspace{0.31in}$\cdot$ \# of training epochs: $E$;\\ 
\hspace{0.31in}$\cdot$ training data $(x_1,y_1),\dots,(x_N,y_N)$\\
\hspace{0.34in} \%$y_i$: all labels of the $i^{\text{th}}$example
\\\hrulefill
\STATE {\bf procedure UpdateHist ($LHist$, $y$)}\\
\textbf{for} $i \in y$ \textbf{do}\;\;
$LHist[i] \mathrel{+}= 1$ \;\;\textbf{end for}
\\\hrulefill
\STATE
\vspace{-0.1in}
\STATE $v_{root}.I \leftarrow \{1,2,...,N\}$; \:\:\:$v_{root}.Lhist \leftarrow \oldemptyset$
\FOR{$i \in v_{root}.I$}
\STATE \% add $y_i$ to histogram
\STATE {\bf UpdateHist ($v_{root}.Lhist$, $y_i$)}
\ENDFOR
\STATE $t \leftarrow 1$
\STATE $Q.push(v_{root}, 0)$ \% initialize priority queue $Q$\\
\WHILE{$Q \neq \oldemptyset$ \AND $t < Tmax$}
\STATE $v \leftarrow Q.pop()$ 
\STATE \textbf{TrainRegressors ($v$)}
\STATE $ch \leftarrow$ \textbf{CreateChildren ($v$)} 
\FOR{$m \in ch$} 
\STATE $priority \leftarrow$\\ 
\hspace{0.15in}$\sum_{k \in ch[m].Lhist}{ch[m].Lhist[k]}$
\STATE $\hspace{0.18in}- \max_{k \in ch[m].Lhist}{ch[m].Lhist[k]}$
\STATE $Q.push(ch[m], priority)$ 
\ENDFOR
\STATE $t \leftarrow t + M$
\ENDWHILE
\STATE {\bf return} $v_{root}$ 
\end{algorithmic}
\label{alg:B}
\end{algorithm}
\vspace{-0.15in}

\paragraph{The computational complexity analysis} The complexity of the \textbf{TrainRegressors} is $\mathcal{O}(M(D+K)+eM\hat{N}(\hat{d}+ 2^M\hat{k}))$, where $D$ is the feature size, $K$ is the label size, $e$ is the number of epochs, $\hat{N}$ is the number of examples reaching the node, $\hat{d}$ is the average number of features per point and $\hat{k}$ is the average number of labels per point. The first term, $M(D+K)$, only corresponds to the initialization of the regressors and conditional probabilities. Note that since the feature and label spaces are sparse, $\hat{d}$ and $\hat{k}$ are small numbers compared to $D$ and $K$. If we expect $e\hat{N}(\hat{d}+2^M \hat{k}) << (D+K)$, the complexity can be further reduced when using a self-balancing binary search tree to store the sparse set of weight vectors and probabilities. This would result in $\tilde{\mathcal{O}}(eM\hat{N}(\hat{d} + 2^M\hat{k}))$ computational complexity ($M$ is usually a small number. In our experiments we used $M=2,4$). Let $\hat{r}$ be the average number of leaves that each example descends to. Then the overall training complexity when building a balanced tree would become $\tilde{\mathcal{O}}(N\hat{r}eM(2^M\hat{k} + \hat{d}))$, where $N$ is the the size of the training data. Furthermore, the testing complexity would become $\mathcal{O}(\log(K)\hat{r}M(\hat{k}+\hat{d}))$ per test data point. Note that having an explicit balancing term in our objective function encourages building trees with logarithmic depth with respect to the total number of labels, $K$.
\section{EXPERIMENTS}
\label{sec:E}
We evaluated LdSM on multiple benchmark data sets (Bibtex, Mediamill, Delicious, AmazonCat-13k, Wiki10-31k, Delicious-200K, and Amazon-670k) obtained from public repository~\cite{Manik}. The data sizes are reported in Table~\ref{tab:accu} ($D$ is the data dimensionality). The experimental setup is described in the Supplement.

\vspace{-0.05in}
\begin{algorithm}[H]
\caption{\textbf{Predict ($x$, $R$)}}
\begin{algorithmic} 
\INPUT $\cdot$ root of the trained tree: $v_{root}$;\\ 
\hspace{0.31in}$\cdot$ \#labels to predict per example: $R$;\\ \hspace{0.31in} $\cdot$ tree width: $M$
\\\hrulefill
\STATE {\bf procedure GetLeaves($v$)}
\IF{$v.isLeaf$}
\STATE $leafList.push(v)$
\ELSE
\FOR{$m \in 1\dots M$}
\IF{$v.w_m^{\top}x>0.5$}
\STATE {\bf GetLeaves($v_m$)}
\STATE $sent \leftarrow true$
\ENDIF
\ENDFOR
\IF{{\bf not} $sent$}
\STATE $m \leftarrow \arg\max_{\hat{m}\in \{1,2,\dots,M\}}{v.w_{\hat{m}}^{\top}x}$
\STATE {\bf GetLeaves($v_m$)}
\ENDIF
\ENDIF
\\\hrulefill
\STATE $leafList \leftarrow \oldemptyset$ \% list of leaves reached by example $x$
\STATE {\bf GetLeaves($v_{root}$)}
\STATE $hist \leftarrow \oldemptyset$ 
\FOR{$v_l \in leafList$}
\STATE $sum \leftarrow \sum_{k \in v_l.Lhist}v_l.Lhist[k]$
\FOR{$k \in v_l.Lhist$}
\STATE $hist[k] \mathrel{+}= v_l.Lhist[k]/sum$
\ENDFOR
\ENDFOR
\STATE $labels \leftarrow$ select $R$ top entries from $hist$
\STATE {\bf return} $labels$
\end{algorithmic}
\label{alg:P}
\end{algorithm}
\setlength{\textfloatsep}{0cm}
\setlength{\floatsep}{0cm}
\vspace{-0.15in}
In Table~\ref{tab:accu} we compare the Precisions $P@1$, $P@3$, and $P@5$ and nDCG scores $N@1$, $N@3$, $N@5$ (see~\cite{Manik} for the explanation of these evaluation metrics) obtained by LdSM and other purely tree-based competitor algorithms: LPSR, FastXML, PFastreXML, PLT, GBDT-S, and CRAFTML.The performance of the competitors were obtained from the corresponding papers introducing these techniques and multi-label repository~\cite{Manik}. The prediction with LdSM ensemble is done by averaging the resulting histograms for each tree and then selecting $R$ labels. At training, each tree in the ensemble differs in regressors initialization. The reported results show that LdSM either matches or, on selected problems (including large Amazon-670k data set), outperforms the existing tree-based approaches in terms of both the Precision and the nDCG score. 

In Table~\ref{tab:pOAA} we report the Precisions of LdSM compared with Parabel, the most recent OAA approach, which is also efficient compared to the other schemes. Parabel builds a hierarchy over labels and also learns powerful 1-vs-All classifiers in the leaf nodes. It can be considered as a hybrid technique which combines 1-vs-All with label-tree approaches. It has much better prediction time compared to the other 1-vs-All approaches while achieving similar accuracies as DiSMEC/PPD-Sparse. The comparison with the rest of the techniques are deferred to the Supplement. On bigger data sets, LdSM has some loss of statistical accuracy with respect to OAA methods. However, these techniques have fundamentally different underlying mechanism from ours, which usually result in their higher complexity and longer prediction time. 

We observed the largest data set (Amazon-670k) suffer from the tail label problem. For this data set we use re-ranking approach similar to~\cite{Jain16}. This is applied at testing, after our tree is built and trained. Re-ranking increases the test time by $\sim50\%$ for Amazon-670k. In Table~\ref{tab:pOAAPS} we compare the performance of our approach against Parabel on tail labels using the propensity score variant of Precision. On most of the data sets LdSM has better performance.

In Table~\ref{tab:predtime} we provide per-example prediction time (training time is deferred to the Supplement) for different data sets comparing LdSM with competitor methods, as well as with Parabel. Our result demonstrates that LdSM can perform efficient multi-label prediction, with respect to the tree-based methods as well as the other techniques including OAA approaches. (Refer to the Supplement for more results.)

Figure~\ref{fig:depth} shows that the depth of trees constructed with LdSM are $\mathcal{O}(\log_M(K))$, specifically they lie in the interval $[\log_MK,3\log_MK]$ for Mediamill, Bibtex and Delicious-200k data sets and $[\log_MK,2\log_MK]$ for Delicious, AmazonCat-13k, Wiki10-31k and Amazon-670k data sets. Next we discuss the results captured in Figure~\ref{fig:impex}. Note that additional figures related to this study can be found in the Supplement. In the top plot we report the behavior of Precision and nDCG score as the size of the LdSM ensemble grows. Clearly the most rapid improvement in Precision is achieved when increasing the ensemble size to $10$ trees (across different data sets this was found to be between $5$ and $10$, except Bibtex (case $M=2$), for which it was $20$). After that, the increase of $P@1$, $P@3$, $P@5$, $N@1$, $N@3$, and $N@5$ saturates and we obtain less than $2\%$ improvement when increasing the ensemble further to $50$. The same can be observed for nDCG score. The bottom plot captures how the Precision and nDCG score depend on the number of nodes in the tree and the depth of the deepest tree in the ensemble. As we increase the maximum allowed number of nodes ($T_{max}$) in the LdSM algorithm, it recovers $\mathcal{O}(log_M(T_{max}))$-depth trees. One can observe the general tendency that increasing the number of nodes $\phi$ times, results in increasing the tree depth by less than $2\log_M(\phi)$. We also observed that increasing the number of nodes/tree depth for most data sets leads to the improvement in Precisions $P@1$, $P@3$, and $P@5$ and nDCG scores $N@1$, $N@3$, and $N@5$ by less than $3\%$, suggesting that often shallower trees already achieve acceptable performance. The plots in Figure~\ref{fig:impex5} in the Supplement demonstrate that single LdSM tree outperforms single FastXML tree. The same property holds for ensembles. 

In Figure~\ref{fig:OC} we show how the objective function is optimized as we move from the root deeper into the tree. Intuitively root faces the most difficult optimization task as it sees the entire data set and consequently the objective function there is optimized more weakly, i.e. to a higher level, than in case of nodes lying deeper in the tree. As we move closer to the leaves, the convergence is faster due to the ``cleaner'' nature of the data received by the nodes there (less label variety).

\begin{minipage}{0.45\textwidth}
    \begin{center}
    \includegraphics[width=1\textwidth]{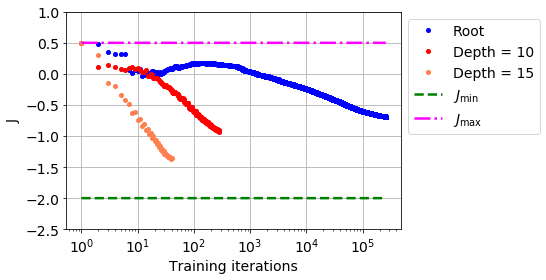}
    \end{center}
    \vspace{-0.15in}
    \captionof{figure}{The behavior of the LdSM objective function $J$ during training at different levels in the tree for an exemplary LdSM tree. Delicious data set. Tree depth is $20$ and $M$ was set to $M = 2$. $J_{min}$ and $J_{max}$ denote respectively the minimum and maximum value of $J$.}
    \label{fig:OC}
\end{minipage}

\begin{figure}[htp!]
  \begin{center}
\includegraphics[width=0.48\textwidth]{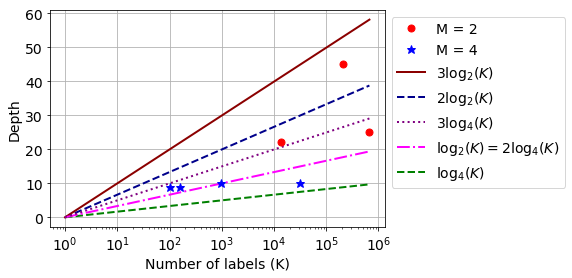}
\end{center}
\vspace{-0.1in}
\caption{The depth of the deepest tree in the optimal LdSM tree ensemble (reported in Table~\ref{tab:accu}) versus the number of labels in the data set ($K$).}
\label{fig:depth}
\vspace{-0.05in}  
\end{figure}

\begin{table}[htp!]
\vspace{-0.09in}
\centering
\setlength{\tabcolsep}{2pt}
\caption{Prediction time [ms] per example for tree-based methods as well as Parabel on different data sets  (LPSR and PLT are NA).}
\vspace{-0.2in}
\centering
\begin{tabular}{c||c|| c|c|c| c|| c|}
\cline{2-7} 
\multirow{2}{*}{} &\rotatebox{90}{Parabel}& \rotatebox{90}{GBDT-S}&\rotatebox{90}{CRAFTML}&\rotatebox{90}{FastXML} & \rotatebox{90}{PFastreXML} & \rotatebox{90}{LdSM} \\
\hline
\hline
\multicolumn{1}{|c||}{\multirow{1}{*}{Mediamill}} &NA&\textbf{0.05}&NA& 0.27 & 0.37 & \textbf{0.05} \\
\hline
\multicolumn{1}{|c||}{\multirow{1}{*}{Bibtex}}&NA &NA&NA& 0.64 & 0.73 & \textbf{0.013}\\
\hline
\multicolumn{1}{|c||}{\multirow{1}{*}{Delicious}}&NA &0.04&NA& NA & NA & \textbf{0.014}\\
\hline
\multicolumn{1}{|c||}{\multirow{1}{*}{AmazonCat-13k}}&NA &NA&5.12& 1.21 & 1.34 & \textbf{0.04}\\
\hline
\multicolumn{1}{|c||}{\multirow{1}{*}{Wiki10-31k}}&NA &0.20&NA& 1.38 & NA & \textbf{0.15}\\
\hline
\multicolumn{1}{|c||}{\multirow{1}{*}{Delicious-200k}}&NA &\textbf{0.14}&8.6& 1.28 & 7.40 & 1.21\\
\hline
\multicolumn{1}{|c||}{\multirow{1}{*}{Amazon-670k}}&1.13 &NA&5.02& 1.48 & 1.98 & \textbf{0.12}\\
\hline
\end{tabular}
\label{tab:predtime}
\end{table}

\begin{figure}[h]
\vspace{-0.05in}
  \begin{center}
\includegraphics[width=0.45\textwidth]{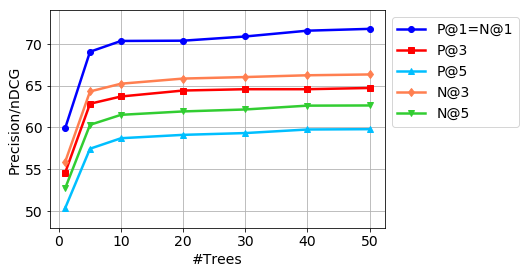}
\includegraphics[width=0.45\textwidth]{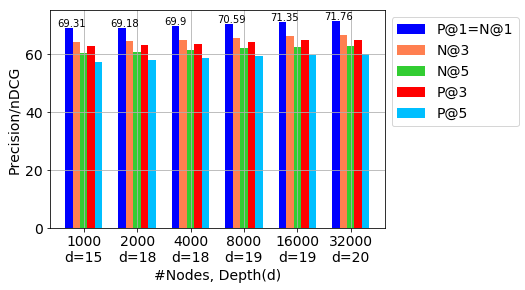}\\
\end{center}
\vspace{-0.05in}
\caption{The behavior of precision/nDCG score as a function of the number of trees in the ensemble (\textbf{first row}) and number of nodes $T_{max}$ (including leaves) and tree depth of the deepest tree in the ensemble (\textbf{second row}). $M$ is set to $M=2$. 
Plots were obtained for Delicious data set (Table 1(c)). The figure should be read in color.
}
\label{fig:impex}
\end{figure}
\section{CONCLUSIONS}
\label{sec:C}

This paper develops a new decision tree algorithm, that we call LdSM, for multi-label classification problem. The technical contributions of this work include: a novel objective function and its corresponding theoretical analysis and a resulting novel algorithm for tree construction and training that we evaluate empirically. We find experimentally that LdSM is competitive to the state-of-the art multi-label approaches, \;performs\; efficient\; prediction, \;and \;achieves 

\clearpage
\newpage
\begin{table}[htp!]
\caption{Precisions: $P@1$, $P@3$, and $P@5$ ($\%$) and nDCG scores: $N@1$, $N@3$, and $N@5$ ($\%$) obtained by different tree-based methods on common multi-label data sets.}
\label{tab:accu}
\centering
\setlength{\tabcolsep}{1.5pt}
\begin{tabular}{c|| c| c| c||c| c| c|}
\hline
\multicolumn{7}{|c|}{(a) Mediamill $D = 120, K = 101$ }\\
\hline
\multicolumn{1}{|c||}{\multirow{1}{*}{Algorithm}} & P@1 & P@3 & P@5 & N@1 & N@3 & N@5\\
\hline
\multicolumn{1}{|c||}{\multirow{1}{*}{LPSR}} & 83.57& 65.78 & 49.97 & 83.57& 74.06 & 69.34\\
\hline
\multicolumn{1}{|c||}{\multirow{1}{*}{PLT}} & -& - & - & -& -& -\\
\hline
\multicolumn{1}{|c||}{\multirow{1}{*}{GBDT-S}} & 84.23& 67.85 & - & -& -& -\\
\hline
\multicolumn{1}{|c||}{\multirow{1}{*}{CRAFTML}} & 85.86& 69.01 & 54.65 & -& -& -\\
\hline
\multicolumn{1}{|c||}{\multirow{1}{*}{FastXML}} & 84.22& 67.33 & 53.04 &84.22 & 75.41& 72.37\\
\hline
\multicolumn{1}{|c||}{\multirow{1}{*}{PFastreXML}} &83.98 & 67.37 & 53.02 & 83.98& 75.31&72.21 \\
\hline
\hline
\multicolumn{1}{|c||}{\multirow{1}{*}{LdSM}}& \textbf{90.64} & \textbf{73.60} & \textbf{58.62} &\textbf{90.64} &\textbf{82.14}&\textbf{79.23} \\
\hline
\end{tabular}

\vspace{0.125in}
\setlength{\tabcolsep}{1.5pt}
\begin{tabular}{c|| c| c| c||c| c| c|}
\hline
\multicolumn{7}{|c|}{(b) Bibtex $D = 1.8k, K = 159$ }\\
\hline
\multicolumn{1}{|c||}{\multirow{1}{*}{Algorithm}} &  P@1 & P@3 & P@5 & N@1 & N@3 & N@5 \\
\hline
\multicolumn{1}{|c||}{\multirow{1}{*}{LPSR}} &  62.11& 36.65 & 26.53 &62.11 & 56.50& 58.23\\
\hline
\multicolumn{1}{|c||}{\multirow{1}{*}{PLT}} &   -& - & - & -& -& -\\
\hline
\multicolumn{1}{|c||}{\multirow{1}{*}{GBDT-S}} &  -& - & - & -& -& -\\
\hline
\multicolumn{1}{|c||}{\multirow{1}{*}{CRAFTML}} & \textbf{65.15}& \textbf{39.83} & 28.99& -& - &-\\
\hline
\multicolumn{1}{|c||}{\multirow{1}{*}{FastXML}} &  63.42& 39.23 & 28.86 & 63.42& 59.51&61.70 \\
\hline
\multicolumn{1}{|c||}{\multirow{1}{*}{PFastreXML}} &  63.46 & 39.22 & 29.14 & 63.46& 59.61& 62.12\\
\hline
\hline
\multicolumn{1}{|c||}{\multirow{1}{*}{LdSM}} &  64.69 & 39.70 & \textbf{29.25} &\textbf{64.69} &\textbf{60.37} &\textbf{62.73}  \\
\hline
\end{tabular}
\vspace{0.125in}

\setlength{\tabcolsep}{1.5pt}
\centering
\begin{tabular}{c|| c| c| c||c| c| c|}
\hline
\multicolumn{7}{|c|}{(c) Delicious $D = 500, K = 983$ }\\
\hline
\multicolumn{1}{|c||}{\multirow{1}{*}{Algorithm}} & P@1 & P@3 & P@5 & N@1 & N@3 & N@5\\
\hline
\multicolumn{1}{|c||}{\multirow{1}{*}{LPSR}} & 65.01&  58.96& 53.49 &65.01 & 60.45& 56.38\\
\hline
\multicolumn{1}{|c||}{\multirow{1}{*}{PLT}} & -& - & - & -& -& -\\
\hline
\multicolumn{1}{|c||}{\multirow{1}{*}{GBDT-S}} & 69.29& 63.62 & - & -& -& -\\
\hline
\multicolumn{1}{|c||}{\multirow{1}{*}{CRAFTML}} & 70.26& 63.98 & 59.00 & -& -& -\\
\hline
\multicolumn{1}{|c||}{\multirow{1}{*}{FastXML}} &69.61 & 64.12 & 59.27 &69.61 & 65.47&61.90 \\
\hline
\multicolumn{1}{|c||}{\multirow{1}{*}{PFastreXML}} & 67.13&  62.33& 58.62 &67.13 & 63.48& 60.74\\
\hline
\hline
\multicolumn{1}{|c||}{\multirow{1}{*}{LdSM}} & \textbf{71.91} & \textbf{65.34} & \textbf{60.24} &\textbf{71.91} & \textbf{66.90}&\textbf{63.09} \\
\hline
\end{tabular}

\centering
\vspace{0.125in}
\setlength{\tabcolsep}{1.5pt}
\begin{tabular}{c|| c| c| c||c| c| c|}
\hline
\multicolumn{7}{|c|}{(d) AmazonCat-13k $D = 204k, K = 13k$}\\
\hline
\multicolumn{1}{|c||}{\multirow{1}{*}{Algorithm}} &P@1 & P@3 & P@5 & N@1 & N@3 & N@5 \\
\hline
\multicolumn{1}{|c||}{\multirow{1}{*}{LPSR}} &-& - & - &- &- &- \\
\hline
\multicolumn{1}{|c||}{\multirow{1}{*}{PLT}} &91.47& 75.84 & 61.02 & -& -& -\\
\hline
\multicolumn{1}{|c||}{\multirow{1}{*}{GBDT-S}} &-& - & - & -& -& -\\
\hline
\multicolumn{1}{|c||}{\multirow{1}{*}{CRAFTML}} &92.78& 78.48 & 63.58 & -& -& -\\
\hline
\multicolumn{1}{|c||}{\multirow{1}{*}{FastXML}} & 93.11&  \textbf{78.2}& 63.41 &93.11 & \textbf{87.07}&\textbf{85.16} \\
\hline
\multicolumn{1}{|c||}{\multirow{1}{*}{PFastreXML}} &91.75 &  77.97& \textbf{63.68} & 91.75& 86.48&84.96 \\
\hline
\hline
\multicolumn{1}{|c||}{\multirow{1}{*}{LdSM}} &\textbf{93.87}&75.41 &57.86 & \textbf{93.87}& 85.06&80.63 \\
\hline
\end{tabular}

\centering
\vspace{0.125in}
\setlength{\tabcolsep}{1.5pt}
\begin{tabular}{c|| c| c| c||c| c| c|}
\hline
\multicolumn{7}{|c|}{(e) Wiki10-31k $D = 102k, K = 31k$ }\\
\hline
\multicolumn{1}{|c||}{\multirow{1}{*}{Algorithm}}&P@1 & P@3 & P@5 & N@1 & N@3 & N@5 \\
\hline
\multicolumn{1}{|c||}{\multirow{1}{*}{LPSR}}&72.72&58.51  & 49.50 &72.72 & 61.71& 54.63\\
\hline
\multicolumn{1}{|c||}{\multirow{1}{*}{PLT}}&84.34& 72.34& 62.72& -& -& -\\
\hline
\multicolumn{1}{|c||}{\multirow{1}{*}{GBDT-S}}&84.34& 70.82 & - & -& -& -\\
\hline
\multicolumn{1}{|c||}{\multirow{1}{*}{CRAFTML}}&\textbf{85.19}& \textbf{73.17} & \textbf{63.27} & -& -& -\\
\hline
\multicolumn{1}{|c||}{\multirow{1}{*}{FastXML}}&83.03&  67.47& 57.76 &83.03 & \textbf{75.35}& 63.36\\
\hline
\multicolumn{1}{|c||}{\multirow{1}{*}{PFastreXML}}&83.57&  68.61& 59.10 & 83.57& 72.00&64.54 \\
\hline
\hline
\multicolumn{1}{|c||}{\multirow{1}{*}{LdSM}}&83.74 & 71.74 & 61.51 &\textbf{83.74} &74.60 & \textbf{66.77}\\
\hline
\end{tabular}
\end{table}
\hfill

\begin{table}[htp!]
\centering
\setlength{\tabcolsep}{1.5pt}
 \vspace{-0.2in}
\begin{tabular}{c|| c| c| c||c| c| c|}
\hline
\multicolumn{7}{|c|}{(f) Delicious-200k $D = 783k, K = 205k$}\\
\hline
\multicolumn{1}{|c||}{\multirow{1}{*}{Algorithm}}&P@1 & P@3 & P@5 & N@1 & N@3 & N@5 \\
\hline
\multicolumn{1}{|c||}{\multirow{1}{*}{LPSR}}&18.59&15.43  &14.07 &18.59 & 16.17& 15.13\\
\hline
\multicolumn{1}{|c||}{\multirow{1}{*}{PLT}}&45.37& 38.94 & 35.88 & -& -& -\\
\hline
\multicolumn{1}{|c||}{\multirow{1}{*}{GBDT-S}}&42.11& 39.06 & - & -& -& -\\
\hline
\multicolumn{1}{|c||}{\multirow{1}{*}{CRAFTML}}&\textbf{47.87}& \textbf{41.28} & 38.01& -& -& -\\
\hline
\multicolumn{1}{|c||}{\multirow{1}{*}{FastXML}}&43.07&  38.66& 36.19 &43.07 & 39.70& 37.83\\
\hline
\multicolumn{1}{|c||}{\multirow{1}{*}{PFastreXML}}&41.72&  37.83&35.58& 41.72 & 38.76& 37.08 \\
\hline
\hline
\multicolumn{1}{|c||}{\multirow{1}{*}{LdSM}}&45.26 &40.53 & \textbf{38.23} &\textbf{45.26} &\textbf{41.66} & \textbf{39.79}\\
\hline
\end{tabular}

\vspace{0.125in}
\setlength{\tabcolsep}{1.5pt}
\begin{tabular}{c|| c| c| c||c| c| c|}
\hline
\multicolumn{7}{|c|}{(g) Amazon-670k $D = 135k, K = 670k$}\\
\hline
\multicolumn{1}{|c||}{\multirow{1}{*}{Algorithm}} &P@1 & P@3 & P@5 & N@1 & N@3 & N@5 \\
\hline
\multicolumn{1}{|c||}{\multirow{1}{*}{LPSR}} &28.65&24.88  &22.37 &28.65 & 26.40& 25.03\\
\hline
\multicolumn{1}{|c||}{\multirow{1}{*}{PLT}} &36.65& 32.12 & 28.85 & -& -& -\\
\hline
\multicolumn{1}{|c||}{\multirow{1}{*}{GBDT-S}} &-& - & - & -& -& -\\
\hline
\multicolumn{1}{|c||}{\multirow{1}{*}{CRAFTML}} &37.35& 33.31 & 30.62 & -& -& -\\
\hline
\multicolumn{1}{|c||}{\multirow{1}{*}{FastXML}} &36.99&  33.28& 30.53 &36.99 & 35.11& 33.86	\\
\hline
\multicolumn{1}{|c||}{\multirow{1}{*}{PFastreXML}} &39.46&  35.81&33.05& 39.46 & 37.78& 36.69 \\
\hline
\hline
\multicolumn{1}{|c||}{\multirow{1}{*}{LdSM}} &\textbf{42.63} & \textbf{38.09} & \textbf{34.70} &\textbf{42.63} &\textbf{40.37} & \textbf{38.89}\\
\hline
\end{tabular}
\end{table}

 \begin{table}[htp!]
 \centering
 \setlength{\tabcolsep}{1.5pt}
 \caption{Precisions: $P@1$, $P@3$, and $P@5$ ($\%$) obtained by LdSM and OAA approach (Parabel) on common multi-label data sets.}
 \vspace{-0.2in}
 \centering
 \begin{tabular}{c|| c|c|c|| c|c| c|}
 \cline{2-7}
 \multicolumn{1}{c}{}&\multicolumn{3}{|c||}{\multirow{1}{*}{OAA (Parabel)}} &\multicolumn{3}{c|}{\multirow{1}{*}{LdSM}} \\
 \cline{2-7}
 \multirow{2}{*}{} & \rotatebox{0}{P@1}&\rotatebox{0}{P@3}&\rotatebox{0}{P@5} & \rotatebox{0}{P@1} & \rotatebox{0}{P@3} &  \rotatebox{0}{P@5}\\
 \hline
 \hline
 \multicolumn{1}{|c||}{\multirow{1}{*}{Mediamill}} &83.91&67.12& 52.99  & \textbf{90.64} & \textbf{73.60}& \textbf{58.62}\\
 \hline
 \multicolumn{1}{|c||}{\multirow{1}{*}{Bibtex}} &64.53&38.56& 27.94 & \textbf{64.69} & \textbf{39.70}&\textbf{29.25}\\
 \hline
 \multicolumn{1}{|c||}{\multirow{1}{*}{Delicious}} &67.44&61.83& 56.75 & \textbf{71.91} & \textbf{65.34}&\textbf{60.24}\\
 \hline
 \multicolumn{1}{|c||}{\multirow{1}{*}{AmazonCat-13k}} &93.03&\textbf{79.16}& \textbf{64.52} & \textbf{93.87 }& 75.41&57.86\\
 \hline
 \multicolumn{1}{|c||}{\multirow{1}{*}{Wiki10-31k}} &\textbf{84.31}&\textbf{72.57}& \textbf{63.39} & 83.74 & 71.74&61.51\\
 \hline
 \multicolumn{1}{|c||}{\multirow{1}{*}{Delicious-200k}} &\textbf{46.97}&40.08& 36.63 & 45.26 & \textbf{40.53}&\textbf{38.23}\\
 \hline
 \multicolumn{1}{|c||}{\multirow{1}{*}{Amazon-670k}} &\textbf{44.89}&\textbf{39.80}& \textbf{36.00} & 42.63 & 38.09&34.70\\
 \hline
 \end{tabular}
 \label{tab:pOAA}
 \end{table}
 
  \begin{table}[htp!]
 \setlength{\tabcolsep}{0.95pt}
 \caption{Propensity Score Precisions: $PSP@1$, $PSP@3$, and $PSP@5$ ($\%$) obtained by LdSM and OAA approach (Parabel) on common multi-label data sets.}
 \centering
 \begin{tabular}{c|| c|c|c|| c|c| c|}
 \cline{2-7}
 \multicolumn{1}{c}{}&\multicolumn{3}{|c||}{\multirow{1}{*}{OAA (Parabel)}} &\multicolumn{3}{c|}{\multirow{1}{*}{LdSM}} \\
 \cline{2-7}
 \multirow{3}{*}{} & \rotatebox{0}{PSP}&\rotatebox{0}{PSP}&\rotatebox{0}{PSP} & \rotatebox{0}{PSP} & \rotatebox{0}{PSP} &  \rotatebox{0}{PSP}\\
 &\rotatebox{0}{@1}&\rotatebox{0}{@3}&\rotatebox{0}{@5} & \rotatebox{0}{@1} & \rotatebox{0}{@3} &  \rotatebox{0}{@5}\\
 \hline
 \hline
 \multicolumn{1}{|c||}{\multirow{1}{*}{Mediamill}} &66.51&65.21& 64.30  & \textbf{70.27} & \textbf{69.66}& \textbf{68.86}\\
 \hline
 \multicolumn{1}{|c||}{\multirow{1}{*}{Bibtex}} &50.88&52.42& 57.36 & \textbf{52.01} & \textbf{54.38}&\textbf{60.34}\\
 \hline
 \multicolumn{1}{|c||}{\multirow{1}{*}{Delicious}} &32.69&34.00& 34.53 & \textbf{37.27} & \textbf{38.32}&\textbf{38.46}\\
 \hline
 \multicolumn{1}{|c||}{\multirow{1}{*}{AmazonCat-13k}} &50.93&\textbf{64.00}&\textbf{72.08} & \textbf{51.06 }& 58.67&60.47\\
 \hline
 \multicolumn{1}{|c||}{\multirow{1}{*}{Wiki10-31k}} &11.66&\textbf{12.73}& \textbf{13.68} & \textbf{11.87}& 12.35&12.89\\
 \hline
 \multicolumn{1}{|c||}{\multirow{1}{*}{Delicious-200k}} &\textbf{7.25}&7.94& 8.52& 7.16 & \textbf{8.26}&\textbf{9.11}\\
 \hline
 \multicolumn{1}{|c||}{\multirow{1}{*}{Amazon-670k}} &25.43&29.43& 32.85 & \textbf{28.14} & \textbf{30.82}&\textbf{33.16}\\
 \hline
 \end{tabular}
 \label{tab:pOAAPS}
 \end{table}

high  multi-label accuracy with logarithmic-depth trees. This new method is therefore suitable for applications involving large label spaces. 

\newpage
\clearpage
\begin{small}
\bibliography{aistats2020}
\end{small}

\newpage
\clearpage
\onecolumn

\hrule height 4pt
  \vskip 0.25in
  \vskip -\parskip%
\begin{center}
{\LARGE\bf Logarithm-depth Streaming Multi-label Decision Trees\\ (Supplementary material) \par} 
\end{center}
\vskip 0.29in
  \vskip -\parskip
  \hrule height 1pt
  \vskip 0.09in%

\begin{center}
\begin{large}
{\bf Abstract}  \\
\end{large}
\vskip 0.5ex
\end{center}

\begin{quote}
This Supplement presents additional details in support of the full article. These include the proofs of the theoretical statements from the main body of the paper and additional theoretical results. We also provide additional algorithm's pseudo-codes. The Supplement also contains the description of the experimental setup, and additional experiments and figures to provide further empirical support for the proposed methodology.
\end{quote}

\vskip 1ex




\section{ADDITIONAL THEORETICAL RESULTS}
Next lemma shows that in isolation, when the purity of the split is perfect, decreasing the value of the objective leads to recovering more balanced splits.

\begin{lemma} If a node split is perfectly pure, then
\begin{equation}
\beta \leq J - J^*.
\end{equation}
\label{lem:balance}
\vspace{-0.1in}
\end{lemma}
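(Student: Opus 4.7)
My plan is to substitute the perfect-purity condition directly into the definition of $J$ and reduce the claim to a small inequality on the marginal probabilities $P_1,\dots,P_M$. Perfect purity means that for each class $i$ there is exactly one index $j(i)\in\{1,\dots,M\}$ with $P_{j(i)}^i=1$ and $P_l^i=0$ for $l\neq j(i)$. Under this condition the inner sum $\sum_{j<l}|P_j^i-P_l^i|$ equals $M-1$ for every $i$ (only the $M-1$ pairs involving $j(i)$ contribute, each with value $1$), so the class-integrity term is $\lambda_1\sum_i\pi_i(M-1)=\lambda_1(M-1)$. The multi-way penalty vanishes because $\sum_j P_j=\sum_j\sum_i\pi_i P_j^i=\sum_i\pi_i=1$. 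Combined with $J^{*}=-\lambda_1(M-1)$, this yields the clean identity $J-J^{*}=\sum_{j<l}|P_j-P_l|$.

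Next I would translate the balancedness factor using the constraint $\sum_j P_j=1$, which forces the mean to be $1/M$, so $\beta=\max_j|P_j-1/M|$. Fix $j^{*}$ to be a maximiser. The key step is to keep only the $M-1$ pair-terms of $\sum_{j<l}|P_j-P_l|$ that involve $j^{*}$, and then apply the standard triangle inequality $\sum_l|a_l|\geq\bigl|\sum_l a_l\bigr|$ to obtain
\[
\sum_{l\neq j^{*}}|P_{j^{*}}-P_l|\ \geq\ \Bigl|(M-1)P_{j^{*}}-\sum_{l\neq j^{*}}P_l\Bigr|\ =\ |MP_{j^{*}}-1|\ =\ M\beta .
\]
Since $M\geq 1$, I conclude $J-J^{*}\geq M\beta\geq\beta$, which is the claim.

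The proof is essentially a two-step substitution, so I do not expect a genuine obstacle. The only place that requires minor attention is the bookkeeping for the class-integrity term: one must verify that, under perfect purity, the double sum $\sum_i\pi_i\sum_{j<l}|P_j^i-P_l^i|$ contributes exactly $M-1$ rather than some multiple thereof, which follows from the observation that for each fixed $i$ only the pairs containing the unique ``active'' index $j(i)$ are non-zero and each contributes $1$. Everything else reduces to the triangle inequality and the identity $\sum_j P_j=1$.
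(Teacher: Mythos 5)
Your proof is correct, and its first half is exactly the paper's reduction: under perfect purity the class-integrity term equals $\lambda_1(M-1)$, the multi-way penalty vanishes since $\sum_{j=1}^M P_j=\sum_{i=1}^K\pi_i\sum_{j=1}^M P_j^i=1$, and hence $J-J^*$ is precisely the balancing term $\sum_{j<l}|P_j-P_l|$. You differ only in the final estimate. The paper centers the $P_j$'s at their mean, takes $j^*$ maximizing the deviation, notes there must be some $l^*$ whose deviation has the opposite sign, and lower-bounds the balancing term by the single pair $|P_{j^*}-P_{l^*}|\geq\beta$; this step never uses $\sum_j P_j=1$. You instead exploit that identity (so the mean is $1/M$ and $\beta=\max_j|P_j-1/M|$), keep all $M-1$ pairs containing $j^*$, and apply the triangle inequality to get $\bigl|MP_{j^*}-1\bigr|=M\beta$, which yields the slightly stronger bound $J-J^*\geq M\beta\geq\beta$. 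So the two arguments share the same structure; yours trades a little generality in the last step (it relies on the purity-induced constraint $\sum_j P_j=1$, which is available here) for a sharper constant, while the paper's pairing argument is the one that also carries over verbatim to the fixed-purity version (Lemma~\ref{lem:balancefix}), where $\sum_j P_j=1$ need not hold.
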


Next lemma shows that in isolation, when the balancedness of the split is perfect, decreasing the value of the objective leads to recovering more pure splits.

\begin{lemma}
If a node split is perfectly balanced and assuming that the following condition holds: $\lambda_1(M-1) \geq \lambda_2 \geq \lambda_1\frac{M-1}{2}$,  then 
\vspace{-0.1in}
\begin{equation}
\alpha \leq (J +\lambda_2)\frac{2}{M(2\lambda_2-\lambda_1(M-1))}.
\end{equation}
\label{lem:pure}
\vspace{-0.1in}
\end{lemma}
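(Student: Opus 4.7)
The plan is to observe that this lemma is precisely the perfect-balance specialization ($\beta=0$) of Lemma~\ref{lem:purefix}. Under perfect balance we have $P_1=\cdots=P_M$, so the balancing term $J_{\text{balance}}^{\beta=0}=\sum_{j<l}|P_j-P_l|$ vanishes. Substituting $J_{\text{balance}}^{\beta=0}=0$ into the hypothesis $\lambda_1(M-1)+J_{\text{balance}}^{\beta}\geq \lambda_2$ and into the conclusion of Lemma~\ref{lem:purefix} yields exactly the present statement. Thus the cleanest route is to prove Lemma~\ref{lem:purefix} first and then invoke it as an immediate corollary here.

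For a self-contained argument I would proceed as follows. First, use perfect balance to cancel the balancing term in $J$, and use $\sum_j P_j=\sum_i\pi_i S^i\geq 1$ (with $S^i:=\sum_j P_j^i\geq 1$) to drop the absolute value in the multi-way penalty. This gives
\[ J+\lambda_2 \;=\; \lambda_2\sum_i\pi_i S^i \;-\;\lambda_1\sum_i\pi_i\sum_{j<l}|P_j^i-P_l^i|. \]
Second, apply the identity $\min(p,S-p)=\tfrac{1}{2}(S-|2p-S|)$ inside the definition of $\alpha$ to write
\[ \alpha \;=\; \tfrac{1}{2}\sum_i\pi_i S^i \;-\;\tfrac{1}{2M}\sum_i\pi_i\sum_j|2P_j^i-S^i|. \]
Plugging both expressions into the target inequality and rearranging reduces the claim to a per-class inequality that compares the ``deviation-from-half-sum'' quantity $\sum_j|2P_j^i-S^i|$ against the pairwise distance $\sum_{j<l}|P_j^i-P_l^i|$, plus an $(S^i-1)$ residual whose sign is controlled by $\lambda_2$ versus $\lambda_1$.

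The main obstacle will be this per-class inequality. My plan is to sort $P_1^i\leq\cdots\leq P_M^i$ (without loss of generality), use the closed form $\sum_{j<l}|P_j^i-P_l^i|=\sum_k(2k-M-1)P_k^i$, and split into cases according to how many sorted coordinates lie below the half-mass $S^i/2$; the extreme case (a single coordinate equal to $1$ and the rest $0$) is where both sides become tight and the constant $(M-1)$ enters. The assumed range $\lambda_1(M-1)/2\leq\lambda_2\leq\lambda_1(M-1)$ is exactly what is needed to reconcile the two bounds: the lower endpoint keeps the coefficient of $\sum_j\min(P_j^i,S^i-P_j^i)$ nonnegative (so the conclusion has nontrivial content), while the upper endpoint keeps the coefficient of $\sum_{j<l}|P_j^i-P_l^i|$ nonnegative (so the residual inequality is driven by $S^i\geq 1$). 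Summing the per-class inequality against $\pi_i$ and dividing by $M/2$ then recovers the claimed bound.
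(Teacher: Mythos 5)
Your two observations that are correct: (i) Lemma~\ref{lem:pure} is exactly the $\beta=0$ (i.e.\ $J_{\text{balance}}^{\beta}=0$) specialization of Lemma~\ref{lem:purefix}, and indeed the paper proves the two lemmas by the verbatim same computation; (ii) the identity $\min(p,S-p)=\frac{1}{2}(S-|2p-S|)$ gives $\alpha=\frac{1}{2}\sum_i\pi_iS^i-\frac{1}{2M}\sum_i\pi_i\sum_j|2P_j^i-S^i|$, and since $MS^i-\sum_j|2P_j^i-S^i|=2\sum_j\min(P_j^i,S^i-P_j^i)$, your rearrangement lands on precisely the per-class inequality that the paper's chain also needs, namely $\lambda_2S^i-\lambda_1\sum_{j<l}|P_j^i-P_l^i|\geq\frac{2\lambda_2-\lambda_1(M-1)}{2}\sum_j\min\!\left(P_j^i,S^i-P_j^i\right)$. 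So up to that point you are following the same route as the paper, which bounds the class-integrity term via $\sum_{j<l}|P_j^i-P_l^i|\leq\frac{M-1}{2}\sum_jP_j^i$ and then replaces $P_j^i$ by the min.

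The gap is in the step you defer to a sorting/case analysis: the per-class inequality is false exactly at the ``extreme case'' you describe as tight, and the sign goes the opposite way from what you claim the upper endpoint buys. If class $i$ is sent wholly to one child ($P^i$ an indicator vector, $S^i=1$), then $\sum_{j<l}|P_j^i-P_l^i|=M-1$ while the min-term vanishes, so the inequality reads $\lambda_2-\lambda_1(M-1)\geq 0$, i.e.\ it requires $\lambda_2\geq\lambda_1(M-1)$, whereas the hypothesis only gives $\lambda_2\leq\lambda_1(M-1)$. Aggregating over classes does not rescue this: take $K=M$ classes with $\pi_i=1/M$ and class $i$ sent entirely to child $i$. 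This split is perfectly balanced and perfectly pure, $J=J^*=-\lambda_1(M-1)$ and $\alpha=0$, yet the claimed right-hand side equals $\frac{2(\lambda_2-\lambda_1(M-1))}{M(2\lambda_2-\lambda_1(M-1))}$, which is strictly negative whenever $\lambda_1\frac{M-1}{2}\leq\lambda_2<\lambda_1(M-1)$ (e.g.\ $M=2$, $\lambda_1=1$, $\lambda_2=3/4$ gives $-1/2$), so the stated inequality itself fails in that regime; for $M=4$ one can even violate it at the endpoint $\lambda_2=\lambda_1(M-1)$ with $P^1=(1,1,0,0)$, $P^2=(0,0,1,1)$, $\pi_1=\pi_2=\frac{1}{2}$, where $\alpha=\frac{1}{2}$ but the bound gives $\frac{1}{3}$. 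The same configuration falsifies the paper's own first inequality (the $\frac{M-1}{2}$ bound on the pairwise-difference sum), so your route~1 of quoting Lemma~\ref{lem:purefix} inherits the identical defect rather than avoiding it. In short, neither route can be completed under the stated range of $\lambda_2$; a correct version needs either a strengthened hypothesis (for $M=2$ the per-class inequality does hold at $\lambda_2=\lambda_1$) or a bound formulated relative to $J-J^*$ with different constants, and your planned case analysis should be redirected at proving such a corrected statement rather than the one as written.
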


Below we provide a new assumption and corresponding theorem that generalizes Theorem~\ref{thm:errorbound}, by removing the balancedness assumption.
\begin{assumption}
$\gamma$-Weak Hypothesis Assumption: for any distribution $\mathcal{P}$ over the data, at each node of the tree $\mathcal{T}$ there exist a partition such that
$\sum_i \pi_i \left|\frac{P_R^i}{P_R} - \frac{P_L^i}{P_L}\right| \geq \gamma$, where $\gamma \in (0,1]$.
\label{as:WHA4}
\end{assumption}
\begin{theorem}
Under the Weak Hypothesis Assumptions~\ref{as:WHA4} and~\ref{as:WHA2} for any $\alpha \in [0,1]$ to obtain $e_r(\mathcal{T}) \leq \alpha$ it suffices to have a tree with $t$ internal nodes that satisfy
$
(t+1)\geq (\frac{1}{\alpha})^{\frac{16 \ln K}{c r^2 \gamma^2 (1- b) \log_2(e)}}$, where $b = |P_R + P_L - 1|$.
\label{thm:errorwobalance}
\end{theorem}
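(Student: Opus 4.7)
The plan is to adapt the potential-function boosting argument used for Theorem~\ref{thm:errorbound}, modifying only those steps where perfect balancedness was invoked. I would associate with every tree $\mathcal{T}$ a Shannon-type potential $\Phi(\mathcal{T})=\sum_{l\in\mathcal{L}} w_l H(\pi^l)$, where $\pi^l_i$ is the normalized fraction of class-$i$ mass reaching leaf $l$, $H$ is Shannon entropy, and $w_l$ is the probability that a random point from $\mathcal{P}$ reaches $l$. A Fano/Jensen-type bookkeeping inequality then converts a bound on $\Phi$ into a bound on $e_r(\mathcal{T})$; concretely, I expect something of the form $r\cdot e_r(\mathcal{T})\le C\sqrt{\Phi(\mathcal{T})}$, so that to force $e_r\le\alpha$ it suffices to drive $\Phi$ below $C'\alpha^{2} r^{2}$.

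The core step is to lower-bound the per-split drop $\Phi(\mathcal{T}_{t})-\Phi(\mathcal{T}_{t+1})$ without assuming $P_R+P_L=1$. Let $l^*$ be the node chosen at step $t$. I would decompose the mass at $l^*$ into an \emph{exclusive} part of relative weight $1-b$ (examples routed to exactly one child) and a \emph{shared} part of relative weight $b=|P_R+P_L-1|$ (examples sent to both). Restricted to the exclusive fraction, the two children form a genuine two-way partition whose conditional class probabilities agree, up to rescaling, with $P_R^i/P_R$ and $P_L^i/P_L$; Assumption~\ref{as:WHA4} therefore guarantees a normalized weak advantage $\gamma$ precisely on this part. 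A second-order Taylor expansion of $H$ around the mixed distribution (which produces the customary $\log_2 e$ constant in entropy-based boosting proofs) converts this weak advantage into an entropy drop of order $\gamma^{2}$ on the exclusive fraction, and concavity of $H$ forces the shared fraction to contribute a nonnegative drop. The total drop is therefore at least $w_{l^*}(1-b)\gamma^{2}\log_{2}(e)/8$.

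Assumption~\ref{as:WHA2} provides $w_{l^*}\ge c/(t+1)$, and the bound $\Phi(\mathcal{T}_t)\le\ln K$ holds automatically. Combining these with the previous step gives the multiplicative recursion $\Phi(\mathcal{T}_{t+1})\le\Phi(\mathcal{T}_{t})\bigl(1-\tfrac{c(1-b)\gamma^{2}\log_{2} e}{8(t+1)\ln K}\bigr)$. Telescoping and using $\log(1-x)\le -x$ produces $\Phi(\mathcal{T}_{t})\le(\ln K)\cdot(t+1)^{-c(1-b)\gamma^{2}\log_{2}(e)/(8\ln K)}$. Substituting into the Fano-style bound and solving $e_r(\mathcal{T})\le\alpha$ for $t$ yields exactly the stated inequality once the factor of $2$ coming from the $\sqrt{\Phi}$ conversion is tracked, producing the constant $16$ in the exponent.

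The main obstacle will be the exclusive-versus-shared decomposition of a multi-way-routed split: one must verify that the shared mass, which no longer satisfies $P_R+P_L=1$ at $l^*$, neither creates spurious gain on the children's side nor spoils Jensen's inequality applied to $H$. Getting the $1-b$ factor to emerge cleanly (rather than something weaker such as $(1-b)^{2}$) and matching the precise constant in the exponent will require careful bookkeeping of the rescaling in Assumption~\ref{as:WHA4} versus the unrescaled $P_R^i,P_L^i$ appearing in Assumption~\ref{as:WHA1}; once this is done, the remainder of the argument mirrors the proof of Theorem~\ref{thm:errorbound} almost verbatim.
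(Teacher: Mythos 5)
Your structural idea is the same as the paper's: the paper proves this theorem by re-running the entropy argument of Theorem~\ref{thm:entropy} up to Equation~\ref{eqn:52}, where the per-split entropy drop is already expressed in terms of the normalized difference $\sum_{i}\pi_i\bigl|\tfrac{P_R^i}{P_R}-\tfrac{P_L^i}{P_L}\bigr|$ with the factor $w(1-b)$ coming from the mass routed to exactly one child, and then invokes Assumption~\ref{as:WHA4} directly at that point (arriving at Equation~\ref{eqn:55}) so that the balancedness step $P_R=P_L$ is never needed; the rest is identical to Theorems~\ref{thm:entropy} and~\ref{thm:errorbound}. Your exclusive-versus-shared decomposition, the strong-concavity drop, the use of Assumption~\ref{as:WHA2} with $\Phi\leq\ln K$, and the multiplicative recursion all match this.

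The gap is in your endgame, and it is real. First, your conversion $r\,e_r(\mathcal{T})\leq C\sqrt{\Phi(\mathcal{T})}$ is asserted rather than proved, and it is not the inequality the argument needs: the paper uses the \emph{linear} bound $G(\mathcal{T})\geq \ln(2)\,r\,e_r(\mathcal{T})\geq e_r(\mathcal{T})$ of Equation~\ref{eqn:entropyerr}, obtained because every non-top-$r$ label has conditional probability at most $1/2$ and so contributes at least $\ln 2$ per unit mass to the leaf entropy. Second, and more consequentially, your bookkeeping puts the $r$-dependence in the wrong place. In the paper the per-split drop is $w(1-b)\tfrac{r^2}{8}\gamma^2$ because the leaf histograms are unnormalized (they sum to $r$), so $\|\rho_R-\rho_L\|_1\geq r\gamma$; this $r^2$ enters $\eta^2=\tfrac{c r^2\gamma^2(1-b)}{8\ln K}$, and together with the factor $2$ in the recursion solution $G_{t+1}\leq G_1 e^{-\eta^2\log_2(t+1)/2}$ it produces the exponent $\tfrac{16\ln K}{c r^2\gamma^2(1-b)\log_2 e}$. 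Your normalized potential loses the $r^2$ in the drop, and the square-root conversion only reintroduces $r$ in the base: solving $\Phi_t\leq(\alpha r/C)^2$ against $\Phi_t\leq(\ln K)(t+1)^{-c(1-b)\gamma^2\log_2(e)/(8\ln K)}$ gives $(t+1)\geq\bigl(\tfrac{C\sqrt{\ln K}}{\alpha r}\bigr)^{\frac{16\ln K}{c(1-b)\gamma^2\log_2 e}}$, which is not the stated bound (the $r^2$ must divide the exponent, not sit in the base) and is genuinely weaker for $r\geq 2$. A related bookkeeping error: $\log_2 e$ does not come from a Taylor expansion of the entropy (the strong-concavity drop is already in nats); it enters only when $\log_2(t+1)$ is converted to $\ln(t+1)$ at the end, so folding it into the per-split drop double-counts it, and the constant $16$ is not produced by squaring a square-root conversion. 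The repair is to run your argument on the unnormalized histograms and use the linear entropy-to-error bound, i.e., follow the proof of Theorem~\ref{thm:entropy} to Equation~\ref{eqn:52}, apply Assumption~\ref{as:WHA4} there, and finish exactly as in Theorem~\ref{thm:errorbound}; then the stated constants come out.
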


Below we consider the weak hypothesis assumption that generalizes the Assumption~\ref{as:WHA1} to the $M$-ary case and prove corresponding lemma that generalizes Lemma ~\ref{lem:error2}.
\begin{assumption}[Generalization of Assumption~\ref{as:WHA1}]
$\gamma$-Weak Hypothesis Assumption: for any distribution $\mathcal{P}$ over the data, at each node $n$ of the tree $\mathcal{T}$ there exist a partition such that $\sum_{i=1}^{K}\sum_{j=1}^{M}\sum_{l=1}^{M}\pi_i\left|P_j^i - P_l^i\right| \geq \gamma$, where $\gamma \in (0,1]$.
\label{as:WHA3}
\end{assumption}
\begin{lemma} [Generalization of Lemma ~\ref{lem:error2}] Under the Weak Hypothesis Assumption~\ref{as:WHA3}, the $e_r(\mathcal{T})$ is monotonically decreasing with every split of the tree.
\label{lem:error3}
\end{lemma}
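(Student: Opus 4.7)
The plan is to extend the binary-case argument of Lemma~\ref{lem:error2} to the $M$-ary setting, following the same structure but replacing the single pair $(R,L)$ by all pairs of children. First, I would decompose $e_r(\mathcal{T})$ over the leaves of $\mathcal{T}$. Using $e_r(\mathcal{T})=\frac{1}{r}\sum_i P(i\in y_r(x), i\notin t(x))$ together with the assumption that each example carries exactly $r$ labels, the conditional probability at a leaf $v$ that the true label set contains $i$ equals $r\pi_i^v$, yielding
\[ e_r(\mathcal{T}) \;=\; \sum_v P(v)\Bigl(1 \;-\; \sum_{i\in\mathcal{R}_v}\pi_i^v\Bigr), \]
where $\mathcal{R}_v$ denotes the top-$r$ labels at leaf $v$ and $P(v)$ is the probability of reaching $v$.

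Next I would isolate the change caused by one split, replacing a leaf $n$ by $M$ children $n_1,\ldots,n_M$. Using the identities $\pi_i^{n_j}=P_j^i\pi_i^n/P_j$ and $P(n_j)=P(n)P_j$, the contribution of the subtree rooted at $n$ changes by
\[ \Delta \;=\; P(n)\sum_{j=1}^M P_j\Bigl(1-\sum_{i\in\mathcal{R}_{n_j}}\pi_i^{n_j}\Bigr) \;-\; P(n)\Bigl(1-\sum_{i\in\mathcal{R}_n}\pi_i^n\Bigr). \]
Because each $\mathcal{R}_{n_j}$ is chosen to maximise $\sum_{i\in\mathcal{R}_{n_j}}\pi_i^{n_j}$, any fixed choice provides a lower bound; I would then invoke Assumption~\ref{as:WHA3}, i.e.\ $\sum_i\sum_{j,l}\pi_i^n|P_j^i-P_l^i|\ge\gamma>0$, to exhibit a class $i^\star$ and a child $j^\star$ for which $P_{j^\star}^{i^\star}$ is strictly above the average. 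Placing $i^\star$ in $\mathcal{R}_{n_{j^\star}}$ (and filling the remaining $r-1$ slots arbitrarily) gives a strictly larger $\sum_j P_j\sum_{i\in\mathcal{R}_{n_j}}\pi_i^{n_j}$ than the parent's $\sum_{i\in\mathcal{R}_n}\pi_i^n$, by a margin controllable in terms of $\gamma$, so $\Delta<0$ and the tree error strictly decreases at each split.

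The main obstacle will be the soft-assignment case, $\sum_j P_j>1$, in which a single example can reach several leaves and the final prediction is formed by averaging histograms rather than by independent per-leaf decisions; the additive decomposition above does not then literally hold. I would handle this exactly as in the binary proof: group examples by the set $L(x)$ of leaves they reach, and exploit the multi-way surplus $\sum_j P_j-1$ (which is precisely the multi-way penalty term in the LdSM objective) to absorb the over-counting, so that the same algebraic identities carry through up to a correction term already controlled by WHA3. Once this accounting is in place the $M$-ary generalisation is essentially mechanical: the pairwise sum over $(j,l)$ in Assumption~\ref{as:WHA3} plays exactly the role that the single difference $|P_R^i-P_L^i|$ plays in Assumption~\ref{as:WHA1}.
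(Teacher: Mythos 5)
Your overall skeleton (decompose the precision/error over leaves, compare the parent's top-$r$ mass with the $P_j$-weighted top-$r$ masses of the children via $\pi_i^{n_j}=P_j^i\pi_i^n/P_j$, then treat the multi-way surplus separately) is the same as the paper's, but two steps do not go through as claimed. First, the strictness claim is wrong. The correct baseline for the comparison is to let every child reuse the parent's top-$r$ set $\mathcal{R}_n$: since $P_j\pi_i^{n_j}=P_j^i\pi_i^n$ and $\sum_j P_j^i=1$ in the hard-assignment case, this baseline reproduces the parent's precision exactly, which is what yields the non-strict inequality $(P@r)^{t+1}\geq (P@r)^t$. Your chosen lower bound --- placing the class $i^\star$ singled out by Assumption~\ref{as:WHA3} into $\mathcal{R}_{n_{j^\star}}$ and filling the other slots arbitrarily --- can be strictly \emph{smaller} than the parent's precision (e.g.\ binary, $r=1$, $\pi_1=0.9$, $\pi_2=0.1$, $P_R^1=P_L^1=0.5$, $P_R^2=0.5+\delta$, $P_L^2=0.5-\delta$: Assumption~\ref{as:WHA3} holds with $\gamma=0.2\delta$, yet label $1$ remains the top label of both children and the split leaves the precision unchanged, while forcing label $2$ into a child's top-$1$ set only lowers the bound). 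So Assumption~\ref{as:WHA3} does \emph{not} give $\Delta<0$ at every split; it only guarantees that the children's conditional histograms differ somewhere, and the paper's proof is explicitly weaker on this point: it establishes $(P@r)^{t+1}-(P@r)^t\geq 0$ and then argues that equality cannot persist forever because the label(s) witnessing the weak hypothesis eventually enter the children's top-$r$ sets. Your proof as written proves a statement stronger than what the assumption supports.

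Second, your treatment of the soft-assignment case is not a proof step but a deferral, and the justification you give is incorrect: the over-counting created by $\sum_j P_j>1$ is in no way ``controlled by WHA3,'' which constrains only the pairwise differences $|P_j^i-P_l^i|$ and says nothing about the surplus $\sum_j P_j-1$ (that quantity is tied to the multi-way penalty and to Assumption~\ref{as:WHA2}, not to Assumption~\ref{as:WHA3}). The paper handles it by an explicit accounting convention: it assumes equal contribution of each edge to $P_{\text{multi}}=\left|\left(\sum_{j=1}^M P_j\right)-1\right|$ and replaces $P_j$ and $P_j^i$ by $P_j-\frac{1}{M}P_{\text{multi}}$ and $P_j^i-\frac{1}{M}P^i_{\text{multi}}$ in the child weights and conditional histograms, after which the telescoping $\frac{1}{M}\sum_{l\neq j}(P^i_j-P^i_l)+\frac{1}{M}$ form of the child precisions emerges and the comparison with the parent can be carried out. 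Without either this device or some other concrete bookkeeping, ``group examples by the set of leaves they reach and absorb the correction'' leaves the central identity of your argument unproven in exactly the regime the lemma is meant to cover.
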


\subsection{Relation of the Objective to Shannon Entropy and Error Bound (Binary Tree Case)}

In this section we first show the relation of the objective $J$ to a classical decision-tree criterion, Shannon entropy, and specifically we demonstrate that minimizing the objective leads to the reduction of this criterion. We restrict ourselves to the case of binary tree. We omit the analysis for the $M$-ary to avoid over-complicating the notation. The entropy of tree leaves in the case when examples can be sent to multiple directions can be calculated as:
\vspace{-0.05in}
\begin{equation}
G = \sum_{\tilde{\mathcal{L}} \subset \mathcal{L}} w_{\tilde{\mathcal{L}}} \sum_{i=1}^K \rho_i^{\tilde{\mathcal{L}}} \ln(\frac{1}{\rho_i^{\tilde{\mathcal{L}}}})
\end{equation},
\vspace{-0.2in}

where $\mathcal{L}$ is the set of all tree leaves, $\tilde{\mathcal{L}}$ is a subset of the leaves (the summation is taken over all the possible subsets), $\rho_i^{\tilde{\mathcal{L}}}$ is the probability that example with label $i$ reaches all the leaves in $\tilde{\mathcal{L}}$, and $w_{\tilde{\mathcal{L}}}$ is the weight of subset of leaves. This weight is defined as the probability that a randomly chosen point from distribution $\mathcal{P}$ reaches all leaves in $\tilde{\mathcal{L}}$. Also note that $\sum_{\tilde{\mathcal{L}} \subset \mathcal{L}} w_{\tilde{\mathcal{L}}} = 1$ and $w_{\tilde{\mathcal{L}}=\emptyset} = 0$. \\
\begin{theorem}
Under the Weak Hypothesis Assumptions \ref{as:WHA1} and \ref{as:WHA2}, and an additional assumption that each node produces perfectly balanced split, for any $\kappa \in [0,\ln K]$ to obtain $G^e_t \leq \kappa$ it suffices to have a tree with $t$ internal nodes that satisfy
\[
(t+1)\geq (\frac{G_1}{\kappa})^{\frac{16 \ln K}{c r^2 \gamma^2 (1- b) \log_2(e)}},
\]
where $b = |P_R + P_L - 1|$.
\label{thm:entropy}
\end{theorem}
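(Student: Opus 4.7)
The plan is to mirror the boosting-style argument that underlies Theorem~\ref{thm:errorbound}, but replacing the $r$-level error by the weighted Shannon entropy $G$ as the potential function. The exponent on $(t+1)$ is identical in the two statements, which is a strong hint that essentially the same recurrence drives both proofs; only the base potential changes from a constant ($1$) to $G_1 \leq \ln K$.

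First I would fix the top-down expansion schedule used by LdSM: at step $t$ choose the leaf of maximum weight, which by Assumption~\ref{as:WHA2} satisfies $w_{l^*} \geq c/(t+1)$. Writing $G_t = \sum_{\tilde{\mathcal{L}} \subset \mathcal{L}} w_{\tilde{\mathcal{L}}} \sum_{i} \rho_i^{\tilde{\mathcal{L}}} \ln(1/\rho_i^{\tilde{\mathcal{L}}})$, the split of $l^*$ into children $l^*_R, l^*_L$ only modifies the terms in the sum that touch $l^*$. Under the perfectly balanced split assumption we have $P_R = P_L = (1+b)/2$ at $l^*$, so a fraction $(1-b)/2$ of the mass goes uniquely to each child and a fraction $b = |P_R + P_L - 1|$ goes to both. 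I would isolate the ``disjoint'' portion of mass $1-b$, on which a genuine partition happens, and argue that the ``shared'' portion of mass $b$ contributes no entropy drop because the conditional label distribution on it is unchanged.

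On the disjoint portion, the local entropy drop is bounded below by a standard Kearns--Mansour / concavity inequality for $x \ln(1/x)$: the drop is at least a constant times $w_{l^*}(1-b)$ times the squared total-variation distance between the two conditional label distributions $\{\pi_i P_R^i\}_i$ and $\{\pi_i P_L^i\}_i$. Assumption~\ref{as:WHA1} gives $\sum_i \pi_i |P_R^i - P_L^i| \geq \gamma$, and converting the $r$-level entropy contribution picks up an additional $r^{-2}$ factor exactly as in the proof of Theorem~\ref{thm:errorbound}. Combining with $G_t \leq \ln K$ to rescale, I obtain the recurrence
\[
G_{t+1} \;\leq\; G_t\left(1 \;-\; \frac{c\, r^2 \gamma^2 (1-b)\, \log_2 e}{16\,(t+1)\,\ln K}\right).
\]

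Finally I would iterate: using $1-x \leq e^{-x}$ and $\sum_{s=1}^{t} 1/(s+1) \geq \ln(t+1)$,
\[
G_{t+1} \;\leq\; G_1 \exp\!\left(-\frac{c\,r^2\gamma^2(1-b)\log_2 e}{16\ln K}\,\ln(t+1)\right) \;=\; G_1\,(t+1)^{-\,c\, r^2 \gamma^2 (1-b)\log_2(e)/(16 \ln K)},
\]
and demanding $G_{t+1}\leq \kappa$ inverts to exactly the stated threshold $(t+1) \geq (G_1/\kappa)^{16\ln K /(c r^2 \gamma^2 (1-b)\log_2 e)}$. The main obstacle I anticipate is the local entropy-drop inequality in the presence of overlapping (multi-way) children: the classical Kearns--Mansour argument assumes a clean disjoint partition, so some care is needed to decompose the mass at $l^*$ into the disjoint portion (of mass $1-b$) that inherits the weak-hypothesis advantage and the shared portion (of mass $b$) that is entropy-neutral. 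This decomposition is precisely what produces the $(1-b)$ factor in the exponent and ties the bound to the multi-way penalty term of the objective.
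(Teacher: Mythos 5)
Your proposal is correct and follows essentially the same route as the paper's proof: select the heaviest leaf (Assumption~\ref{as:WHA2} gives $w\geq c/(t+1)$), split its mass into the two ``only'' portions and the entropy-neutral ``both'' portion of mass $b$, lower-bound the local drop via strong concavity of the entropy by $\tfrac{1}{8}w(1-b)r^2\gamma^2$ using Assumption~\ref{as:WHA1} and the balanced-split assumption, and iterate the recurrence to get polynomial decay of $G_t$. The only caveat is constant bookkeeping at the end: your claimed bound $\sum_{s=1}^{t}\frac{1}{s+1}\geq\ln(t+1)$ is slightly too strong (it fails at $t=1$), but this is harmless because the concavity step actually yields the larger per-step coefficient $\tfrac{1}{8}$ rather than $\tfrac{\log_2 e}{16}$, and the paper's route (Kearns--Mansour iteration, $\sum_{s\leq t}\tfrac{1}{s+1}\geq\tfrac{1}{2}\log_2(t+1)$) shows the stated exponent indeed comes out exactly.
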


\section{THEORETICAL PROOFS}


\begin{proof}[Proof of Lemma~\ref{lem:binary}]
We rewrite the objective using the total law of probability:
\begin{equation}
J = \left| \sum_{i=1}^K \pi_i (P_R^i - P_L^i)\right| - \lambda_1 \sum_{i=1}^K \pi_i \left|P_R^i - P_L^i\right| + \lambda_2\left| \sum_{i=1}^K \pi_i (P_R^i + P_L^i) - 1\right|,
\end{equation}
where $P_R^i, P_L^i \in [0,1]$ for all $i = 1,2,\dots,K$. The objective admits optimum on the extremes of the $[0,1]$ interval. Therefore, we define the following:
\begin{equation}
  L_1 = \{ i: i \in \{1,\dots,K\}, P_R^i=1 \;\&\; P_L^i=1 \}, \;\;L_2 = \{ i: i \in \{1,\dots,K\}, P_R^i=0 \;\&\; P_L^i=0 \}, 
\end{equation}
\begin{equation}
L_3 = \{ i: i \in \{1,\dots,K\}, P_R^i=1 \;\&\; P_L^i=0 \}, \;\;L_4 = \{ i: i \in \{1,\dots,K\}, P_R^i=0 \;\&\; P_L^i=1 \}
\end{equation}
By substituting the above in the objective we have:
\begin{equation}
  J = \left| \sum_{i\in L_3} \pi_i - \sum_{i\in L_4} \pi_i\right| - \lambda_1 \sum_{i\in(L_3\cup L_4)} \pi_i + \lambda_2\left| \sum_{i\in(L_3\cup L_4)} \pi_i + \sum_{i\in L_1} 2\pi_i - 1\right|.  
\end{equation}
We send each example either to the right, left or both directions:
\begin{equation}
   \sum_{i\in(L_1 \cup L_3\cup L_4)} \pi_i = \sum_{i\in L_1} \pi_i + \sum_{i\in L_3} \pi_i  + \sum_{i\in L_4} \pi_i  = 1. 
\end{equation}
Thus we can further write
\begin{equation}
 J = \left| 1 - \sum_{i\in L_1} \pi_i - 2\sum_{i\in L_4} \pi_i\right| - \lambda_1 (1 - \sum_{i\in L_1} \pi_i) + \lambda_2 \sum_{i\in L_1} \pi_i.   
\end{equation}
For ease of notation, we define $a \coloneqq \sum_{i\in L_4} \pi_i$,  $a' \coloneqq \sum_{i\in L_3} \pi_i$, and $b \coloneqq \sum_{i\in L_1} \pi_i$. Therefore
\begin{equation}
  J = \left| 1 - b - 2a\right| - \lambda_1 (1 - b) + \lambda_2 b = \left| b + 2a' - 1\right| - \lambda_1 (1 - b) + \lambda_2 b,  
\end{equation}
where $a,b \in [0,1]$. Since we are interested in bounding $J$, we consider the values of $a$ and $b$ at the extremes of $[0,1]$ interval:
\begin{equation}
    \text{if}\; a = 1 \; \text{then} \; b=0 \;\rightarrow\; J = 1-\lambda_1, \;\;\;\;\;\;\; \text{if}\; b = 1 \; \text{then} \; a=0 \;\rightarrow\; J = \lambda_2
\end{equation}
\begin{equation}
    \text{if}\; a = 0 \; \text{then} \;\Big\{\begin{array}{lr}
        b =0 \; (a'=1) \; \rightarrow & J = 1-\lambda_1\\
        b =1 \;\hspace{0.51in}\rightarrow & J = \lambda_2\\
         \end{array}
         \end{equation}
         \begin{equation}
         \text{if}\; b = 0 \; \text{then} \;\Bigg\{\begin{array}{lr}
        a =0 \; (a'=1) \;\rightarrow & J = 1-\lambda_1\\
        a =1 \;\hspace{0.51in}\rightarrow & J = 1-\lambda_1\\
        a =0.5 \; \hspace{0.41in}\rightarrow & J = -\lambda_1\\
         \end{array}
\end{equation}

Therefore $J \in [-\lambda_1,\;\lambda_2]$.\\ Next, we show that the perfectly balanced and pure split is attained at the minimum of the objective. The perfectly balanced split is achieved when $P_R=P_L$ and then the balancing term in the objective becomes zero. The perfectly pure split is achieved when the class integrity term in the objective satisfies $\sum_{i=1}^K \pi_i \left|P_R^i - P_L^i\right| = \sum_{i=1}^K \pi_i =1 $. Simultaneously, the following holds $\sum_{i=1}^K \pi_i (P_R^i + P_L^i) = 1$, and therefore the multi-way penalty is zero as well. Thus, $J=0-\lambda_1+0=-\lambda_1$. In order to prove the opposite direction of the claim, recall that the minimum of the objective occurs for $b=0$ and $a=0.5$. Since $a+a'+b=1$, therefore $a'=0.5$. This corresponds to the perfectly pure and balanced split.
\end{proof}

\begin{proof}[Proof of Lemma~\ref{lem:mary}]
$P_j^i \in [0,\;1]$  for all $i = 1,2,\dots,K$ and $j = 1,2,\dots,M$. The objective admits optimum on the extremes of the $[0,1]$ interval. In the following proof we consider a different approach than in the proof of Lemma~\ref{lem:binary}. In order to get the minimum of the objective, we try to minimize each of its terms separately and on the top of that incorporate their correlations. For now, we assume that the first term, the balancing term, is minimized and therefore is equal to zero. We define case $C_n$ as the scenario when for any $i = 1,2,\dots,K$, $P_j^i=1$ for $n$ ``directions'' ( $n \leq M$), i.e. $n$ distinct $j$s such that $j \in \{1,\;2,\dots,\;M\}$,  and $P_j^i=0$ for the remaining $j$'s. The class integrity and multi-way penalty terms can then be derived as follows:
\begin{equation}
 J_{\text{class integrity term}|C_n} = \lambda_1\sum_{i=1}^K \sum_{j = 1}^M\sum_{l=j+1}^M \pi_i\left|P_j^i - P_l^i\right| = n(M-n),   
\end{equation}

\begin{equation}
  J_{\text{multi-way penalty term}|C_n} = \lambda_2\left(\sum_{j=1}^M P_j\right)-1 = n - 1.  
\end{equation}

Therefore, the objective value would then become: $J=-\lambda_1 n(M-n) + \lambda_2(n-1)$. We aim to have the minimum of the objective for perfectly pure split. The perfectly pure split is achieved when case $C_1$ holds. Therefore, we need:
\begin{equation}
 -\lambda_1(M-1) < -\lambda_1 n(M-n) + \lambda_2(n-1) \;\;\;\text{for}\; n \in \{2,\dots,M \}.
\end{equation}

The lower-bound of the right side is achieved for $n=2$:
\begin{equation}
 -\lambda_1(M-1) < -\lambda_1 2(M-2) + \lambda_2 \;\;\;\rightarrow \;\;\;M-3<\frac{\lambda_2}{\lambda_1}.   
\end{equation}
With the above condition, the minimum of the objective is equal to $-\lambda_1(M-1)$. Note that our first assumption on the balancing term can still hold for all $C_n$ cases. Therefore, we have shown that the minimum of the objective corresponds to the perfectly pure and balanced split. \\
In order to get the upper-bound for $J$, we first show that $J_{\text{balancing term}}\leq J_{\text{class integrity term}}$ as follows:
\begin{eqnarray}
 J_{\text{balancing term}} = \sum_{j = 1}^M\sum_{l=j+1}^M\left|P_j - P_l\right| =  \sum_{j = 1}^M\sum_{l=j+1}^M\left|\sum_{i=1}^K\pi_i(P_j^i-P_l^i)\right|\\
 \leq  \sum_{j = 1}^M\sum_{l=j+1}^M \sum_{i=1}^K \pi_i\left|P_j^i - P_l^i\right| = J_{\text{class integrity term}}.
\end{eqnarray}
Therefore, the maximum of the summation of the terms is achieved when $J_{\text{balancing term}}=J_{\text{class integrity term}}$. The maximum of the multi-way penalty term is attained when sending all examples to every direction, resulting in $J_{\text{multi-way penalty term}}=(M-1)$. In this case, $J_{\text{balancing term}}=J_{\text{class integrity term}} = 0$, and thus, $J = \lambda_2(M-1)$. Hence, we have $J\in[-\lambda_1(M-1),\;\lambda_2(M-1)]$.
\end{proof}

\begin{proof}[Proof of Lemma~\ref{lem:balance}]
The perfectly pure split is attained when $P_j^i=1$ for only one value of $j$, and $P_j^i=0$ for the remaining $j$'s. This leads the class integrity term to satisfy $\sum_{j = 1}^M\sum_{l=j+1}^M \sum_{i=1}^K \pi_i\left|P_j^i - P_l^i\right| = (M-1)$ and the multi-way penalty term to satisfy $\sum_{i=1}^k \pi_i \sum_{j = 1}^M P_j^i - 1 =0$. Thus we have:
\begin{eqnarray}
J -J^* &=& \sum_{j = 1}^M\sum_{l=j+1}^M\left|P_j - P_l\right| \\
&=& \sum_{j = 1}^M\sum_{l=j+1}^M\left|\left(P_j - \frac{\sum_{i=1}^M P_i}{M}\right)- \left(P_l - \frac{\sum_{i=1}^M P_i}{M}\right)\right|.
\end{eqnarray}
Let $j^*=\text{argmax}_{j \in \{1,2,\dots,M\}}|P_j - \frac{\sum_{i=1}^M P_i}{M}|$. Without loss of generality assume $P_{j^*} - \frac{\sum_{i=1}^M P_i}{M}\geq 0$ and in that case there exists an $l^*$ such that $P_{l^*} - \frac{\sum_{i=1}^M P_i}{M}\leq 0$. Therefore we have:
\begin{eqnarray}
J -J^* &\geq& \left|\left(P_{j^*} - \frac{\sum_{i=1}^M P_i}{M}\right)- \left(P_{l^*} - \frac{\sum_{i=1}^M P_i}{M}\right)\right| \\
&\geq& \left|(P_{j^*} - \frac{\sum_{i=1}^M p_i}{M})\right| = \beta.
\end{eqnarray}
\end{proof}

\begin{proof}[Proof of Lemma~\ref{lem:balancefix}]
Consider a split with a fixed purity factor $\alpha$. $J_{\text{purity}}^{\alpha}$ denotes the sum of the class integrity and multi-way penalty terms of the objective function. When subtracting them from the total value of the objective at node $n$ we obtain the balancing term. Thus we have:
\begin{eqnarray}
J -J_{\text{purity}}^{\alpha} &=& \sum_{j = 1}^M\sum_{l=j+1}^M\left|P_j - P_l\right| \\
&=& \sum_{j = 1}^M\sum_{l=j+1}^M\left|\left(P_j - \frac{\sum_{i=1}^M P_i}{M}\right)- \left(P_l - \frac{\sum_{i=1}^M P_i}{M}\right)\right|.
\end{eqnarray}
Let $j^*=\text{argmax}_{j \in \{1,2,\dots,M\}}|P_j - \frac{\sum_{i=1}^M P_i}{M}|$. Without loss of generality assume $P_{j^*} - \frac{\sum_{i=1}^M P_i}{M}\geq 0$ and in that case there exists an $l^*$ such that $P_{l^*} - \frac{\sum_{i=1}^M P_i}{M}\leq 0$. Therefore we have:
\begin{eqnarray}
J -J_{\text{purity}}^{\alpha} &\geq& \left|\left(P_{j^*} - \frac{\sum_{i=1}^M P_i}{M}\right)- \left(P_{l^*} - \frac{\sum_{i=1}^M P_i}{M}\right)\right| \\
&\geq& \left|(P_{j^*} - \frac{\sum_{i=1}^M p_i}{M})\right| = \beta.
\end{eqnarray}
\end{proof}

\begin{proof}[Proof of Lemma~\ref{lem:pure}]
The perfectly balanced split is attained when $P_1=P_2=...=P_M$. This zeros out the balancing term in the objective function. Hence:
\begin{eqnarray}
J =  -\lambda_1 \sum_{i=1}^K \sum_{j = 1}^M\sum_{l=j+1}^M \pi_i\left|P_j^i - P_l^i\right| + \lambda_2 \left(\sum_{j=1}^MP_j - 1\right)
\\
=  -\lambda_1 \sum_{i=1}^K \sum_{j = 1}^M\sum_{l=j+1}^M \pi_i\left|P_j^i - P_l^i\right| + \lambda_2 \left(\sum_{i=1}^K \sum_{j = 1}^M \pi_i P_j^i-1\right)
\\
\geq  -\lambda_1\frac{M-1}{2} \sum_{i=1}^K \sum_{j = 1}^M \pi_i P_j^i + \lambda_2 \left(\sum_{i=1}^K \sum_{j = 1}^M \pi_i P_j^i-1\right).
\end{eqnarray}

Thus we have:
\begin{eqnarray}
J+ \lambda_2 \geq \left(\lambda_2-\lambda_1\frac{M-1}{2}\right) \sum_{i=1}^K \sum_{j = 1}^M \pi_i P_j^i 
\\
\geq \left(\lambda_2-\lambda_1\frac{M-1}{2}\right) \sum_{i=1}^K \sum_{j = 1}^M \pi_i \min(P_j^i,\sum_{l=1}^M P_l^i-P_j^i)
\\
\geq \left(\lambda_2-\lambda_1\frac{M-1}{2}\right) M\alpha.
\end{eqnarray}
\end{proof}

\begin{proof}[Proof of Lemma~\ref{lem:purefix}]
Consider a split with a fixed balancedness factor $\beta$. $J_{\text{balance}}^{\beta}$ denotes the balancing term of the objective function. When subtracting it from the total value of the objective at node $n$ we will obtain the sum of the class integrity and multi-way penalty terms. Hence:
\begin{eqnarray}
J - J_{\text{balance}}^{\beta} =  -\lambda_1 \sum_{i=1}^K \sum_{j = 1}^M\sum_{l=j+1}^M \pi_i\left|P_j^i - P_l^i\right| + \lambda_2 \left(\sum_{j=1}^MP_j - 1\right)
\\
=  -\lambda_1 \sum_{i=1}^K \sum_{j = 1}^M\sum_{l=j+1}^M \pi_i\left|P_j^i - P_l^i\right| + \lambda_2 \left(\sum_{i=1}^K \sum_{j = 1}^M \pi_i P_j^i-1\right)
\\
\geq  -\lambda_1\frac{M-1}{2} \sum_{i=1}^K \sum_{j = 1}^M \pi_i P_j^i + \lambda_2 \left(\sum_{i=1}^K \sum_{j = 1}^M \pi_i P_j^i-1\right).
\end{eqnarray}

Thus we have:
\begin{eqnarray}
J - J_{\text{balance}}^{\beta} + \lambda_2 \geq \left(\lambda_2-\lambda_1\frac{M-1}{2}\right) \sum_{i=1}^K \sum_{j = 1}^M \pi_i P_j^i 
\\
\geq \left(\lambda_2-\lambda_1\frac{M-1}{2}\right) \sum_{i=1}^K \sum_{j = 1}^M \pi_i \min(P_j^i,\sum_{l=1}^M P_l^i-P_j^i)
\\
\geq \left(\lambda_2-\lambda_1\frac{M-1}{2}\right) M\alpha.
\end{eqnarray}
\end{proof}

\begin{proof}[Proof of Theorem~\ref{thm:entropy}]
In our algorithm, we recursively find the leaf node with the heaviest weight and decide to partition it to two children. Suppose, after $t$ splits the leaf node $n$ has the highest weight, namely $w_n$, which will be denoted with $w$ for brevity. This weight is defined as the probability that a randomly chosen data point $x$ drawn from a fixed distribution $\mathcal{P}$ reaches the leaf. Let $w_{R\; \text{only}}$ and $w_{L\;\text{only}}$ be the weight of examples reaching only to the right and left child of node $n$, and $w_{{both}}$ be the weight of examples reaching to both children. Also let $P_{{both}} = |P_R + P_L - 1|$. Note that $w_{R\; \text{only}} = w P_{R\; \text{only}} = w(P_R - P_{{both}})$ and $w_{L\; \text{only}} = w P_{L\; \text{only}} = w(P_L - P_{{both}})$. Let $\pmb{\rho}$ be a vector with K elements, which its $i^{th}$ element is $\rho_i$. Furthermore, let  $\pmb{\rho}_R$, and $\pmb{\rho}_L$ be K-element vectors with  $\rho_{i,R}$ and $\rho_{i,L}$ at its $i^{th}$ entry. Note that $\rho_{i,R} = \frac{\rho_i P_R^i}{P_R}$, and $\rho_{i,L}= \frac{\rho_i P_L^i}{P_L}$. Before the node partition the contribution of node $n$ to the total entropy-based objective is $w\tilde{G}(\pmb{\rho})$. After the split this contribution will be $w_{R\; \text{only}}\tilde{G}(\pmb{\rho}_R)+w_{L\; \text{only}}\tilde{G}(\pmb{\rho}_L)+w_{{both}}\tilde{G}(\pmb{\rho})$ (Note that for the examples being sent to both directions we average the histograms of the left and right children. Also note that $(w_{R\; \text{only}} + w_{R\; \text{only}} + w_{{both}}) =1 $) Therefore, we have:
\begin{eqnarray}
\Delta_t := G_t - G_{t+1} = w [\tilde{G}(\pmb{\rho}) - P_{R\; \text{only}}\tilde{G}(\pmb{\rho}_R)-P_{L\; \text{only}}\tilde{G}(\pmb{\rho}_L)-P_{\text{both}}\tilde{G}(\pmb{\rho})]
\\
= w [\tilde{G}(\pmb{\rho}) - (P_R - P_{both})\tilde{G}(\pmb{\rho}_R) - (P_L - P_{both})\tilde{G}(\pmb{\rho}_L) - P_{both}\tilde{G}(\pmb{\rho})].
\end{eqnarray}
Recall that the Shannon entropy is strongly concave with respect to $l_1$-norm~\citep[see][Example 2.5]{ShaiSS2012}, and $\pmb{\rho} = (P_R - \frac{1}{2} P_{both})\pmb{\rho}_R + (P_L - \frac{1}{2} P_{both})\pmb{\rho}_L $, where $P_{both} = P_R + P_L - 1$. Without loss of generality assume $P_R = P_L + \eta$. Hence we re-write $\Delta_t$ as follows:

\begin{eqnarray}
\Delta_t = w [(1 - P_{both})\tilde{G}(\pmb{\rho}) - (\frac{1 + \eta - P_{both}}{2})\tilde{G}(\pmb{\rho}_R) - (\frac{1 - \eta - P_{both}}{2})\tilde{G}(\pmb{\rho}_L)] 
\\
= w (1 - P_{both})[\tilde{G}(\pmb{\rho}) - (\frac{1 + \eta - P_{both}}{2(1 - P_{both})})\tilde{G}(\pmb{\rho}_R) - (\frac{1 - \eta - P_{both}}{2(1 - P_{both})})\tilde{G}(\pmb{\rho}_L)]. \\
\end{eqnarray}
We can then use the result from Theorem 2.1.9 in~\cite{nestrov}:
\begin{eqnarray}
\Delta_t \geq w(1 - P_{both})\left[ \frac{1}{8}||\pmb{\rho}_R-\pmb{\rho}_L||_1^2 \right]
\\
= w(1 - P_{both})r^2\left[ \frac{1}{8}||\pmb{\pi}_R-\pmb{\pi}_L||_1^2 \right]
\end{eqnarray}
\begin{eqnarray}
= w(1 - P_{both})r^2\left[ \frac{1}{8}\left(\sum_{i=1}^K\left|\frac{\pi_i P_R^i}{P_R}-\frac{\rho_i P_L^i}{P_L}\right|\right)^2\right].
\label{eqn:52}
\end{eqnarray}
Here we use the assumption that we have a balance split, i.e. $P_R = P_L$, therefore we continue as follows:
\begin{eqnarray}
= w(1 - P_{both}) \frac{r^2}{8 P_R^2}\left(\sum_{i=1}^K \pi_i |P_R^i- P_L^i| \right)^2
\end{eqnarray}
\begin{eqnarray}
\geq w(1 - P_{both}) \frac{r^2}{8}\left(\sum_{i=1}^K \pi_i |P_R^i- P_L^i| \right)^2.
\end{eqnarray}
Now by applying the WHA~\ref{as:WHA2}:
\begin{eqnarray}
\Delta_t \geq w(1 - b) \frac{r^2}{8}\gamma^2.
\label{eqn:55}
\end{eqnarray}
Note that by WHA~\ref{as:WHA2} $b \in [0,1)$. Also note that $w\geq \frac{G_t c}{(t+1)\ln K}$. This comes from the fact that at each step we choose the leaf node with maximum weight. Hence with WHA2, $w = \max_{l \in \mathcal{L}} w_l \geq \frac{c}{(t+1)}$. Also note that uniform distribution maximizes the entropy, i.e. $G_t \leq \ln K$. Accordingly we have: 
\begin{eqnarray}
\Delta_t \geq \frac{G_t c}{(t+1)\ln K}[ \frac{r^2}{8} \gamma^2 (1-b)].
\end{eqnarray}
By letting $\eta =\frac{1}{2} \sqrt{\frac{c r^2\gamma^2 (1-b)}{2\ln K}}$, we have $\Delta_t \geq \frac{\eta^2G_t}{(t+1)}$. Thus, we have the following recursion inequality:
\begin{eqnarray}
G_{t+1} \leq G_t - \Delta_t \leq G_t -\frac{\eta^2 G_t}{(t+1)} = G_t[1-\frac{\eta^2}{(t+1)}].
\end{eqnarray}
Then by applying the same proof technique as in~\citet{boost99} we get the following relationship:
\begin{eqnarray}
G_{t+1} \leq G_1 e^{-\eta^2 \log_2(t+1)/2}.
\end{eqnarray}
Therefore, to reduce $G_{t+1} \leq \kappa$ it suffices to have (t+1) splits such that $\log_2(t+1) \geq \ln(\frac{G_1}{\kappa})^{\frac{2}{\eta^2}}$. Substituting $\log_2(t+1) = \ln(t+1)\log_2(e)$ results in:
\begin{eqnarray}
\ln(t+1) \geq \ln(\frac{G_1}{\kappa})^{\frac{2}{\eta^2 \log_2(e)}} \Leftrightarrow (t+1)\geq (\frac{G_1}{\kappa})^{\frac{2}{\eta^2 \log_2(e)}}.
\end{eqnarray}
\end{proof}

We next proceed to the proof of Theorem~\ref{thm:errorbound}.

\begin{proof}[Proof of Theorem~\ref{thm:errorbound}]
This proof follows the proof of the Theorem~\ref{thm:entropy}. Below we directly calculate the error bound. Recall $w_{\tilde{\mathcal{L}}}$ to be the probability that a data point x reached
the subset of leaves $\tilde{\mathcal{L}}$. Recall that $\rho_i^{\tilde{\mathcal{L}}}$ is the probability that the data point $x$ has label $i$ given that $x$ reached $\tilde{\mathcal{L}}$, i.e. $\rho_i^{\tilde{\mathcal{L}}} = P(i \in t(x) | x \text{ reached } \tilde{\mathcal{L}})$. Note that each example has $r$ labels, and let's assume we assign first majority $r$ labels from the $\rho_i^{\tilde{\mathcal{L}}}$ histogram to any example reaching $\tilde{\mathcal{L}}$,
i.e. $y_r(x)=\{j_1,j_2,...,j_r\}$, where $j_1=\text{argmax}_{k \in \{1,2,\dots,K\}} (\rho^{\tilde{\mathcal{L}}}_k)$, $j_2=\text{argmax}_{k \in \{1,2,\dots,K\}\setminus j_1} (\rho^{\tilde{\mathcal{L}}}_k)$,..., $j_r=\text{argmax}_{k \in \{1,2,\dots,K\}\setminus \{j_1,...,j_{r-1}\}} (\rho^{\tilde{\mathcal{L}}}_k)$. We then expand the $r$-level multi-label error as follows:
\begin{eqnarray}
\epsilon_r(\mathcal{T}) \!\!\!\!\!&=&\!\!\!\!\! \frac{1}{r}\sum_{i=1}^KP(i \in y_r(x),i \notin t(x))\\
\!\!\!\!\!&=&\!\!\!\!\! \frac{1}{r}\sum_{i=1}^KP(i \in t(x),i \notin y_r(x))\\
&=&\!\!\!\!\! \frac{1}{r}\sum_{{\tilde{\mathcal{L}}} \in \mathcal{L}}\!w_{\tilde{\mathcal{L}}}\!\sum_{i=1}^K\!P(i \!\in\! t(x),i \!\notin\! y_r(x)|x \;\text{reached}\; {\tilde{\mathcal{L}}}) \\
&=&\!\!\!\!\! \frac{1}{r}\sum_{{\tilde{\mathcal{L}}} \in \mathcal{L}}w_{\tilde{\mathcal{L}}}\sum_{\substack{i = 1\\i\neq j_1,...,j_R}}^K\!\!\!P(i \in t(x)|x \;\text{reached}\; {\tilde{\mathcal{L}}}) \\
&=&\!\!\!\!\! \frac{1}{r}\sum_{{\tilde{\mathcal{L}}} \in \mathcal{L}}w_{\tilde{\mathcal{L}}}\left(\sum_{i=1}^K \rho_i^{\tilde{\mathcal{L}}} - \max_{k\in\{1,2,\dots,K\}}\rho_k^{\tilde{\mathcal{L}}}\right. \label{eqn:last} -\max_{k\in\{1,2,\dots,K\}\setminus j_1}\rho_k^{\tilde{\mathcal{L}}}\\
&&\:\:\:\:\:\:\:\:\:\:\:\:\:\:\:\:\:\:\:\:\:\:\left. -\max_{k\in\{1,2,\dots,K\}\setminus \{j_1,j_2\}}\rho_k^{\tilde{\mathcal{L}}} -\dots-\max_{k\in\{1,2,\dots,K\}\setminus \{j_1,j_2,\dots,j_{r-1}\}}\rho_k^{\tilde{\mathcal{L}}}\right),\nonumber
\end{eqnarray}

where $w_{\tilde{\mathcal{L}}}$ denote the probability that example $x$ reaches ${\tilde{\mathcal{L}}}$ and $\mathcal{L}$ denote the set of all leaves of the tree. 

Next we will find the Shannon entropy bound with respect to the error and show that the entropy of the tree, denoted as $G(\mathcal{T})$, upper-bounds the error. Note that:
\begin{eqnarray}
G(\mathcal{T}) = \sum_{\tilde{\mathcal{L}} \in \mathcal{L}} w_{\tilde{\mathcal{L}}} \sum_{i=1}^K \rho_i^{\tilde{\mathcal{L}}} \ln\left(\frac{1}{\rho_i^{\tilde{\mathcal{L}}}}\right)
&\geq& \sum_{l \in \mathcal{L}} w_{\tilde{\mathcal{L}}} \sum_{\substack{i = 1\\i\neq j_1,...,j_r}}^K \rho_i^{\tilde{\mathcal{L}}} \ln\left(\frac{1}{\rho_i^{\tilde{\mathcal{L}}}}\right).
\end{eqnarray}
Note that $\sum_{i = 1}^K\rho_i^{\tilde{\mathcal{L}}} = r$. Thus for any $i = 1,2,\dots,K$ such that $i\neq j_1,...,j_r$ it must hold that $\rho_i^{\tilde{\mathcal{L}}} \leq \frac{1}{2}$. We continue as follows

\begin{eqnarray}
G(\mathcal{T}) \!\!\!\!\!&\geq&\!\!\!\!\! \sum_{{\tilde{\mathcal{L}}} \in \mathcal{L}} w_{\tilde{\mathcal{L}}} \sum_{\substack{i = 1\\i\neq j_1,...,j_r}}^K \rho_i^{\tilde{\mathcal{L}}} \ln(2)\\
&\geq&\!\!\!\!\! \ln(2)\sum_{{\tilde{\mathcal{L}}} \in \mathcal{L}}w_{\tilde{\mathcal{L}}}\left(\sum_{i=1}^K \rho_i^{\tilde{\mathcal{L}}} - \max_{k\in\{1,2,\dots,K\}}\rho_k^{\tilde{\mathcal{L}}} -\max_{k\in\{1,2,\dots,K\}\setminus j_1}\rho_k^{\tilde{\mathcal{L}}} -\max_{k\in\{1,2,\dots,K\}\setminus \{j_1,j_2\}}\rho_k^l\right.\nonumber\\
&&\:\:\:\:\:\:\:\:\:\:\:\:\:\:\left.-\dots-\max_{k\in\{1,2,\dots,K\}\setminus \{j_1,j_2,\dots,j_{r-1}\}}\rho_k^{\tilde{\mathcal{L}}}\right)\nonumber\\
&=&\!\!\!\!\! \ln(2)r\epsilon_r(\mathcal{T}) \geq \epsilon_r(\mathcal{T}),
\label{eqn:entropyerr}
\end{eqnarray}
where the last inequality comes from the fact that $r \geq 1/\ln(2)$. Now recall that $G_1 \leq \ln K$ and normalizing $\kappa$ in Theorem ~\ref{thm:entropy} finishes the proof.
\end{proof}

\begin{proof}[Proof of Theorem~\ref{thm:errorwobalance}] The proof follows the same steps as Theorem~\ref{thm:entropy} until Equation~\ref{eqn:52}. Applying WHA~\ref{as:WHA4} at this point will result in the same result as in Equation~\ref{eqn:55}. The rest of the proof would be the same as Theorems~\ref{thm:entropy} and~\ref{thm:errorbound}.
\end{proof}

\begin{proof}[Proof of Theorem~\ref{thm:error1}]
Since we assume the objective is minimized in every node of the tree, therefore each node is sending examples to only one of its children and consequently each example descends to only one leaf. Thus in any leaf $l$, we store label histograms and assign first $r$ labels from the histogram to any example reaching that leaf, i.e. $y(x)=\{j_1,j_2,...,j_r\}$, where $j_1=\text{argmax}_{k \in \{1,2,\dots,K\}} \rho^l_k$, $j_2=\text{argmax}_{k \in \{1,2,\dots,K\}\setminus j_1} (\rho^l_k)$,..., $j_r=\text{argmax}_{k \in \{1,2,\dots,K\}\setminus \{j_1,...,j_{r-1}\}} (\rho^l_k)$ and $\rho_i^l$ is the probability that the data point $x$ has label $i$ given that $x$ has reached leaf $l$, i.e. $\rho_i^l = P(i \in t(x)|x \;\text{reached}\; l)$.

We next expand the $r$-level multi-label error as follows:
\begin{eqnarray}
\epsilon_r(\mathcal{T}) \!\!\!\!\!&=&\!\!\!\!\! \frac{1}{r}\sum_{i=1}^KP(i \in y_r(x),i \notin t(x))\\
\!\!\!\!\!&=&\!\!\!\!\! \frac{1}{r}\sum_{i=1}^KP(i \in t(x),i \notin y_r(x))\\
&=&\!\!\!\!\! \frac{1}{r}\sum_{l \in \mathcal{L}}\!w(l)\!\sum_{i=1}^K\!P(i \!\in\! t(x),i \!\notin\! y_r(x)|x \;\text{reached}\; l) \\
&=&\!\!\!\!\! \frac{1}{r}\sum_{l \in \mathcal{L}}w(l)\sum_{\substack{i = 1\\i\neq j_1,...,j_r}}^K\!\!\!P(i \in t(x)|x \;\text{reached}\; l) \\
&=&\!\!\!\!\! \frac{1}{r}\sum_{l \in \mathcal{L}}w(l)\left(\sum_{i=1}^K \rho_i^{(l)} - \max_{k\in\{1,2,\dots,K\}}\rho_k^l\right. \label{eqn:last} -\max_{k\in\{1,2,\dots,K\}\setminus j_1}\rho_k^l\\
&&\:\:\:\:\:\:\:\:\:\:\:\:\:\:\:\:\:\:\:\:\:\:\left. -\max_{k\in\{1,2,\dots,K\}\setminus \{j_1,j_2\}}\rho_k^l -\dots-\max_{k\in\{1,2,\dots,K\}\setminus \{j_1,j_2,\dots,j_{r-1}\}}\rho_k^l\right),\nonumber
\end{eqnarray}
where $w(l)$ denote the probability that example $x$ reaches leaf $l$ and $\mathcal{L}$ denote the set of all leaves of the tree. 


From Lemma~\ref{lem:binary} (for binary tree) and Lemma~\ref{lem:mary} (for M-ary tree) it follows that for any node in the tree, the corresponding split is balanced and the following holds: $|P_j^i - P_{j^{'}}^i| = 1$ for all labels $i = 1,2,\dots,K$ and all pairs of children nodes $(j,j^{'})$ of the considered node such that $j,j^{'} \in \{1,2,\dots,M\}$ and $j \neq j^{'}$. Thus when splitting any node, its label histogram is divided in such a way that its children have non-overlapping label histograms, i.e. $\forall_{i = 1,2,\dots,K}\forall_{j,j^{'} \in \{1,2,\dots,M\}, j \neq j^{'}}\rho_i^{(j)}\rho_i^{(j^{'})} = 0$, where $\rho_i^{(j)}$ and $\rho_i^{(j^{'})}$ denote the $i^{\text{th}}$ entry in the normalized label histograms of children nodes $j$ and $j^{'}$ respectively. After $\log_M (K/r)$ splits we obtain leaves with non-overlapping histograms, i.e. for any two leaves $l_1$ and $l_2$ such that $l_1,l_2 \in \mathcal{L}$ and $l_1\neq l_2$, $\forall_{i = 1,2,\dots,K}\rho_i^{(l_1)}\cdot\rho_i^{(l_2)} = 0$. In each leaf the label histogram contains $r$ non-zero entries. Based on the above it follows that $G(\mathcal{T}) = 0$. Consequently, using Equation~\ref{eqn:entropyerr} we obtain that the multi-label error $\epsilon_r(\mathcal{T})$ is equal to zero as well. This directly implies that $\epsilon_{\hat{r}}(\mathcal{T}) = 0$ for any $\hat{r} = 1,2,\dots,r$.
\end{proof}

\begin{proof}[Proof of Lemma~\ref{lem:error3} (Proof of Lemma ~\ref{lem:error2} follows directly as Lemma ~\ref{lem:error2} is a special case of Lemma ~\ref{lem:error3})]
In our algorithm we store label histograms for each node, and at testing we assign to an example top $r$ labels obtained from averaging the histograms of the leaves to which this example has descended to. At training, we recursively find the node with the highest priority and partition it to two children. Here we are examining the change of error with one node split. We consider examples reaching that node and without loss of generality we assume they have reached only this node. For each such example $x$ we assign the top $r$ labels from the histogram of the analyzed node, i.e. $y_r(x)=\{k_1,k_2,...,k_r\}$, where $k_1=\text{argmax}_{k \in \{1,2,\dots,K\}} \rho_k$, $k_2=\text{argmax}_{k \in \{1,2,\dots,K\}\setminus j_1} (\rho_k)$,..., $k_r=\text{argmax}_{k \in \{1,2,\dots,K\}\setminus \{j_1,...,j_{r-1}\}} (\rho_k)$ and $\rho_i$ is the probability that the data point $x$ has label $i$ given that $x$ has reached node $n$, i.e. $\rho_i = P(i \in t(x)|x \;\text{reached}\; n)$. After $t$ splits the Precision can be expanded as follows:
\begin{eqnarray}
(P@r)^t \!\!\!\!\!&=&\!\!\!\!\! 
\frac{1}{r}\sum_{i=1}^KP(i \in t(x),i \in y_r(x))\\
&=&\!\!\!\!\! \frac{1}{r} \Big(\max_{k\in\{1,2,\dots,K\}}\rho_k + \max_{k\in\{1,2,\dots,K\}\setminus j_1}\rho_k +\dots+\max_{k\in\{1,2,\dots,K\}\setminus \{j_1,j_2,\dots,j_{r-1}\}}\rho_k \Big)\\
\!\!\!\!\!&=&\!\!\!\!\! 
\max_{k\in\{1,2,\dots,K\}}\pi_k  + \max_{k\in\{1,2,\dots,K\}\setminus j_1}\pi_k +\dots+\max_{k\in\{1,2,\dots,K\}\setminus \{j_1,j_2,\dots,j_{r-1}\}}\pi_k\\
\!\!\!\!\!&=&\!\!\!\!\!
\pi_{k_1} + \cdots + \pi_{k_r}, 
\end{eqnarray}
where the last line comes from the fact that $\pi_i$ is a normalized fraction of examples containing label $i$ in their labels. After the node split, the Precision is defined as the combination of the Precision of its children.
For simplicity we consider equal contribution of each of the edges to $P_{\text{multi}} = \left|\left(\sum_{j=1}^M P_j\right) - 1\right|$. Therefore we can write the Precisions of the children as:
\begin{eqnarray}
(P@r)^{t+1}
\!\!\!\!\!&=&\!\!\!\!\! (P_1-\frac{1}{M}P_{\text{multi}})(P@r)^1 + \cdots + (P_M-\frac{1}{M}P_{\text{multi}})(P@r)^M\\ \!\!\!\!\!&=&\!\!\!\!\! 
(P_1-\frac{1}{M}P_{\text{multi}})\Big(\max_{i\in\{1,2,\dots,K\}}\pi_i\big(\frac{P^i_1-\frac{1}{M}P^i_{\text{multi}}}{P_1-\frac{1}{M}P_{\text{multi}}}\big) + \cdots\Big) + \cdots\\
\!\!\!\!\!&+&\!\!\!\!\! 
(P_M-\frac{1}{M}P_{\text{multi}})\Big(\max_{j\in\{1,2,\dots,K\}}\pi_j\big(\frac{P^j_M-\frac{1}{M}P^j_{\text{multi}}}{P_M-\frac{1}{M}P_{\text{multi}}}\big) + \cdots\Big)\nonumber\\
\!\!\!\!\!&=&\!\!\!\!\! 
\max_{i\in\{1,2,\dots,K\}}\pi_i (P^i_1-\frac{1}{M}P^i_{\text{multi}}) + \cdots \\
\!\!\!\!\!&+&\!\!\!\!\! 
\max_{j\in\{1,2,\dots,K\}}\pi_j (P^j_M-\frac{1}{M}P^j_{\text{multi}}) + \cdots \nonumber\\
\!\!\!\!\!&=&\!\!\!\!\! 
\frac{1}{M}\big(\max_{i\in\{1,2,\dots,K\}}\pi_i ((M-1)P^i_1-P^i_2 \cdots -P^i_M+1) + \cdots\\
\!\!\!\!\!&+&\!\!\!\!\! 
\max_{j\in\{1,2,\dots,K\}}\pi_j ((M-1)P^i_M-P^i_1 \cdots -P^i_{M-1}+1) + \cdots\big) \nonumber\\
\!\!\!\!\!&=&\!\!\!\!\! 
\frac{1}{M}\big(\max_{i\in\{1,2,\dots,K\}}\pi_i ((P^i_1-P^i_2) + (P^i_1-P^i_3) + \cdots  (P^i_1-P^i_M)+1) + \cdots\\
\!\!\!\!\!&+&\!\!\!\!\! 
\max_{j\in\{1,2,\dots,K\}}\pi_j ((P^i_M-P^i_1) + (P^i_M-P^i_2)+ \cdots (P^i_M-P^i_{M-1})+1) + \cdots\big).\nonumber
\end{eqnarray}
Note that the subtraction of $(1/M)P^i_{\text{multi}}$ and $(1/M)P_{\text{multi}}$ in the coefficients is done to compensate the Precision calculation for examples being sent to multiple directions. Let the top $r$ labels assigned to the first child be denoted as $y_r^1(x)=\{i_1,i_2,...,i_r\}$, where\\ $i_1=\text{argmax}_{i \in \{1,2,\dots,K\}} \pi_i ((P^i_1-P^i_2) + (P^i_1-P^i_3) + \cdots  (P^i_1-P^i_M))$,\\ $i_2=\text{argmax}_{k \in \{1,2,\dots,K\}\setminus i_1} \pi_i ((P^i_1-P^i_2) + (P^i_1-P^i_3) + \cdots  (P^i_1-P^i_M))$,\\...,\\ $i_r=\text{argmax}_{k \in \{1,2,\dots,K\}\setminus \{i_1,...,i_{r-1}\}} \pi_i ((P^i_1-P^i_2) + (P^i_1-P^i_3) + \cdots  (P^i_1-P^i_M))$.\\
Analogy holds for all other children. Thus for example the $M^{\text{th}}$ children's labels are: $y_r^M(x)=\{j_1,j_2,...,j_r\}$. Therefore the difference between the Precision of the parent node and its children can be written as:
\begin{eqnarray}
(P@r)^{t+1} - (P@r)^{t}
\!\!\!\!\!&=&\!\!\!\!\! 
\frac{1}{M}\Big(\pi_{i_1} ((P^{i_1}_1-P^{i_1}_2)+ \cdots (P^{i_1}_1-P^{i_1}_M)+1) + \cdots\\
\!\!\!\!\!&+&\!\!\!\!\! 
\pi_{i_r} ((P^{i_r}_1-P^{i_r}_2)+ \cdots (P^{i_r}_1-P^{i_r}_M)+1)\Big)
\nonumber\\
\!\!\!\!\!&+&\!\!\!\!\! 
\cdots \nonumber\\
\!\!\!\!\!&+&\!\!\!\!\! 
\frac{1}{M}\Big(\pi_{j_1} ((P^{j_1}_M-P^{j_1}_1)+ \cdots (P^{j_1}_M-P^{j_1}_{M-1})+1) + \cdots\nonumber\\
\!\!\!\!\!&+&\!\!\!\!\! 
\pi_{j_r} ((P^{j_r}_M -P^{j_r}_1)+ \cdots (P^{j_r}_M-P^{j_r}_{M-1})+1)\Big)\nonumber\\
\!\!\!\!\!&-&\!\!\!\!\! 
\big (\pi_{k_1} + \cdots + \pi_{k_r} \big ).\nonumber
\end{eqnarray}
For the ease of notation we show the case for the binary below:
\begin{eqnarray}
(P@r)^{t+1} - (P@r)^{t}
\!\!\!\!\!&=&\!\!\!\!\!
\frac{1}{2} \big(\pi_{i_1}(P^{i_1}_R-P^{i_1}_L+1) + \cdots + \pi_{i_r}(P^{i_r}_R-P^{i_r}_L+1) \big)\\
\!\!\!\!\!&+&\!\!\!\!\! 
\frac{1}{2} \big(\pi_{j_1}(P^{j_1}_L-P^{j_1}_R+1) + \cdots + \pi_{j_r}(P^{j_r}_L-P^{j_r}_R+1) \big) \nonumber \\
\!\!\!\!\!&-&\!\!\!\!\! 
\big (\pi_{k_1} + \cdots + \pi_{k_r} \big ).\nonumber
\end{eqnarray}
Considering the Assumption~\ref{as:WHA1},we have at least one label such that $P_R^{k}-P_L^{k} = \gamma_1 > 0, \gamma_1 \in (0,1]$. 
Without loss of generality let $P_{R}^{k_1}-P_L^{k_1} = \gamma_1 > 0$ for the top label in the parent node. 
Thus: $\pi_{i_1}(P^{i_1}_R-P^{i_1}_L+1) \geq \pi_{k_1} (1+\gamma_1)$ and $\pi_{j_1}(P^{j_1}_L-P^{j_1}_R+1) \geq \pi_{k_1} (1-\gamma_1)$. Therefore we have $(P@r)^{t+1} - (P@r)^{t} \geq 0$. Due to the weak hypothesis assumption the histograms in the children nodes are different than in the parent on at least one position corresponding to one label. If that label is in the top $r$ labels that we assign to the children node, the error will be reduced. If not, the error is going to be the same, but that cannot happen forever, i.e. for some split the label(s) for which the weak hypothesis assumption holds will eventually be in the top $r$ labels that are assigned to the children node. To put this intuition into more formal language, if any of the top $r$ labels in any of the children are different from the top $r$ parent labels, i.e. $y_r^1 \neq y_r$, $y_r^2 \neq y_r$,...,  or $y_r^M \neq y_r$ we will have $(P@r)^{t+1} - (P@r)^{t} > 0$. Because of the weak hypothesis assumption, the latter condition is inevitable and will eventually hold after some node split. This shows that the error is monotonically decreasing.
\end{proof}

\section{ADDITIONAL ALGORITHMS}
\vspace{-0.7in}
\begin{minipage}[t]{0.48\textwidth}
\vspace{0.4in}
 \setcounter{algorithm}{2}
\begin{algorithm}[H]
\caption{\textbf{OptimizeObjective ($v$)}}
\begin{algorithmic} 
\STATE $J_{opt} \leftarrow +\infty$
\FOR{$s=1\dots 2^M-1$}
\FOR{$m=1\dots M$}
\STATE $\hat{y}[m] = s \wedge 2^{(m-1)} > 0$
\STATE $P_m \leftarrow \frac{(v.C_v - y_i.size())v.P_m + y_i.size()*\hat{y}[m]}{v.C_v}$
\FOR{$k \in y_i$}
\STATE $P_m^k \leftarrow \frac{(v.l_v[k] - 1)v.P_m^k + \hat{y}[m]}{v.l_v[k]}$
\ENDFOR
\ENDFOR
\vspace{0.05in}
\STATE \% objective computation
\STATE $B \leftarrow \sum_{j = 1}^M\sum_{l=j+1}^M\left|P_j-P_l\right|$
\STATE $CI \leftarrow \sum_{i=1}^{y_i.size()} \sum_{j=1}^M\sum_{l=j+1}^M\frac{v.l_v(i)}{v.C_v}\left|P_j^i-P_l^i\right|$
\STATE $MWP \leftarrow \left|\left(\sum_{j=1}^MP_j\right)-1\right|$
\STATE $J \leftarrow B - \lambda_1 CI + \lambda_2 MWP$
\vspace{0.05in}
\IF{$J < J_{opt}$}
\STATE $J_{opt} \leftarrow J$
\STATE $\hat{y}_{opt} \leftarrow \hat{y}$
\ENDIF
\ENDFOR
\STATE {\bf return} $\hat{y}_{opt}$
\end{algorithmic}
\label{alg:OO}
\end{algorithm}
\setlength{\textfloatsep}{0.3cm}
\setlength{\floatsep}{0.3cm}
\end{minipage} 
\begin{minipage}[t]{0.48\textwidth}
\vspace{0.2in}
\begin{algorithm}[H]
\caption{\textbf{TrainRegressors ($v$)}}
\begin{algorithmic} 
\STATE \% $y_i.size()$ denotes the size of vector $y_i$
\STATE $v.C_v  \leftarrow 0$;\:\:\:\:\:$v.l_v \leftarrow \oldemptyset$;\:\:\:\:\:$v.isLeaf \leftarrow false$
\FOR{$m=1\dots M$}
\STATE $v.w_m \leftarrow$ random weights;\:\:\:\:\:$v.P_m \leftarrow 0$\\
\textbf{for} i=1\dots K \textbf{do}\;\; $v.P_m^i \leftarrow 0$\;\; \textbf{end for}
\ENDFOR
\FOR{$e=1\dots E$}
\FOR{$i \in v.I$} 
\FOR{$k \in y_i$}
\STATE $v.C_v\texttt{++}$;\:\:\:\:\:$v.l_{v}[k]\texttt{++}$
\ENDFOR 
\STATE $\hat{y} \leftarrow$ {\bf OptimizeObjective ($v$)}
\FOR{$m=1\dots M$}
\STATE {\bf Train} $v.w_m$ with example $(x_i, \hat{y}[m])$
\STATE $pred \leftarrow clamp_{[0,1]}(v.w_m^T x_i)$
\STATE $v.P_m \leftarrow$ \\ 
$\frac{(v.C_v - y_i.size()))*v.P_m + y_i.size()*pred}{v.C_v}$
\FOR{$k \in y_i$}
\STATE $v.P_m^k \leftarrow \frac{(v.l_v[k] - 1)*v.P_m^k + pred}{v.l_v[k]}$
\ENDFOR
\ENDFOR
\ENDFOR
\ENDFOR
\end{algorithmic}
\label{alg:WU}
\end{algorithm}
\end{minipage}
\begin{minipage}[t]{1\textwidth}
\vspace{-0.1in}
\begin{algorithm}[H]
\caption{\textbf{CreateChildren ($v$)}}
\begin{algorithmic} 
\FOR{$m=1\dots M$}
\STATE $v.ch[m].I \leftarrow \oldemptyset$
\STATE $v.ch[m].Lhist \leftarrow \oldemptyset$
\STATE $v.ch[m].isLeaf \leftarrow true$ 
\ENDFOR
\FOR{$i \in v.I$}
\STATE $sent \leftarrow false$
\FOR{$m \in 1\dots M$}
\IF{$v.w_m^{\top}x_i>0.5$}
\STATE \% example $(x_i, y_i)$ goes to child $m$
\STATE {\bf UpdateHist ($v.ch[m].Lhist$, $y_i$)} 
\STATE $v.ch[m].I.push(i)$
\STATE $sent \leftarrow true$
\ENDIF
\ENDFOR
\IF{{\bf not} $sent$}
\STATE $m \leftarrow \arg\max_{\hat{m}\in \{1,2,\dots,M\}}{v.w_{\hat{m}}^{\top}x_i}$
\STATE {\bf UpdateHist ($v.ch[m].Lhist$, $y_i$)} 
\STATE $v.ch[m].I.push(i)$
\ENDIF
\ENDFOR
\STATE {\bf return} $v.ch$ 
\end{algorithmic}
\label{alg:CC}
\end{algorithm}
\end{minipage}

\clearpage
\newpage
\section{EXPERIMENTAL SETUP}

LdSM was implemented in C++. The regressors in the tree nodes were trained with either SGD [\cite{bottou-98x}] (Mediamill) or NAG [\cite{DBLP:journals/corr/abs-1305-6646}] (remaining data sets) with step size chosen from $[0.001,1]$. The trees were trained with up to $20$ passes through the data and we explored trees with up to $64K$ nodes for  Mediamill and Bibtex, up to $32K$ for Delicious, and up to $2K$ for the rest of the data sets. $\lambda_1$ and $\lambda_2$ were chosen from the set $\{0.5,1,1.5, 2, 4\}$ and $M$ was set to either $2$ or $4$. FastXML, PFastreXML, CRAFTML and LdSM algorithms use tree ensembles of size $\sim 50$. PLT and LPSR use a single tree, and GBDT-S uses up to $100$ trees. 
\begin{table}[htp!]
\centering
 \setlength{\tabcolsep}{3pt}
 \caption{Data set statistics.}
 \begin{tabular}{|c||c|c|c|c|c|c|}\hline

\multirow{2}{*}{Data Sets} & \multirow{2}{*}{\#Features} & \multirow{2}{*}{\#Labels} & \#Training & \#Testing & Avg. Labels & Avg. Points \\ 
& & & samples& samples&  per Point &  per Label\\ \hline
\hline
Mediamill & 120&	101&	30993&	12914 &4.38 &1902.15	\\
 \hline
 Bibtex & 1836&	159&	4880&	2515&		2.40&111.71	\\
 \hline
Delicious &500&	983 &	12920&	3185& 19.03&	311.61\\
\hline
Eurlex & 5000&	3993&	15539&	3809& 5.31&	25.73	\\
 \hline
AmazonCat-13k & 203882&	13330&	1186239&	306782	& 5.04&	448.57 \\
\hline
Wiki10-31k &
101938&	30938	&14146&	6616&	18.64 &8.52\\
\hline
Delicious-200k &782585&	205443	&196606	&100095&	75.54 &72.29\\
\hline
Amazon-670k&135909	&670091&	490449&	153025	&5.45&3.99\\
\hline
\end{tabular}
\end{table}

\begin{table}[htp!]
\caption{Experimental setup that was used to obtain results for various data sets with LdSM method: the depth of the deepest tree in the ensemble and tree arity.}
    \centering
    \begin{tabular}{|c||c|c|}
    \hline
    Data sets& Depth& Arity\\
    \hline
         Mediamill&9 &4  \\
         \hline
         Bibtex& 9&4\\
         \hline
         Delicious&10&4\\
         \hline
         AmazonCat-13k&18&2\\
         \hline
         Wiki10-31k&10&4\\
         \hline
         Delicious-200k&46&2\\
         \hline
         Amazon-670k&25&2\\
         \hline
    \end{tabular}
    \label{tab:dM}
\end{table}

\clearpage
\newpage
\section{ADDITIONAL EXPERIMENTAL RESULTS}
\begin{table}[htp!]
\caption{Prediction time [ms] per example for tree-based approaches: GBDT-S, CRAFTML, FastXML, PFastreXML, LdSM (LPSR and PLT are NA) and other (not purely tree-based) methods: Parabel, DisMEC~\cite{Babbar:2017:DDS:3018661.3018741}, PD-Sparse~\cite{DBLP:conf/icml/YenHRZD16}, PPD-Sparse~\cite{DBLP:conf/kdd/YenHDRDX17}, OVA-Primal++ ~\cite{SDM2019} and SLEEC~\cite{NIPS2015_5969}  on various data sets. The best result among tree-based methods is in bold, and among all methods is underlined.}
\setlength{\tabcolsep}{3pt}
\vspace{0.05in}
\centering
\setlength{\tabcolsep}{3pt}
\begin{tabular}{c||c|c|c|c|| c|}
\cline{2-6} 
&\multicolumn{5}{|c|}{Tree-based}\\
\cline{2-6} 
&\multirow{1}{*}{} GBDT-S&CRAFTML&FastXML & PFastreXML & LdSM \\
\hline
\hline
\multicolumn{1}{|c||}{\multirow{1}{*}{Mediamill}}&0.05&NA& 0.27 & 0.37 & \textbf{0.05} \\
\hline
\multicolumn{1}{|c||}{\multirow{1}{*}{Bibtex}}& NA&NA& 0.64 & 0.73 & \textbf{0.013}\\
\hline
\multicolumn{1}{|c||}{\multirow{1}{*}{Delicious}} &0.04&NA& NA & NA & \underline{\textbf{0.014}}\\
\hline
\multicolumn{1}{|c||}{\multirow{1}{*}{AmazonCat-13k}} &NA&5.12& 1.21 & 1.34 & \underline{\textbf{0.04}}\\
\hline
\multicolumn{1}{|c||}{\multirow{1}{*}{Wiki10-31k}} &0.20&NA& 1.38 & NA & \underline{\underline{\textbf{0.15}}}\\
\hline
\multicolumn{1}{|c||}{\multirow{1}{*}{Delicious-200k}} &\textbf{0.14}&8.6& 1.28 & 7.40 & 1.21\\
\hline
\multicolumn{1}{|c||}{\multirow{1}{*}{Amazon-670k}} &NA&5.02& 1.48 & 1.98 & \underline{\textbf{0.12}}\\
\hline
\end{tabular}
\begin{tabular}{c|| c|c|c|c|c|c|}
\cline{2-7} 
&\multicolumn{6}{c|}{Other}\\
\cline{2-7} 
&\multirow{1}{*}{} Parabel& DiSMEC&PD-Sparse&PPD-Sparse&OVA-Primal++&SLEEC \\
\hline
\hline
\multicolumn{1}{|c||}{\multirow{1}{*}{Mediamill}}&NA&0.142&\underline{ 0.004}&0.078&NA&4.95 \\
\hline
\multicolumn{1}{|c||}{\multirow{1}{*}{Bibtex}} &NA&0.28&\underline{0.007}&0.094&NA&0.70\\
\hline
\multicolumn{1}{|c||}{\multirow{1}{*}{Delicious}} &NA&NA&NA&NA&NA&NA\\
\hline
\multicolumn{1}{|c||}{\multirow{1}{*}{AmazonCat-13k}} &NA&0.20&0.87&1.82&NA&13.36\\
\hline
\multicolumn{1}{|c||}{\multirow{1}{*}{Wiki10-31k}} &NA&116.66&NA&NA&NA&NA\\
\hline
\multicolumn{1}{|c||}{\multirow{1}{*}{Delicious-200k}} &NA&311.4&\underline{0.43}&275&NA&2.69\\
\hline
\multicolumn{1}{|c||}{\multirow{1}{*}{Amazon-670k}} &1.13&148&NA&20&NA&6.94\\
\hline
\end{tabular}
\label{tab:predtimeall}
\end{table}

\begin{table}[H]
\caption{Training time [s] for tree-based approaches: GBDT-S, CRAFTML, FastXML, PFastreXML, LdSM (LPSR and PLT are NA) and other (not purely tree-based) methods: Parabel, DisMEC, PD-Sparse, PPD-Sparse, SLEEC,  on various data sets. The best result among tree-based methods is in bold, and among all methods is underlined.}
\setlength{\tabcolsep}{3pt}
\vspace{0.05in}
\centering
\begin{tabular}{c||c|c|c|c|| c|}
\cline{2-6} 
&\multicolumn{5}{|c|}{Tree-based}\\
\cline{2-6} 
&\multirow{1}{*}{} GBDT-S&CRAFTML&FastXML & PFastreXML & LdSM \\
\hline
\hline
\multicolumn{1}{|c||}{\multirow{1}{*}{Mediamill}} &NA&NA& 276.4 & 293.2 & \textbf{52.7} \\
\hline
\multicolumn{1}{|c||}{\multirow{1}{*}{Bibtex}} &NA&NA& 21.68 & 21.47& \textbf{9.48}\\
\hline
\multicolumn{1}{|c||}{\multirow{1}{*}{Delicious}} &NA&NA& NA & NA & \underline{\textbf{21.74}}\\
\hline
\multicolumn{1}{|c||}{\multirow{1}{*}{AmazonCat-13k}} &NA&2876& 11535&13985 & \textbf{607}\\
\hline
\multicolumn{1}{|c||}{\multirow{1}{*}{Wiki10-31k}}&1044&NA& 1275.9 & NA & \underline{\textbf{179}}\\
\hline
\multicolumn{1}{|c||}{\multirow{1}{*}{Delicious-200k}}&NA&\underline{\textbf{1174}}& 8832.46 & 8807.51 & 5125\\
\hline
\multicolumn{1}{|c||}{\multirow{1}{*}{Amazon-670k}}&NA&1487& 5624& 6559& \textbf{957}\\
\hline
\end{tabular}
\begin{tabular}{c|| c|c|c|c|c|c|}
\cline{2-7} 
&\multicolumn{6}{c|}{Other}\\
\cline{2-7} 
&\multirow{1}{*}{} Parabel& DiSMEC&PD-Sparse&PPD-Sparse&OVA-Primal++&SLEEC \\
\hline
\hline
\multicolumn{1}{|c||}{\multirow{1}{*}{Mediamill}} &NA&\underline{12.15}&34.1&23.8&NA&9504 \\
\hline
\multicolumn{1}{|c||}{\multirow{1}{*}{Bibtex}} &NA&\underline{0.203}&7.71&0.232&NA&296.86\\
\hline
\multicolumn{1}{|c||}{\multirow{1}{*}{Delicious}} &NA&NA&NA&NA&NA&NA\\
\hline
\multicolumn{1}{|c||}{\multirow{1}{*}{AmazonCat-13k}}&NA&11828&2789&\underline{122.8}&7330&119840 \\
\hline
\multicolumn{1}{|c||}{\multirow{1}{*}{Wiki10-31k}}&NA&NA&NA&NA&1364&NA\\
\hline
\multicolumn{1}{|c||}{\multirow{1}{*}{Delicious-200k}}&NA&38814&5137.4&2869&NA&4838.7\\
\hline
\multicolumn{1}{|c||}{\multirow{1}{*}{Amazon-670k}}&1512&174135&NA&\underline{921.9}&NA&20904\\
\hline
\end{tabular}
\label{tab:traintimesecall}
\vspace{-0.1in}
\end{table}

\begin{remark}[Training time]
The training time of LdSM can be reduced order of magnitudes by using lower number of epochs at the expense of $\sim 1 \%$ loss in the accuracy. However, we report the training times that correspond to the best accuracy results obtained with LdSM. 
\end{remark}

\begin{table}[htp!]
\caption{Propensity Score Precisions: $PSP@1$, $PSP@3$, and $PSP@5$ ($\%$) and Propensity Score nDCG scores: $PSN@1$, $PSN@3$, and $PSN@5$ ($\%$) obtained by different tree-based methods on common multi-label data sets.}
\label{tab:accup}
\setlength{\tabcolsep}{0.5pt}
\centering
\begin{tabular}{c|| c| c| c||c| c| c|}
\hline
\multicolumn{7}{|c|}{Mediamill $D = 120, K = 101$ }\\
\hline
\multicolumn{1}{|c||}{\multirow{1}{*}{Algorithm}} & PSP@1 & PSP@3 & PSP@5 & PSN@1 & PSN@3 & PSN@5\\
\hline
\multicolumn{1}{|c||}{\multirow{1}{*}{LPSR}} & 66.06& 63.83 & 61.11 & 66.06& 64.83 & 62.94\\
\hline
\multicolumn{1}{|c||}{\multirow{1}{*}{FastXML}} & 66.67& 65.43& 64.30& 66.67& 66.08& 65.24\\
\hline
\multicolumn{1}{|c||}{\multirow{1}{*}{PFastreXML}} &66.88& 65.90& 64.90& 66.88& 66.47& 65.71\\
\hline
\hline
\multicolumn{1}{|c||}{\multirow{1}{*}{LdSM}}& \textbf{70.27} & \textbf{69.66} & \textbf{68.86} &\textbf{70.27} &\textbf{69.99}&\textbf{70.30} \\
\hline
\end{tabular}
\vspace{0.1in}

\centering
\setlength{\tabcolsep}{0.5pt}
\begin{tabular}{c|| c| c| c||c| c| c|}
\hline
\multicolumn{7}{|c|}{Bibtex $D = 1.8k, K = 159$ }\\
\hline
\multicolumn{1}{|c||}{\multirow{1}{*}{Algorithm}} & PSP@1 & PSP@3 & PSP@5 & PSN@1 & PSN@3 & PSN@5\\
\hline
\multicolumn{1}{|c||}{\multirow{1}{*}{LPSR}} &  49.20& 50.14 & 55.01 &49.20 & 49.78& 52.41\\
\hline
\multicolumn{1}{|c||}{\multirow{1}{*}{FastXML}} &  48.54& 52.30& 58.28& 48.54& 51.11& 54.38 \\
\hline
\multicolumn{1}{|c||}{\multirow{1}{*}{PFastreXML}} &  \textbf{52.28}& 54.36& \textbf{60.55}& \textbf{52.28}& 53.62&56.99\\
\hline
\hline
\multicolumn{1}{|c||}{\multirow{1}{*}{LdSM}} &  52.01& \textbf{54.38}& 60.34& 52.01& \textbf{53.67}& \textbf{57.08}  \\
\hline
\end{tabular}
\vspace{0.1in}

\setlength{\tabcolsep}{0.5pt}
\centering
\begin{tabular}{c|| c| c| c||c| c| c|}
\hline
\multicolumn{7}{|c|}{Delicious $D = 500, K = 983$ }\\
\hline
\multicolumn{1}{|c||}{\multirow{1}{*}{Algorithm}} & PSP@1 & PSP@3 & PSP@5 & PSN@1 & PSN@3 & PSN@5\\
\hline
\multicolumn{1}{|c||}{\multirow{1}{*}{LPSR}} & 31.34&  32.57& 32.77 &31.34 & 32.29& 32.50\\
\hline
\multicolumn{1}{|c||}{\multirow{1}{*}{FastXML}} &32.35& 34.51& 35.43& 32.35& 34.00& 34.73 \\
\hline
\multicolumn{1}{|c||}{\multirow{1}{*}{PFastreXML}} & 34.57& 34.80& 35.86& 34.57& 34.71& 35.42\\
\hline
\hline
\multicolumn{1}{|c||}{\multirow{1}{*}{LdSM}} & \textbf{37.27} & \textbf{38.32} & \textbf{38.46} &\textbf{37.27} & \textbf{38.09}&\textbf{38.28} \\
\hline
\end{tabular}
\vspace{0.1in}

\setlength{\tabcolsep}{0.5pt}
\centering
\begin{tabular}{c|| c| c| c||c| c| c|}
\hline
\multicolumn{7}{|c|}{AmazonCat-13k $D = 204k, K = 13k$}\\
\hline
\multicolumn{1}{|c||}{\multirow{1}{*}{Algorithm}} & PSP@1 & PSP@3 & PSP@5 & PSN@1 & PSN@3 & PSN@5\\
\hline
\multicolumn{1}{|c||}{\multirow{1}{*}{LPSR}} &-& - & - &- &- &- \\
\hline
\multicolumn{1}{|c||}{\multirow{1}{*}{FastXML}} & 48.31& 60.26& 69.30& 48.31& 56.90& 62.75 \\
\hline
\multicolumn{1}{|c||}{\multirow{1}{*}{PFastreXML}} &\textbf{69.52}& \textbf{73.22}& \textbf{75.48}& \textbf{69.52}& \textbf{72.21}& \textbf{73.67} \\
\hline
\hline
\multicolumn{1}{|c||}{\multirow{1}{*}{LdSM}} &51.06& 58.67& 60.47& 51.06& 57.78& 60.52 \\
\hline
\end{tabular}
\centering
\vspace{0.1in}

\setlength{\tabcolsep}{0.5pt}
\begin{tabular}{c|| c| c| c||c| c| c|}
\hline
\multicolumn{7}{|c|}{Wiki10-31k $D = 102k, K = 31k$ }\\
\hline
\multicolumn{1}{|c||}{\multirow{1}{*}{Algorithm}} & PSP@1 & PSP@3 & PSP@5 & PSN@1 & PSN@3 & PSN@5\\
\hline
\multicolumn{1}{|c||}{\multirow{1}{*}{LPSR}}&12.79&12.26  & 12.13&12.79 & 12.38& 12.27\\
\hline
\multicolumn{1}{|c||}{\multirow{1}{*}{FastXML}}&9.80& 10.17& 10.54& 9.80& 10.08& 10.33\\
\hline
\multicolumn{1}{|c||}{\multirow{1}{*}{PFastreXML}}&\textbf{19.02}&\textbf{18.34}& \textbf{18.43}& \textbf{19.02}& \textbf{18.49}& \textbf{18.52} \\
\hline
\hline
\multicolumn{1}{|c||}{\multirow{1}{*}{LdSM}}&11.87& 12.35&12.89& 11.87& 12.42& 12.58\\
\hline
\end{tabular}
\centering
\vspace{0.1in}

\setlength{\tabcolsep}{0.5pt}
\begin{tabular}{c|| c| c| c||c| c| c|}
\hline
\multicolumn{7}{|c|}{Delicious-200k $D = 783k, K = 205k$}\\
\hline
\multicolumn{1}{|c||}{\multirow{1}{*}{Algorithm}} & PSP@1 & PSP@3 & PSP@5 & PSN@1 & PSN@3 & PSN@5\\
\hline
\multicolumn{1}{|c||}{\multirow{1}{*}{LPSR}}&3.24&3.42 &3.64 &3.24 & 3.37& 3.52\\
\hline
\multicolumn{1}{|c||}{\multirow{1}{*}{FastXML}}&6.48& 7.52& 8.31& 6.51& 7.26& 7.79\\
\hline
\multicolumn{1}{|c||}{\multirow{1}{*}{PFastreXML}}&3.15& 3.87& 4.43& 3.15& 3.68& 4.06 \\
\hline
\hline
\multicolumn{1}{|c||}{\multirow{1}{*}{LdSM}}&\textbf{7.16} &\textbf{8.26} & \textbf{9.11} &\textbf{7.16} &\textbf{7.92} & \textbf{8.45}\\
\hline
\end{tabular}
\centering
\vspace{0.1in}

\setlength{\tabcolsep}{0.5pt}
\begin{tabular}{c|| c| c| c||c| c| c|}
\hline
\multicolumn{7}{|c|}{Amazon-670k $D = 135k, K = 670k$}\\
\hline
\multicolumn{1}{|c||}{\multirow{1}{*}{Algorithm}} &PSP@1 & PSP@3 & PSP@5 & PSN@1 & PSN@3 & PSN@5 \\
\hline
\multicolumn{1}{|c||}{\multirow{1}{*}{LPSR}} &16.68&18.07  &19.43 &16.68 & 17.70& 18.63\\
\hline
\multicolumn{1}{|c||}{\multirow{1}{*}{FastXML}} &19.37& 23.26& 26.85& 19.37& 22.25& 24.69	\\
\hline
\multicolumn{1}{|c||}{\multirow{1}{*}{PFastreXML}} &\textbf{29.30}& 30.80& 32.43& \textbf{29.30}& \textbf{30.40}& \textbf{31.49} \\
\hline
\hline
\multicolumn{1}{|c||}{\multirow{1}{*}{LdSM}} &28.14& \textbf{30.82}& \textbf{33.16}& 28.14& 29.80& 30.71\\
\hline
\end{tabular}

\end{table}

\begin{table}[htp!]
\scriptsize
\setlength{\tabcolsep}{5pt}
\caption{Precisions: $P@1$, $P@3$, and $P@5$ ($\%$) and nDCG scores: $N@1$, $N@3$, and $N@5$ ($\%$) obtained for tree-based approaches: GBDT-S, CRAFTML, FastXML, PFastreXML, LPSR, PLT, and LdSM and other (not purely tree-based) methods: Parabel, DisMEC, PD-Sparse, PPD-Sparse, OVA-Primal++, LEML, and SLEEC,  on various data sets. The best result among tree-based methods is in bold, and among all methods is underlined.}
\begin{tabular}{c |c|| c| c| c||c| c| c|}
\multirow{2}{*}{}&
\multicolumn{7}{c}{Mediamill }\\
\cline{2-8}
&\multicolumn{1}{|c||}{\multirow{1}{*}{Algorithm}} & P@1 & P@3 & P@5 & N@1 & N@3 & N@5\\
\cline{2-8}
\multirow{6}{*}{\rotatebox{90}{Other} $\begin{dcases*}\\ \\ \\ \\ \\ \end{dcases*}$}&\multicolumn{1}{|c||}{\multirow{1}{*}{Parabel}} & 83.91& 67.12 & 52.99 & 83.91& 75.22 & 72.21\\
\cline{2-8}
&\multicolumn{1}{|c||}{\multirow{1}{*}{DiSMEC}} & -& - & - & -& -& -\\
\cline{2-8}
&\multicolumn{1}{|c||}{\multirow{1}{*}{PD-Sparse}} & 81.86& 62.52 & 45.11 & 81.86& 70.21 & 63.71\\
\cline{2-8}
&\multicolumn{1}{|c||}{\multirow{1}{*}{PPD-Sparse}} & -& - & - & -& - & -\\
\cline{2-8}
&\multicolumn{1}{|c||}{\multirow{1}{*}{OVA-Primal}} & -& - & - & -& - & -\\
\cline{2-8}
&\multicolumn{1}{|c||}{\multirow{1}{*}{LEML}} & 84.01& 67.20 & 52.80 & 84.01& 75.23 & 71.96\\
\cline{2-8}
&\multicolumn{1}{|c||}{\multirow{1}{*}{SLEEC}} & 87.82& 73.45& 59.17 & 87.82& 81.50 & 79.22\\
\hline
\hline
\multirow{6}{*}{\rotatebox{90}{Tree} $\begin{dcases*}\\ \\ \\ \\ \\ \end{dcases*}$}&\multicolumn{1}{|c||}{\multirow{1}{*}{LPSR}} & 83.57& 65.78 & 49.97 & 83.57& 74.06 & 69.34\\
\cline{2-8}
&\multicolumn{1}{|c||}{\multirow{1}{*}{PLT}} & -& - & - & -& -& -\\
\cline{2-8}
&\multicolumn{1}{|c||}{\multirow{1}{*}{GBDT-S}} & 84.23& 67.85 & - & -& -& -\\
\cline{2-8}
&\multicolumn{1}{|c||}{\multirow{1}{*}{CRAFTML}} & 85.86& 69.01 & 54.65 & -& -& -\\
\cline{2-8}
&\multicolumn{1}{|c||}{\multirow{1}{*}{FastXML}} & 84.22& 67.33 & 53.04 &84.22 & 75.41& 72.37\\
\cline{2-8}
&\multicolumn{1}{|c||}{\multirow{1}{*}{PFastreXML}} &83.98 & 67.37 & 53.02 & 83.98& 75.31&72.21 \\
\hline
\hline
&\multicolumn{1}{|c||}{\multirow{1}{*}{LdSM}}& \underline{\textbf{90.64}} &\underline{ \textbf{73.60}} & \underline{\textbf{58.62}} &\underline{\textbf{90.64}} &\underline{\textbf{82.14}}&\underline{\textbf{79.23} }\\
\cline{2-8}
\end{tabular}
\vspace{0.05in}
\hfill
\begin{tabular}{c |c|| c| c| c||c| c| c|}
\multirow{2}{*}{}&
\multicolumn{7}{c}{Bibtex }\\
\cline{2-8}
&\multicolumn{1}{|c||}{\multirow{1}{*}{Algorithm}} & P@1 & P@3 & P@5 & N@1 & N@3 & N@5 \\
\cline{2-8}
\multirow{6}{*}{\rotatebox{90}{Other} $\begin{dcases*}\\ \\ \\ \\ \\ \end{dcases*}$}&\multicolumn{1}{|c||}{\multirow{1}{*}{Parabel}} & 64.53& 38.56 & 27.94 & 64.53& 59.35 & 61.06\\
\cline{2-8}
&\multicolumn{1}{|c||}{\multirow{1}{*}{DiSMEC}} & -& - & - & -& -& -\\
\cline{2-8}
&\multicolumn{1}{|c||}{\multirow{1}{*}{PD-Sparse}} & 61.29& 35.82 & 25.74 & 61.29& 55.83 & 57.35\\
\cline{2-8}
&\multicolumn{1}{|c||}{\multirow{1}{*}{PPD-Sparse}} & -& - & - & -& - & -\\
\cline{2-8}
&\multicolumn{1}{|c||}{\multirow{1}{*}{OVA-Primal}} & -& - & - & -& - & -\\
\cline{2-8}
&\multicolumn{1}{|c||}{\multirow{1}{*}{LEML}} & 62.54& 38.41 & 28.21& 62.54& 58.22 & 60.53\\
\cline{2-8}
&\multicolumn{1}{|c||}{\multirow{1}{*}{SLEEC}} & 65.08& 39.64 & 28.87 & \underline{65.08}& \underline{60.47} & 62.64\\
\hline
\hline
\multirow{6}{*}{\rotatebox{90}{Tree} $\begin{dcases*}\\ \\ \\ \\ \\ \end{dcases*}$}&\multicolumn{1}{|c||}{\multirow{1}{*}{LPSR}} & 62.11& 36.65 & 26.53 &62.11 & 56.50& 58.23\\
\cline{2-8}
&\multicolumn{1}{|c||}{\multirow{1}{*}{PLT}} & -& - & - & -& -& -\\
\cline{2-8}
&\multicolumn{1}{|c||}{\multirow{1}{*}{GBDT-S}} & -& - & - & -& -& -\\
\cline{2-8}
&\multicolumn{1}{|c||}{\multirow{1}{*}{CRAFTML}}& \underline{\textbf{65.15}}& \underline{\textbf{39.83}} & 28.99& -& - &-\\
\cline{2-8}
&\multicolumn{1}{|c||}{\multirow{1}{*}{FastXML}} & 63.42& 39.23 & 28.86 & 63.42& 59.51&61.70 \\
\cline{2-8}
&\multicolumn{1}{|c||}{\multirow{1}{*}{PFastreXML}} & 63.46 & 39.22 & 29.14 & 63.46& 59.61& 62.12\\
\hline
\hline
&\multicolumn{1}{|c||}{\multirow{1}{*}{LdSM}} & 64.69 & 39.70 & \underline{\textbf{29.25}} &\textbf{64.69} &\textbf{60.37} &\underline{\textbf{62.73}}  \\
\cline{2-8}
\end{tabular}
\hfill
\begin{tabular}{c |c|| c| c| c||c| c| c|}
\multirow{2}{*}{}&
\multicolumn{7}{c}{Delicious}\\
\cline{2-8}
&\multicolumn{1}{|c||}{\multirow{1}{*}{Algorithm}} & P@1 & P@3 & P@5 & N@1& N@3 & N@5\\
\cline{2-8}
\multirow{6}{*}{\rotatebox{90}{Other} $\begin{dcases*}\\ \\ \\ \\ \\ \end{dcases*}$}&\multicolumn{1}{|c||}{\multirow{1}{*}{Parabel}} & 67.44& 61.83 & 56.75 & 67.44& 63.15 & 59.41\\
\cline{2-8}
&\multicolumn{1}{|c||}{\multirow{1}{*}{DiSMEC}} & -& - & - & -& -& -\\
\cline{2-8}
&\multicolumn{1}{|c||}{\multirow{1}{*}{PD-Sparse}} & 51.82& 44.18 & 38.95 & 51.82& 46.00
 & 42.02\\
\cline{2-8}
&\multicolumn{1}{|c||}{\multirow{1}{*}{PPD-Sparse}} & -& - & - & -& - & -\\
\cline{2-8}
&\multicolumn{1}{|c||}{\multirow{1}{*}{OVA-Primal}} & -& - & - & -& - & -\\
\cline{2-8}
&\multicolumn{1}{|c||}{\multirow{1}{*}{LEML}} & 65.67& 60.55& 56.08 & 65.67& 61.77 & 58.47\\
\cline{2-8}
&\multicolumn{1}{|c||}{\multirow{1}{*}{SLEEC}} & 67.59& 61.38 & 56.56 & 67.59& 62.87 & 59.28\\
\hline
\hline
\multirow{6}{*}{\rotatebox{90}{Tree} $\begin{dcases*}\\ \\ \\ \\ \\ \end{dcases*}$}&\multicolumn{1}{|c||}{\multirow{1}{*}{LPSR}} & 65.01&  58.96& 53.49 &65.01 & 60.45& 56.38\\
\cline{2-8}
&\multicolumn{1}{|c||}{\multirow{1}{*}{PLT}} & -& - & - & -& -& -\\
\cline{2-8}
&\multicolumn{1}{|c||}{\multirow{1}{*}{GBDT-S}} & 69.29& 63.62 & - & -& -& -\\
\cline{2-8}
&\multicolumn{1}{|c||}{\multirow{1}{*}{CRAFTML}} & 70.26& 63.98 & 59.00 & -& -& -\\
\cline{2-8}
&\multicolumn{1}{|c||}{\multirow{1}{*}{FastXML}} &69.61 & 64.12 & 59.27 &69.61 & 65.47&61.90 \\
\cline{2-8}
&\multicolumn{1}{|c||}{\multirow{1}{*}{PFastreXML}} & 67.13&  62.33& 58.62 &67.13 & 63.48& 60.74\\
\hline\hline
&\multicolumn{1}{|c||}{\multirow{1}{*}{LdSM}} & \underline{\textbf{71.91}} & \underline{\textbf{65.34}} & \underline{\textbf{60.24}} &\underline{\textbf{71.91} }& \underline{\textbf{66.90}}&\underline{\textbf{63.09}} \\
\cline{2-8}
\end{tabular}
\vspace{0.05in}
\hfill
\hfill
\begin{tabular}{c |c|| c| c| c||c| c| c|}
\multirow{2}{*}{}&\multicolumn{7}{c}{AmazonCat-13k }\\
\cline{2-8}
&\multicolumn{1}{|c||}{\multirow{1}{*}{Algorithm}} & P@1 & P@3 & P@5 & N@1 & N@3 & N@5 \\
\cline{2-8}
\multirow{6}{*}{\rotatebox{90}{Other} $\begin{dcases*}\\ \\ \\ \\ \\ \end{dcases*}$}&\multicolumn{1}{|c||}{\multirow{1}{*}{Parabel}} & 93.03& \underline{79.16}& \underline{64.52} &93.03& \underline{87.72} & \underline{86.00}\\
\cline{2-8}
&\multicolumn{1}{|c||}{\multirow{1}{*}{DiSMEC}} & 93.40& 79.10& 64.10& 93.40& 87.70& 85.80\\
\cline{2-8}
&\multicolumn{1}{|c||}{\multirow{1}{*}{PD-Sparse}} & 90.60& 75.14 & 60.69 & 90.60& 84.00 & 82.05\\
\cline{2-8}
&\multicolumn{1}{|c||}{\multirow{1}{*}{PPD-Sparse}} & -& - & - & -& - & -\\
\cline{2-8}
&\multicolumn{1}{|c||}{\multirow{1}{*}{OVA-Primal}} & 93.75& 78.89 & 63.66 & -& - & -\\
\cline{2-8}
&\multicolumn{1}{|c||}{\multirow{1}{*}{LEML}} & -& - & - & -& - & -\\
\cline{2-8}
&\multicolumn{1}{|c||}{\multirow{1}{*}{SLEEC}} & 90.53& 76.33 & 61.52& 90.53& 84.96 & 82.77\\
\hline
\hline
\multirow{6}{*}{\rotatebox{90}{Tree} $\begin{dcases*}\\ \\ \\ \\ \\ \end{dcases*}$}
&\multicolumn{1}{|c||}{\multirow{1}{*}{LPSR}} & -& - & - & -& -& -\\
\cline{2-8}
&\multicolumn{1}{|c||}{\multirow{1}{*}{PLT}} & 91.47& 75.84 & 61.02 & -& -& -\\
\cline{2-8}
&\multicolumn{1}{|c||}{\multirow{1}{*}{GBDT-S}} & -& - & - & -& -& -\\
\cline{2-8}
&\multicolumn{1}{|c||}{\multirow{1}{*}{CRAFTML}} & 92.78& \textbf{78.48} & 63.58 & -& -& -\\
\cline{2-8}
&\multicolumn{1}{|c||}{\multirow{1}{*}{FastXML}} & 93.11&  78.2& 63.41 &93.11 & \textbf{87.07}&\textbf{85.16} \\
\cline{2-8}
&\multicolumn{1}{|c||}{\multirow{1}{*}{PFastreXML}} &91.75 &  77.97& \textbf{63.68} & 91.75& 86.48&84.96 \\
\hline\hline
&\multicolumn{1}{|c||}{\multirow{1}{*}{LdSM}} & \underline{\textbf{93.87}}&75.41 &57.86 & \underline{\textbf{93.87}}& 85.06&80.63 \\
\cline{2-8}
\end{tabular}
\begin{tabular}{c |c|| c| c| c||c| c| c|}
\multirow{2}{*}{}&\multicolumn{7}{c}{Wiki10-31k}\\
\cline{2-8}
&\multicolumn{1}{|c||}{\multirow{1}{*}{Algorithm}} & P@1 & P@3 & P@5 & N@1 & N@3 & N@5 \\
\cline{2-8}
\multirow{6}{*}{\rotatebox{90}{Other} $\begin{dcases*}\\ \\ \\ \\ \\ \end{dcases*}$}&\multicolumn{1}{|c||}{\multirow{1}{*}{Parabel}} & 84.31& 72.57 & 63.39 & 83.03& 71.01 & 68.30\\
\cline{2-8}
&\multicolumn{1}{|c||}{\multirow{1}{*}{DiSMEC}} & 85.20& 74.60 & 65.90 & 84.10& \underline{77.10}& \underline{70.40}\\
\cline{2-8}
&\multicolumn{1}{|c||}{\multirow{1}{*}{PD-Sparse}} & -& - & - & -& - & -\\
\cline{2-8}
&\multicolumn{1}{|c||}{\multirow{1}{*}{PPD-Sparse}} & -& - & - & -& - & -\\
\cline{2-8}
&\multicolumn{1}{|c||}{\multirow{1}{*}{OVA-Primal}} & 84.17& \underline{74.73} & \underline{65.92} & -& - & -\\
\cline{2-8}
&\multicolumn{1}{|c||}{\multirow{1}{*}{LEML}} & 73.47& 62.43 & 54.35 & 73.47& 64.92 & 58.69\\
\cline{2-8}
&\multicolumn{1}{|c||}{\multirow{1}{*}{SLEEC}} & \underline{85.88}& 72.98 & 62.70 &\underline{ 85.88}& 76.02& 68.13\\
\hline
\hline
\multirow{6}{*}{\rotatebox{90}{Tree} $\begin{dcases*}\\ \\ \\ \\ \\ \end{dcases*}$}&
\multicolumn{1}{|c||}{\multirow{1}{*}{LPSR}} & 72.72&58.51  & 49.50 &72.72 & 61.71& 54.63\\
\cline{2-8}
&\multicolumn{1}{|c||}{\multirow{1}{*}{PLT}} & 84.34& 72.34& 62.72& -& -& -\\
\cline{2-8}
&\multicolumn{1}{|c||}{\multirow{1}{*}{GBDT-S}} & 84.34& 70.82 & - & -& -& -\\
\cline{2-8}
&\multicolumn{1}{|c||}{\multirow{1}{*}{CRAFTML}} & \textbf{85.19}& \textbf{73.17} & \textbf{63.27} & -& -& -\\
\cline{2-8}
&\multicolumn{1}{|c||}{\multirow{1}{*}{FastXML}} & 83.03&  67.47& 57.76 &83.03 & \textbf{75.35}& 63.36\\
\cline{2-8}
&\multicolumn{1}{|c||}{\multirow{1}{*}{PFastreXML}} & 83.57&  68.61& 59.10 & 83.57& 72.00&64.54 \\
\hline\hline
&\multicolumn{1}{|c||}{\multirow{1}{*}{LdSM}} & 83.74 & 71.74 & 61.51 &\textbf{83.74} &74.60 & \textbf{66.77}\\
\cline{2-8}
\end{tabular}
\vspace{0.1in}
\hfill
\begin{tabular}{c |c|| c| c| c||c| c| c|}
\multirow{2}{*}{}&\multicolumn{7}{c}{Delicious-200k}\\
\cline{2-8}
&\multicolumn{1}{|c||}{\multirow{1}{*}{Algorithm}} & P@1 & P@3 & P@5 & N@1 & N@3 & N@5 \\
\cline{2-8}
\multirow{6}{*}{\rotatebox{90}{Other} $\begin{dcases*}\\ \\ \\ \\ \\ \end{dcases*}$}&\multicolumn{1}{|c||}{\multirow{1}{*}{Parabel}} & 46.97& 40.08 & 36.63 & 46.97& 41.72 & 39.07\\
\cline{2-8}
&\multicolumn{1}{|c||}{\multirow{1}{*}{DiSMEC}} & 45.50& 38.70 & 35.50 & 45.50& 40.90& 37.80\\
\cline{2-8}
&\multicolumn{1}{|c||}{\multirow{1}{*}{PD-Sparse}} & 34.37 & 29.48& 27.04& 34.37& 30.60 & 28.65\\
\cline{2-8}
&\multicolumn{1}{|c||}{\multirow{1}{*}{PPD-Sparse}} & -& - & - & -& - & -\\
\cline{2-8}
&\multicolumn{1}{|c||}{\multirow{1}{*}{OVA-Primal}} & -& - & - & -& - & -\\
\cline{2-8}
&\multicolumn{1}{|c||}{\multirow{1}{*}{LEML}} & 40.73& 37.71	 & 35.84& 40.73& 38.44 & 37.01\\
\cline{2-8}
&\multicolumn{1}{|c||}{\multirow{1}{*}{SLEEC}} & 47.85& \underline{42.21} & \underline{39.43} &\underline{47.85}&\underline{43.52}&\underline{41.37}\\
\hline
\hline
\multirow{6}{*}{\rotatebox{90}{Tree} $\begin{dcases*}\\ \\ \\ \\ \\ \end{dcases*}$}&
\multicolumn{1}{|c||}{\multirow{1}{*}{LPSR}} & 18.59&15.43  &14.07 &18.59 & 16.17& 15.13\\
\cline{2-8}
&\multicolumn{1}{|c||}{\multirow{1}{*}{PLT}} & 45.37& 38.94 & 35.88 & -& -& -\\
\cline{2-8}
&\multicolumn{1}{|c||}{\multirow{1}{*}{GBDT-S}} & 42.11& 39.06 & - & -& -& -\\
\cline{2-8}
&\multicolumn{1}{|c||}{\multirow{1}{*}{CRAFTML}} &\underline{ \textbf{47.87}}& \textbf{41.28} & 38.01 & -& -& -\\
\cline{2-8}
&\multicolumn{1}{|c||}{\multirow{1}{*}{FastXML}} & 43.07&  38.66& 36.19 &43.07 & 39.70& 37.83\\
\cline{2-8}
&\multicolumn{1}{|c||}{\multirow{1}{*}{PFastreXML}} & 41.72&  37.83&35.58& 41.72 & 38.76& 37.08 \\
\hline\hline
&\multicolumn{1}{|c||}{\multirow{1}{*}{LdSM}} & 45.26 &40.53 & \textbf{38.23} &\textbf{45.26} &\textbf{41.66} & \textbf{39.79}\\
\cline{2-8}
\end{tabular}
\begin{tabular}{c |c|| c| c| c||c| c| c|}
\multirow{2}{*}{}&\multicolumn{7}{c}{Amazon-670k}\\
\cline{2-8}
&\multicolumn{1}{|c||}{\multirow{1}{*}{Algorithm}} & P@1 & P@3 & P@5 & N@1 & N@3 & N@5 \\
\cline{2-8}
\multirow{6}{*}{\rotatebox{90}{Other} $\begin{dcases*}\\ \\ \\ \\ \\ \end{dcases*}$}&\multicolumn{1}{|c||}{\multirow{1}{*}{Parabel}} & 44.89& 39.80 & 36.00& \underline{44.89}& \underline{42.14} & 40.36\\
\cline{2-8}
&\multicolumn{1}{|c||}{\multirow{1}{*}{DiSMEC}} & 44.70& 39.70 & 36.10& 44.70& 42.10& \underline{40.50}\\
\cline{2-8}
&\multicolumn{1}{|c||}{\multirow{1}{*}{PD-Sparse}} & -& - & - & -& - & -\\
\cline{2-8}
&\multicolumn{1}{|c||}{\multirow{1}{*}{PPD-Sparse}} & \underline{45.32}& \underline{40.37} & \underline{36.92} & -& - & -\\
\cline{2-8}
&\multicolumn{1}{|c||}{\multirow{1}{*}{OVA-Primal}} & -& - & - & -& - & -\\
\cline{2-8}
&\multicolumn{1}{|c||}{\multirow{1}{*}{LEML}} & 8.13& 6.83 & 6.03 & 8.13& 7.30& 6.85\\
\cline{2-8}
&\multicolumn{1}{|c||}{\multirow{1}{*}{SLEEC}} & 35.05& 31.25& 28.56 & 34.77& 32.74 & 31.53\\
\hline
\hline
\multirow{6}{*}{\rotatebox{90}{Tree} $\begin{dcases*}\\ \\ \\ \\ \\ \end{dcases*}$}&\multicolumn{1}{|c||}{\multirow{1}{*}{LPSR}} & 28.65&24.88  &22.37 &28.65 & 26.40& 25.03\\
\cline{2-8}
&\multicolumn{1}{|c||}{\multirow{1}{*}{PLT}} & 36.65& 32.12 & 28.85 & -& -& -\\
\cline{2-8}
&\multicolumn{1}{|c||}{\multirow{1}{*}{GBDT-S}} & -& - & - & -& -& -\\
\cline{2-8}
&\multicolumn{1}{|c||}{\multirow{1}{*}{CRAFTML}} & 37.35& 33.31 & 30.62 & -& -& -\\
\cline{2-8}
&\multicolumn{1}{|c||}{\multirow{1}{*}{FastXML}} & 36.99&  33.28& 30.53 &36.99 & 35.11& 33.86	\\
\cline{2-8}
&\multicolumn{1}{|c||}{\multirow{1}{*}{PFastreXML}} & 39.46&  35.81&33.05& 39.46 & 37.78& 36.69 \\
\hline\hline
&\multicolumn{1}{|c||}{\multirow{1}{*}{LdSM}} & \textbf{42.63} & \textbf{38.09} & \textbf{34.70} &\textbf{42.63} &\textbf{40.37} & \textbf{38.89}\\
\cline{2-8}
\end{tabular}
\label{tab:accuall}
\end{table}

\begin{figure}[htp!]
  \begin{center}
Delicious\\
\hspace{-0.4in}$M = 2$ \hspace{2.2in}$M = 4$\\
\includegraphics[width=0.47\textwidth]{Figures/deli-ens-m2.png}
\includegraphics[width=0.47\textwidth]{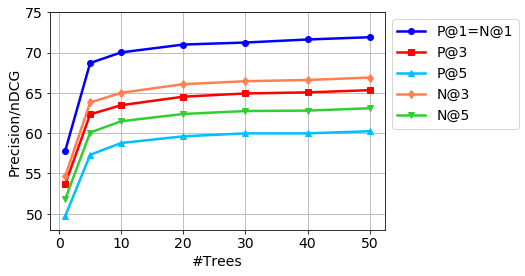}
Bibtex\\
\hspace{-0.4in}$M = 2$ \hspace{2.2in}$M = 4$\\
\includegraphics[width=0.47\textwidth]{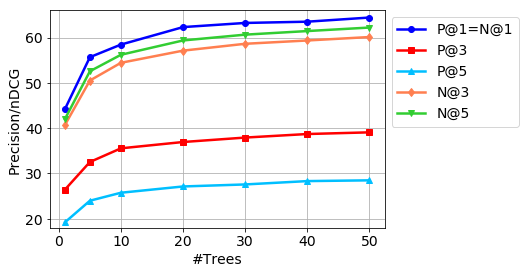}
\includegraphics[width=0.47\textwidth]{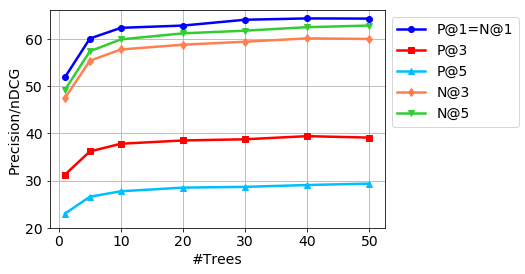}
Mediamill\\
\hspace{-0.4in}$M = 2$ \hspace{2.2in}$M = 4$\\
\includegraphics[width=0.47\textwidth]{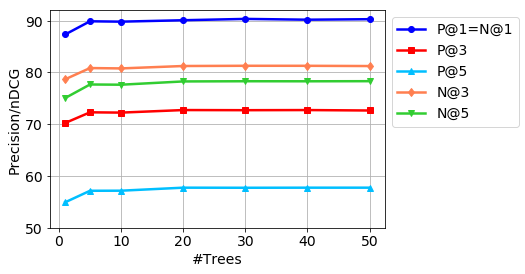}
\includegraphics[width=0.47\textwidth]{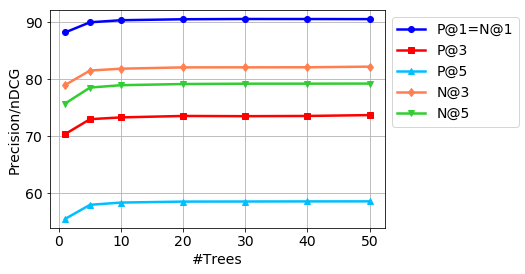}\\
Wiki10\\
$M = 4$\\
\includegraphics[width=0.47\textwidth]{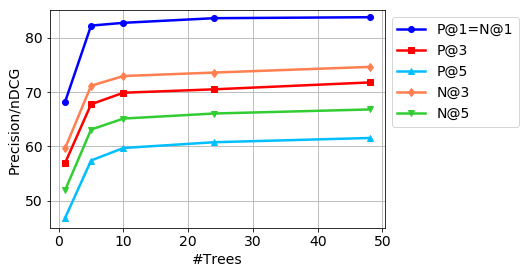}
\end{center}
\caption{The behavior of Precision/nDCG score as a function of the number of trees in the ensemble. Plots were obtained for Delicious, Bibtex, Mediamill, and Wiki10 data sets.}
\label{fig:impex2}
\end{figure}

\begin{figure}[htp!]
  \begin{center}
Delicious\\
\hspace{-0.4in}$M = 2$ \hspace{2.2in}$M = 4$\\
\includegraphics[width=0.47\textwidth]{Figures/delim2.png}
\includegraphics[width=0.47\textwidth]{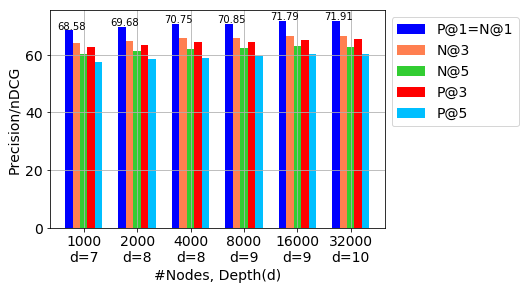}
Bibtex\\
\hspace{-0.4in}$M = 2$ \hspace{2.2in}$M = 4$\\
\includegraphics[width=0.47\textwidth]{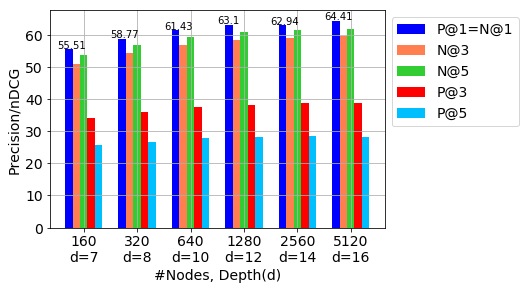}
\includegraphics[width=0.47\textwidth]{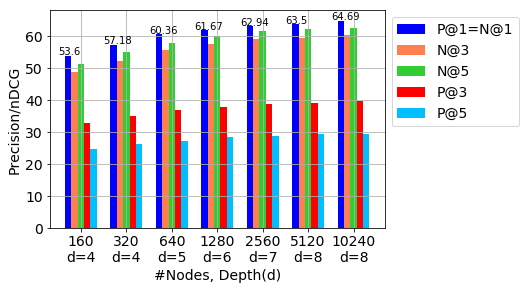}
Mediamill\\
\hspace{-0.4in}$M = 2$ \hspace{2.2in}$M = 4$\\
\includegraphics[width=0.47\textwidth]{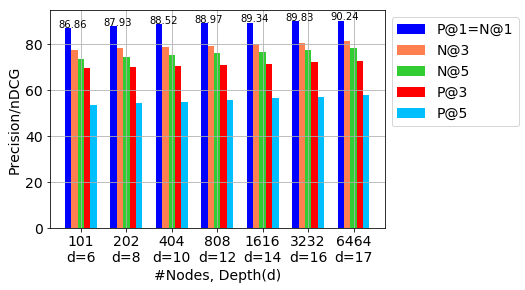}
\includegraphics[width=0.47\textwidth]{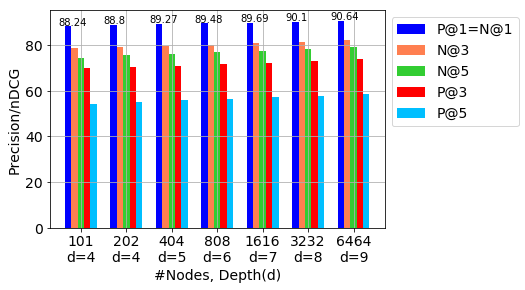}\\
\hspace{-0.5in}AmazonCat \hspace{2in}Wiki10\\
\hspace{-0.4in}$M = 2$ \hspace{2.2in}$M = 4$\\
\includegraphics[width=0.47\textwidth]{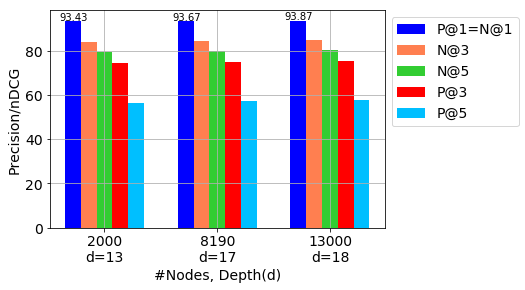}
\includegraphics[width=0.47\textwidth]{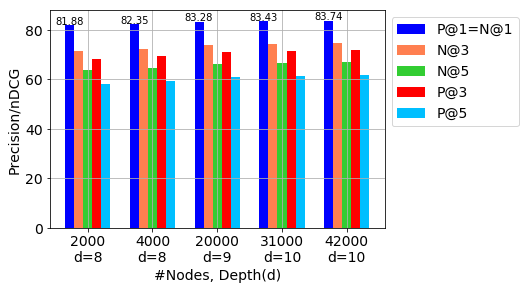}\\
\end{center}
\caption{The behavior of Precision/nDCG score as a function of the number of nodes $T_{max}$ (including leaves) and tree depth of the deepest tree in the ensemble. Plots were obtained for Delicious, Bibtex, Mediamill, AmazonCat, and Wiki10 data sets.}
\label{fig:impex3}
\end{figure}

\newpage
\begin{figure}[htp!]
  \begin{center}
Bibtex\\
\includegraphics[width=0.49\textwidth]{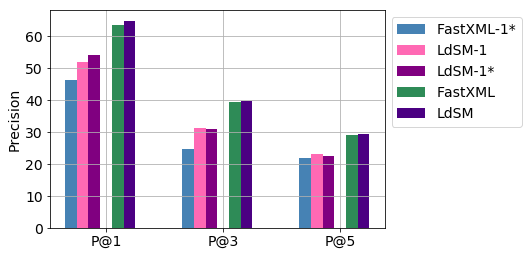}
\includegraphics[width=0.49\textwidth]{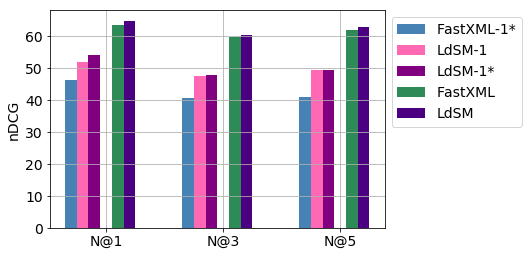}\\
Mediamill\\
\includegraphics[width=0.49\textwidth]{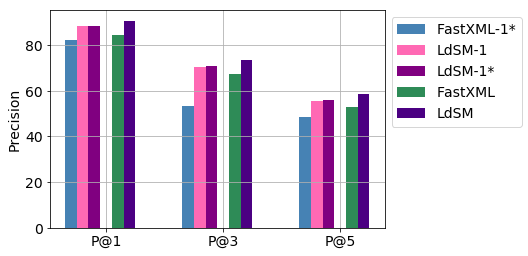}
\includegraphics[width=0.49\textwidth]{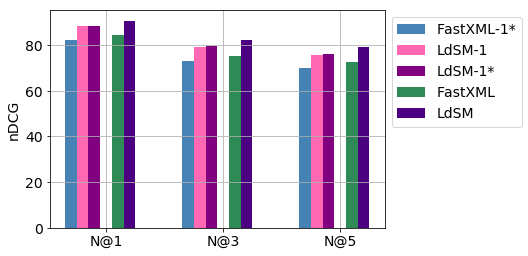}\\
Delicious\\
\includegraphics[width=0.49\textwidth]{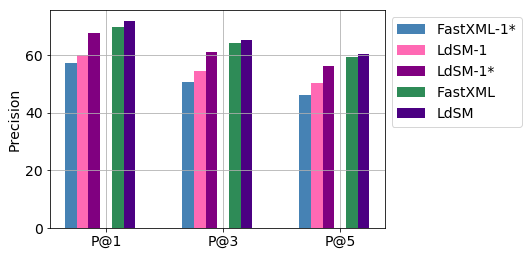}
\includegraphics[width=0.49\textwidth]{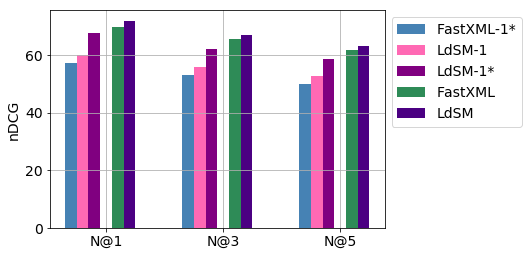}
\end{center}
\vspace{-0.15in}
\caption{The comparison of Precision (\textbf{left column}) and nDCG (\textbf{right column}) score for LdSM and FastXML working in the ensemble (\textbf{right bars}) as well as for single-tree (\textbf{left bars}) (LdSM-1: exemplary tree chosen from LdSM ensemble, LdSM-1$^{*}$, FastXML-1$^*$: optimal single trees). Plots were obtained for Bibtex, Mediamill and Delicious data sets.}
\label{fig:impex5}
\end{figure}

\end{document}